\def\eqref#1{equation~\ref{#1}}
\def\1{\bm{1}}
\DeclareMathAlphabet{\mathsfit}{\encodingdefault}{\sfdefault}{m}{sl}
\SetMathAlphabet{\mathsfit}{bold}{\encodingdefault}{\sfdefault}{bx}{n}
\newcommand{\cmark}{\ding{51}}%
\newcommand{\xmark}{\ding{55}}%
\theoremstyle{plain}
\newtheorem{theorem}{Theorem}[section]
\newtheorem{proposition}[theorem]{Proposition}
\newtheorem{lemma}[theorem]{Lemma}
\theoremstyle{definition}
\newtheorem{assumption}[theorem]{Assumption}
\theoremstyle{remark}
\newtheorem{remark}[theorem]{Remark}
\title{Goal Reaching with Eikonal-Constrained Hierarchical Quasimetric Reinforcement Learning}
\author{Vittorio Giammarino, Ahmed H. Qureshi \\
  Purdue University\\
  \texttt{\{vgiammar,ahqureshi\}@purdue.edu} 
}
\begin{document}

\maketitle
\begin{abstract}
    Goal-Conditioned Reinforcement Learning (GCRL) mitigates the difficulty of reward design by framing tasks as goal reaching rather than maximizing hand-crafted reward signals. In this setting, the optimal goal-conditioned value function naturally forms a quasimetric, motivating Quasimetric RL (QRL), which constrains value learning to quasimetric mappings and enforces local consistency through discrete, trajectory-based constraints. We propose \textit{Eikonal-Constrained Quasimetric RL (Eik-QRL)}, a continuous-time reformulation of QRL based on the Eikonal Partial Differential Equation (PDE). This PDE-based structure makes Eik-QRL trajectory-free, requiring only sampled states and goals, while improving out-of-distribution generalization. We provide theoretical guarantees for Eik-QRL and identify limitations that arise under complex dynamics. To address these challenges, we introduce \textit{Eik-Hierarchical QRL (Eik-HiQRL)}, which integrates Eik-QRL into a hierarchical decomposition. Empirically, Eik-HiQRL achieves state-of-the-art performance in offline goal-conditioned navigation and yields consistent gains over QRL in manipulation tasks, matching temporal-difference methods.
\end{abstract}

\section{Introduction}

In recent years, Reinforcement Learning (RL) has emerged as a powerful paradigm for solving complex decision-making tasks from experience, achieving success in domains such as game playing~\citep{mnih2015human, silver2017mastering}, robotics~\citep{tang2025deep}, autonomous driving~\citep{kiran2021deep, sabouni2024reinforcement, ahmad2025hierarchical}, and industrial control~\citep{mirhoseini2020chip}. Despite these achievements, the aforementioned solutions fundamentally rely on human-designed rewards, a process that is notoriously difficult and time consuming. Reward engineering introduces hand-crafted structures, additional hyperparameters, and assumptions about task solutions, which ultimately undermines RL’s promise to transcend human design and ability.

Goal-Conditioned RL (GCRL) provides an elegant way to address this challenge by framing problems in terms of reaching arbitrary goals~\citep{schaul2015universal, andrychowicz2017hindsight}. By conditioning policies on goals, GCRL offers a scalable route toward flexible, goal-directed behavior without the need for manually designed reward functions. Beyond this practical advantage, recent work has uncovered a fundamental geometric property of GCRL: the optimal goal-conditioned value function \(V^*(s,g)\) corresponds to the length of the shortest feasible path from a state \(s\) to a goal \(g\), and therefore naturally defines a \emph{quasimetric}~\citep{sontag1995abstract, liu2023metric}. 

This perspective underpins \emph{Quasimetric RL (QRL)}~\citep{wang2023optimal}, a recently proposed GCRL algorithm that restricts value functions to quasimetric mappings, reducing the hypothesis space from arbitrary functions to a structured subset aligned with goal-reaching tasks. In this framework, GCRL is recast as an optimization problem over quasimetrics, providing an alternative to the classical Temporal-Difference (TD) recursion~\citep{sutton2018reinforcement}. To accurately represent the optimal value function, \citet{wang2023optimal} show that the formulated optimization problem must preserve \emph{local distances}, i.e., the costs of transitions between successive states, while simultaneously \emph{pushing apart} distant states. Consequently, QRL is framed as the maximization of a quasimetric function where local consistency is enforced through trajectory-based constraints.

Building on these foundations, we propose a novel formulation of QRL that enforces local constraints directly in continuous time rather than through discrete transitions. We argue that for many interesting use cases the discrete-time perspective offers no fundamental benefit beyond practical convenience, as the continuous-time formulation naturally leads to Partial Differential Equations (PDEs). Historically, the difficulties of solving PDEs has limited their role in RL, but recent advances in Physics-Informed Neural Networks (PINNs)~\citep{raissi2019physics} have begun to overcome this barrier. By exploiting automatic differentiation, PINNs embed PDE constraints directly into the training objective, enabling neural networks to approximate PDE solutions.

This progress opens the door to a continuous-time perspective on QRL that brings new structural advantages. Most notably, our PDE formulation becomes \emph{trajectory-free}: rather than relying on short rollouts, it only requires sampling random states and goals from the feasible space. Furthermore, beyond enforcing local consistency, the PDE acts as an implicit regularizer on the value function~\citep{giammarino2025physics}, improving both learning stability and out-of-distribution estimation accuracy.

\vspace{-0.2cm}
\paragraph{Contributions} We summarize the contributions of our work as follows:
\begin{enumerate}[leftmargin=*]
    \item Our main contribution is the introduction of a novel PDE-constrained formulation of QRL that derives continuous local constraints from the Eikonal PDE~\citep{noack2017acoustic}, leading to Eik-QRL. We provide theoretical guarantees for Eik-QRL and highlight inherent limitations in complex dynamical settings (cf. Section~\ref{sec:Eik-QRL}). 
    \item Building on Eik-QRL, we propose Eik-Hierarchical QRL (Eik-HiQRL), a hierarchical algorithm that mitigates these limitations while preserving the strengths of Eik-QRL (cf. Section~\ref{sec:Eik-HiQRL}). Eik-HiQRL achieves state-of-the-art (SOTA) performance on the OGbench navigation suite~\citep{park2024ogbench} and delivers consistent improvements over QRL in manipulation tasks, while also matching the performance of TD-based algorithms. 
    \item Other minor contributions include an additional theoretical analysis supporting the hierarchical design of Eik-HiQRL (cf. Appendix~\ref{sec_app:didactic example}) and an experimental evaluation protocol that accounts not only for goal-reaching success but also for \emph{collision avoidance}, an aspect often overlooked in the RL literature.
\end{enumerate}

\vspace{-0.25cm}
\section{Related Work}
\label{sec:related_work}

\vspace{-0.2cm}
\paragraph{Goal-Conditioned RL} 
GCRL was introduced to address two longstanding challenges in classical RL~\citep{sutton2018reinforcement}. The first is the difficulty of designing reward functions, a time consuming process, prone to unintended side effects. Reward shaping~\citep{ng1999policy} has been proposed to ease this burden, but it remains subject to reward hacking, where agents exploit loopholes to maximize rewards without achieving the intended behavior~\citep{amodei2016concrete, everitt2017reinforcement, everitt2021reward, pan2022effects, di2022goal}. Sparse rewards offer an alternative to hand-crafted shaping, yet they introduce complementary challenges due to the weakness of the learning signal. To this end, GCRL provides a natural way to cope with this difficulty by framing tasks in terms of goal reaching, which enables learning directly from sparse success indicators~\citep{kaelbling1993learning, schaul2015universal}, and has motivated specialized algorithms to improve efficiency under sparse feedback~\citep{andrychowicz2017hindsight}. A second limitation of classical RL is its focus on single-task settings, which restricts generalization across objectives. By conditioning policies on goals, GCRL extends naturally to multi-goal learning and supports more general-purpose behaviors. Recent advances in this direction include Contrastive RL~\citep{eysenbach2022contrastive}, state-occupancy matching~\citep{ma2022offline}, and QRL~\citep{wang2023optimal}.

\vspace{-0.2cm}
\paragraph{Offline GCRL}
Offline RL trains policies from pre-collected datasets without further interaction with the environment~\citep{levine2020offline}. This paradigm is appealing in domains where online data collection is expensive, risky, or impractical, but it suffers from distributional shift and extrapolation error when policies select actions outside the dataset support. Extending goal conditioning to this setting has given rise to \emph{Offline GCRL}, which inherits the challenges of both paradigms: agents must generalize to unseen state-goal pairs while learning entirely from fixed data and without corrective feedback. These compounded difficulties have motivated a range of specialized methods~\citep{chebotar2021actionable, yang2022rethinking, yang2023essential, mezghani2023learning, sikchi2023smore, park2024hiql, park2024ogbench} aimed at improving stability and generalization. Our work contributes to this line by introducing PDE-based constraints that improve generalization to out-of-distribution state-goal pairs, with particular benefits in large environments and in scenarios that require trajectory stitching.

\vspace{-0.2cm}
\paragraph{Quasimetrics in GCRL}
A key advance in GCRL is recognizing that the optimal goal-conditioned value function corresponds to the shortest feasible path between a state and a goal, and therefore naturally defines a quasimetric~\citep{sontag1995abstract, liu2023metric}. As mentioned above, QRL~\citep{wang2023optimal} builds on this observation by restricting value functions to quasimetric mappings, aligning the hypothesis space with shortest-path goal reaching. This perspective connects to earlier work on quasimetrics in representation learning~\citep{sontag1995abstract, pitisinductive, durugkar2021adversarial} and has inspired the development of differentiable architectures~\citep{wang2022improved, liu2023metric} that provide universal approximation guarantees for quasimetric parameterized functions. Our work establishes PDE-constrained QRL as a new direction, reformulating value learning in continuous time and revealing structural properties that extend beyond existing formulations.

\vspace{-0.25cm}
\paragraph{Continuous-time Optimal Control and PINNs}
The continuous-time view of optimal control is traditionally formulated through PDEs, most notably the Hamilton-Jacobi-Bellman (HJB) equation~\citep{munos1997convergent, munos1997reinforcement}. Although these formulations capture both local and global properties of value functions, they are notoriously difficult to solve in practice, which has long limited their impact in RL. More recently, PINNs~\citep{rudy2017data, raissi2019physics, bansal2021deepreach} have leveraged automatic differentiation to embed PDE constraints directly into neural training, enabling progress in areas such as system identification and scientific computing. This paradigm has also been applied to RL: \citet{shilova2023revisiting} used PINNs to approximate HJB solutions in continuous-time RL, though their results were limited to simple dynamics and low-dimensional state spaces, while other works have employed PDE-inspired regularizers to improve value estimation accuracy in Offline RL~\citep{lienenhancing} and Offline GCRL~\citep{giammarino2025physics}. In contrast, our approach incorporates PDE constraints into the QRL framework, yielding a continuous-time formulation that offers structural benefits not explored in earlier work.

\vspace{-0.25cm}
\paragraph{Hierarchy in RL}
Temporal abstraction is a long-standing idea in RL, where policies are decomposed into high- and low-level components to improve exploration and credit assignment~\citep{sutton1999between, vezhnevets2017feudal, nachum2018data, giammarino2021online}. Recent work has shown that this structure is particularly effective in GCRL, where subgoal decomposition mitigates the challenges of long horizons~\citep{park2024hiql}. In our framework, hierarchy is essential, as it both mitigates the practical limitations of Eik-QRL and amplifies its strengths, thereby further improving value learning and overall performance in long-horizon tasks.

\vspace{-0.2cm}
\section{Preliminaries}
\label{sec:Preliminaries}

\vspace{-0.2cm}
\paragraph{Goal-Conditioned RL} 
We consider a finite-horizon discounted Markov Decision Process (MDP) defined by $(\mathcal{S}, \mathcal{G}, \mathcal{A}, \mathcal{T}, \mathcal{R}, \mathcal{P}_g, \rho_0, \gamma)$, where $\mathcal{S}$ is the state space, $\mathcal{G}$ the goal space, $\mathcal{A}$ the action space, $\mathcal{T}: \mathcal{S} \times \mathcal{A} \to \mathcal{P}(\mathcal{S})$ the transition map with $\mathcal{P}(\mathcal{S})$ the set of distributions over $\mathcal{S}$, and $\mathcal{R}: \mathcal{S} \times \mathcal{G} \to \mathbb{R}$ a goal-conditioned reward function such that $\mathcal{R}(s,g) = -1$ when $s \neq g$ and $\mathcal{R}(s,g)=0$ otherwise. Furthermore, $\mathcal{P}_g \in \mathcal{P}(\mathcal{G})$ is the goal distribution, $\rho_0 \in \mathcal{P}(\mathcal{S})$ is the initial state distribution, and $\gamma \in (0,1]$ is the discount factor. Unless specified, we assume $\mathcal{G} \equiv \mathcal{S}$ and $g \sim \mathcal{P}_g$ at the start of each episode. The agent aims to maximize $J(\pi) = \mathbb{E}_{\tau_{\pi}(g)}[\sum_{t=0}^{T} \gamma^t \mathcal{R}(s_t, g)]$, where $\tau_{\pi}(g) = (g, s_0, a_0, s_1, a_1, \dots, s_T)$ are trajectories generated by the decision policy $\pi: \mathcal{S} \times \mathcal{G} \to \mathcal{P}(\mathcal{A})$. The value function induced by $\pi$ is defined as $V^{\pi}(s, g) = \mathbb{E}_{\tau_{\pi}(g)}[\sum_{t=0}^{T} \gamma^t \mathcal{R}(s_t, g)\mid S_0 = s, G = g]$. In the offline setting, the agent must optimize $J(\pi)$ using only a fixed dataset $\mathcal{D}$ of trajectories $\tau = (s_0, a_0, s_1, s_2, \dots, s_T)$. Finally, we denote parameterized functions as $\pi_{\bm{\theta}}$ where $\bm{\theta} \in \varTheta \subset \mathbb{R}^k$.

\vspace{-0.2cm}
\paragraph{Quasimetric RL} Given $\mathcal{X} \subset \mathbb{R}^d$ a compact subset with non-empty interior, a quasimetric function is defined as $d:\mathcal{X} \times\mathcal{X} \to \mathbb{R}_{\geq 0}$ such that $\forall x_1, x_2, x_3$ the triangle inequality holds:
\begin{align*}
    d(x_1,x_3) \leq d(x_1,x_2) + d(x_2,x_3); 
\end{align*}
and $\forall x$, $d(x,x)=0$. We denote by $\mathcal{Q}\mathrm{met}(\mathcal{X})$ the space of quasimetrics over $\mathcal{X}\times\mathcal{X}$. QRL is formulated on the observation that, in markovian environments, the optimal goal-conditioned value function, $V^*(s,g) = -d^*(s,g)$, belongs to the space of quasimetrics. Hence, $-V^* \in \mathcal{Q}\mathrm{met}(\mathcal{S})$. As a consequence, GCRL can be re-framed as an optimization problem over $\mathcal{Q}\mathrm{met}(\mathcal{S})$. \citet{wang2023optimal} argue that, in order to capture both local and global relationships between states, this optimization should preserve \emph{local distances} (i.e., transition costs) between nearby states while \emph{maximally spreading out} distant ones. This motivates the following formulation:
\begin{align}
\begin{split}
    \max_{\bm{\theta}} \underbrace{\mathbb{E}_{s \sim \mathcal{P}_{\text{state}}, g \sim \mathcal{P}_{\text{goal}}}[\zeta(d_{\bm{\theta}}(s, g))]}_{\text{Global Relationships (GRs)}} \quad \text{s.t.} \ \ \underbrace{ \mathbb{E}_{(s, a, s', \text{cost})\sim\mathcal{D}} [\max(d_{\bm{\theta}}(s, s') - \text{cost}, 0)^2]}_{\text{Local Relationships (LRs)}} \leq\epsilon^2,
\end{split}
\label{eq:QRL_lagrangian}
\end{align}

\begin{wrapfigure}{r}{0.50\textwidth}
    \centering
    \begin{subfigure}[t]{0.5\textwidth}
        \includegraphics[width=\linewidth]{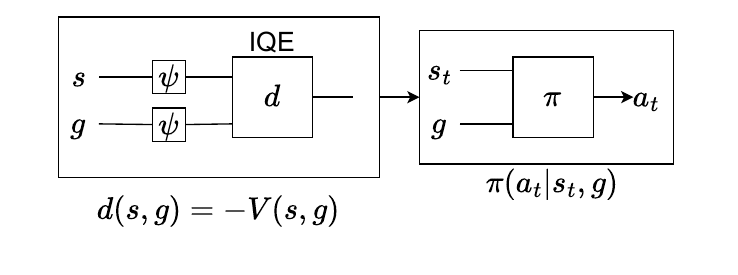}
        \label{fig:1d-env}
    \end{subfigure}
    \vspace{-12pt}
    \caption{Diagrams for QRL~\citep{wang2020critic}. QRL consists of a quasimetric value function parameterized by an Interval Quasimetric Embedding (IQE) model~\citep{wang2022improved} and an actor $\pi$.}
    \label{fig:QRL}
    \vspace{-15pt}
\end{wrapfigure}

where: $\text{cost} = -\mathcal{R}(s,s')$, $\mathcal{P}_{\textrm{state}}$ and $\mathcal{P}_{\textrm{goal}}$ denote respectively the marginal state and goal distributions induced by the trajectories dataset, $\zeta$ is a monotonically increasing transformation that dampens large distances $d_{\bm{\theta}}(s, g)$, and $d_{\bm{\theta}}$ is a parameterized quasimetric model, i.e., $d_{\bm{\theta}} \in \mathcal{Q}\mathrm{met}(\mathcal{S})$. A diagram for QRL is provided in Fig.~\ref{fig:QRL}. The optimization problem in (\ref{eq:QRL_lagrangian}) guarantees optimal value recovery under the assumption that quasimetric models are universal function approximators, i.e., capable of representing any function in $\mathcal{Q}\mathrm{met}(\mathcal{S})$ up to an $\epsilon>0$. 

\vspace{-0.2cm}
\paragraph{Continuous-time Optimal Control} 
When in continuous-time, we consider a dynamical system defined by $\dot{s}(t) = f(s(t), a(t))$, with $t \geq 0$ and where $s(t)$ denotes the state of the system at time $t$, $\dot{s}(t)$ its time derivative and $a(t)$ the control action. Given the initial conditions $s(0) = s_0$, $a(0) = a_0$ and a goal $s(T) = g$, the undiscounted optimal control problem seeks to minimize the cumulative cost $J = \int_0^T c(s(t), a(t))dt$ where $c(s(t), a(t))$ is the instantaneous cost function. The optimal value function is defined as $d^*(s, g) = \inf_{a(\cdot)} \int_0^T c(s(t), a(t))dt$, and, for sufficiently small $\Delta t$, it satisfies the principle of optimality
\begin{equation}
d^*(s, g) = \inf_{a \in \mathcal{A}}[c(s, a)\Delta t + d^*(s(t+\Delta t), g)]. 
\label{eq:principle_of_optimality}
\end{equation}
Here we intentionally reuse the notation $d^*(s,g)$, as in the quasimetric setting above, and will later establish a direct connection between the two. Under standard regularity conditions (Assumptions~\ref{assumption:compact_subset}, \ref{assumption:dynamics_Lip}, \ref{assumption:cost_reg} and \ref{assumption:V_reg}), $d^*(s(t+\Delta t), g)$ in (\ref{eq:principle_of_optimality}) can be approximated by its first-order Taylor expansion~\citep{giammarino2025physics}, yielding
\begin{align}
    \begin{split}
        \inf_{a \in \mathcal{A}} [c(s, a) + \nabla_s d^*(s, g)^{\intercal}f(s,a)] = 0.
        \label{eq:HJB}
    \end{split}
\end{align}
Eq.~(\ref{eq:HJB}) is the HJB PDE for the undiscounted problem defined above and highlights the relationship between the dynamics, cost, and value function in continuous time.

\vspace{-0.1cm}
\section{Eikonal-Constrained Quasimetric RL}
\label{sec:Eik-QRL}

In this section, we introduce and discuss our Eik-QRL algorithm. We begin by stating the assumptions underlying our formulation, then derive continuous-time variants of quasimetric value learning, including Eik-QRL as a novel perspective on QRL. We conclude with a theoretical analysis that highlights Eik-QRL structural properties and limitations motivating our hierarchical extension.

\vspace{-0.3cm}
\paragraph{Assumptions}
We begin by stating the assumptions underlying our formulation.

\begin{assumption}[Compact state and action spaces]
\label{assumption:compact_subset}
Let the state space $\mathcal S$ and the action space $\mathcal A$ be compact, and let $\mathcal K \subseteq \mathcal S$ denote the feasible (obstacle-free) subset. 
\end{assumption}

\begin{assumption}[Lipschitz dynamics]
\label{assumption:dynamics_Lip}
The transition map $\mathcal T:\mathcal S \times \mathcal A \to \mathcal S$ is deterministic and Lipschitz continuous in $s$. That is, there exists a constant $L_{\mathcal T} > 0$ such that for every $s, s' \in \mathcal K$ and $a\in\mathcal A$: \(\|\mathcal T(s,a)-\mathcal T(s',a)\|\le L_{\mathcal T}\,\|s-s'\|\).
Equivalently, in the continuous-time limit $\Delta t\to 0$, we have
\(\mathcal T(s,a) = s + f(s,a)\,\Delta t + o(\Delta t)\), where $f(\cdot,a)$ is Lipschitz continuous in $s$ (uniformly in $a$).
\end{assumption}

\begin{assumption}[Cost regularity]
\label{assumption:cost_reg}
We consider a continuous, bounded, nonnegative running cost \(c : \mathcal{S} \times \mathcal{G} \to \mathbb{R}_{\geq 0}\), where \(c(s,g)\) denotes the instantaneous cost incurred when the agent visits state \(s \in \mathcal{S}\) while pursuing goal \(g \in \mathcal{G}\). Specifically, we assume a unit running cost away from the goal, that is, \(c(s,g) = 0\) if \(s = g\) and \(c(s,g) = 1\) if \(s \neq g\), for all \(g \in \mathcal{G}\). The value function \(d(s,g)\) is obtained by accumulating this cost along the trajectory until the goal is reached.
\end{assumption}

\begin{assumption}[Value function regularity]
\label{assumption:V_reg}
For each fixed goal $g\in\mathcal K$, the optimal value function $d^*(\cdot,g)$ exists, is finite on $\mathcal K$, and is locally Lipschitz.
\end{assumption}

These assumptions are standard optimal-control regularity conditions that ensure a sound HJB formulation~\citep{munos1997reinforcement,munos1997convergent,kim2021hamilton}. Assumption~\ref{assumption:compact_subset} guarantees that the minimization over actions in Eq.~(\ref{eq:HJB}) is well posed, i.e., the $\arg\min$ in the HJB operator exists. Assumption~\ref{assumption:dynamics_Lip} ensures that the system dynamics admit unique and well-defined trajectories since Lipschitz continuity of $f(s,a)$ guarantees existence and uniqueness of ODE solutions~\citep{coddington1955theory}. This in turn makes the value function and the associated HJB PDE well posed, and provides stability for local expansions and numerical approximations such as $\mathcal T(s,a)=s+f(s,a)\Delta t+o(\Delta t)$. Determinism is imposed mainly for analytical convenience and is consistent with prior work~\citep{kumar2019stabilizing,wang2023optimal}; importantly, our method still applies in stochastic settings, as illustrated by our \texttt{teleport} experiments in Section~\ref{sec:experiments}. Finally, Assumption~\ref{assumption:cost_reg} ensures finite path costs, and Assumption~\ref{assumption:V_reg} permits point-wise Taylor expansions of value functions.

\vspace{-0.3cm}
\paragraph{Formulation}
Let the dynamics be $\mathcal{T}(s,a) \equiv s' = s + f(s,a)\,\Delta t + o(\Delta t)$ and the instantaneous cost be $c(s,g)$, such that the one-step cost is equal to $c(s,g)\,\Delta t + o(\Delta t)$. Note that, following GCRL, the cost does not depend on actions. QRL in Eq.~(\ref{eq:QRL_lagrangian}) imposes local consistency through the constraint
\begin{equation}
    \mathbb{E}_{(s,a,s')\sim\mathcal{D}} \Big[\max\left(d_{\bm\theta}(s,s') - 1,\,0\right)^2\Big] \;\le\; \epsilon^2,
    \label{eq:QRL_LR}
\end{equation}
where, compared to Eq.~(\ref{eq:QRL_lagrangian}), we set the cost equal to $1$ aligned with the GCRL framework. This constraint penalizes violations of the inequality \(d(s,s') \le c(s,g)\,\Delta t + o(\Delta t)\) along observed transitions \((s,s')\). We now establish a connection between Eq.~(\ref{eq:QRL_LR}) and the HJB PDE in (\ref{eq:HJB}).

Consider the triangle inequality $d(s,g) \;\le\; d(s,s') + d(s',g)$, by replacing \(d(s,s') \le c(s,g)\,\Delta t + o(\Delta t)\), we get
\begin{equation}
    d(s,g) \;\le\; c(s,g)\,\Delta t \;+\; d(s',g) \;+\; o(\Delta t).
    \label{eq:d(s,g)_triangle_ineq}
\end{equation}
Assuming \(d(\cdot,g)\) is differentiable at \(s\), the first-order Taylor expansion of $d(s',g)$ becomes $d(s',g) \;=\; d(s,g) \;+\; \nabla_s d(s,g)^{\!\top} f(s,a)\,\Delta t \;+\; o(\Delta t)$. Replacing $d(s',g)$ in Eq.~(\ref{eq:d(s,g)_triangle_ineq}) yields $d(s,g) \;\le\; c(s, g)\,\Delta t \;+\; d(s,g) \;+\; \nabla_s d(s,g)^{\!\top} f(s,a)\,\Delta t \;+\; o(\Delta t)$. By subtracting \(d(s,g)\) from both sides, dividing by $\Delta t$ and taking the limit $\Delta t \to 0$, we get
\begin{equation}
    0 \;\le\; c(s, g) \;+\; \nabla_s d(s,g)^{\!\top} f(s,a),
    \label{eq:HJB_residual}
\end{equation}
which is the static HJB \emph{inequality} and holds true for every $a \in \mathcal{A}$ and $d(\cdot)$. Under the assumptions stated above, by minimizing over \(a\) and assuming $d(\cdot) = d^*(\cdot)$, Eq.~(\ref{eq:HJB_residual}) tightens to the HJB PDE
\begin{equation*}
    0 \;=\; c(s, g) + \, \nabla_s d^*(s,g)^{\!\top} f(s,a^*),
\end{equation*}
where $a^*=\arg\min_a \nabla_s d^*(s,g)^{\!\top} f(s,a)$. 

In the hypothetical case that \(a^*\) were available, it becomes natural to view the HJB residual in (\ref{eq:HJB_residual}) as a surrogate for the discrete local constraint in (\ref{eq:QRL_LR}). In practice, if the goal \(g\) is chosen as a state visited along the same trajectory as the transition \((s,s')\)~\citep{andrychowicz2017hindsight}, one can reasonably approximate \(f(s,a^*) \approx s' - s\), which leads to the following HJB-QRL formulation:
\begin{align}
\begin{split}
    \max_{\bm{\theta}} \underbrace{\mathbb{E}_{s,g}\big[\zeta\!\big(d_{\bm{\theta}}(s,g)\big)\big]}_{\text{Global Relationships (GRs)}}
    \quad \text{s.t.}\quad
    \underbrace{\mathbb{E}_{(s,s')\sim\mathcal{D},\, g\sim\mathcal{P}_{\text{goal}}}
    \Big[\big(\nabla_s d_{\bm{\theta}}(s,g)^{\!\top}(s'-s) + 1\big)^2\Big]}_{\text{HJB-Local Relationships (HJB-LRs)}}
    \;\le\;\epsilon^2.
\end{split}
\label{eq:HJB_QRL}
\end{align}

This formulation offers a PINN-style perspective on goal-conditioned value learning, which extends beyond purely additive regularizers explored in prior work~\citep{lienenhancing,giammarino2025physics}. However, the problem in (\ref{eq:HJB_QRL}) becomes difficult to optimize in practice, as the inner product \(\nabla_s d_{\bm{\theta}}(s,g)^{\!\top}(s'-s)\) is often ill-conditioned in high-dimensional state spaces, a challenge also noted in many PINN-based approaches to HJB~\citep{shilova2023revisiting,bansal2021deepreach}. Furthermore, it retains dependence on transition tuples \((s,s')\), reflecting QRL’s reliance on trajectories.

As a result, in order to tackle the aforementioned practical challenges, we propose to refine HJB-QRL by imposing explicit assumptions on the system dynamics $f(s,a)$. Specifically, we simplify the HJB PDE to an Eikonal PDE~\citep{noack2017acoustic} by leveraging unit-speed, isotropic dynamics \(f(s,a)=a\) with \(\|a\|\le 1\). In this setting, it follows that \(a^* = -\nabla_s d(s,g)/\|\nabla_s d(s,g)\|\), reducing the local constraint to a unit-slope condition. This yields the following Eikonal-constrained QRL (Eik-QRL) formulation:
\begin{align}
\begin{split}
    \max_{\bm{\theta}} \underbrace{\mathbb{E}_{s,g}\!\left[\zeta\!\big(d_{\bm{\theta}}(s,g)\big)\right]}_{\text{Global Relationships (GRs)}}
    \quad \text{s.t.}\quad
    \underbrace{\mathbb{E}_{s\sim\mathcal{P}_{\text{state}},\, g\sim\mathcal{P}_{\text{goal}}}
    \Big[\big(\,\|\nabla_s d_{\bm{\theta}}(s,g)\| - 1\big)^2\Big]}_{\text{Eikonal-Local Relationships (Eik-LRs)}} \;\le\;\epsilon^2.
\end{split}
\label{eq:PI-QRL_lagrangian_v1}
\end{align}

\begin{remark}[Benefits]
\label{rem:benefits}
    Relative to QRL and HJB-QRL, Eik-QRL becomes \emph{trajectory-free}: it requires only i.i.d. samples of states and goals \((s,g)\) rather than full rollouts or transition pairs \((s,s')\). This property is particularly appealing in applications such as $(i)$ navigation with maps, where \(s\) and \(g\) can be drawn uniformly from free poses of an occupancy map; $(ii)$ robotic manipulation, where target object or end-effector poses are sampled within a collision-free workspace; $(iii)$ autonomous driving and motion planning, where waypoints are sampled along lane centerlines or roadgraph nodes; and $(iv)$ industrial or process control, where operating setpoints are sampled within a certified envelope. In addition, Eik-QRL \emph{improves state-space coverage}: for \(\mathcal S\subset\mathbb R^k\), each sampled pair \((s,g)\) contributes a full gradient vector \(\nabla_s d_{\bm{\theta}}(s,g)\in\mathbb R^k\), thereby coupling all coordinate directions at \(s\). Aggregating such gradients across diverse pairs has a regularizing effect, promoting global consistency, whereas QRL’s constraints act only along observed transitions in \(\mathcal D\). Our experiments confirm these advantages, particularly in large environments and in stitched-data regimes.
\end{remark}

\begin{remark}[Limitations]
    A natural concern arises from assuming isotropic dynamics \(f(s,a)=a\) with \(\|a\|\le 1\), which may appear restrictive. We emphasize that this choice primarily simplifies the problem numerically, and empirical evidence show that Eik-QRL outperforms HJB-QRL and QRL even under complex dynamics. We acknowledge, however, that this assumption commits us to a class of solutions that might not work optimally for all MDPs and, consequently, envision future work which extend beyond isotropic dynamics, while preserving similar numerical advantages. More broadly, we view both Eik-QRL and HJB-QRL as providing a hybrid regime in which simplified dynamical models impose explicit value-learning constraints that complement model-free RL. Unlike model-based RL~\citep{hafner2019learning}, where models are mainly used to generate imaginary rollouts, this perspective offers a potential bridge between model-free and model-based approaches.  
\end{remark}

\vspace{-0.3cm}
\paragraph{Theoretical Analysis}
We now turn to guarantees for Eik\text{-}QRL. Formal statements and proofs for these results are deferred to Appendix~\ref{sec_app:theo_results_Eik_QRL}. Optimal value recovery guarantees follow from the existence of a quasimetric approximator \(d_{\boldsymbol{\theta}^*}\in\mathcal{Q}\mathrm{met}(\mathcal{S})\) that satisfies \(-V^*(s,g)\;\le\; d_{\boldsymbol{\theta}^*}(s,g)\;\le\; -(1+\epsilon)\,V^*(s,g)\) for some small \(\epsilon>0\). Given such an approximator, a standard high-probability recovery bound can be derived. However, unlike classical QRL, establishing existence in Eik-QRL is more delicate due to the gradient constraint enforced through the Eik-LRs term in~(\ref{eq:PI-QRL_lagrangian_v1}). To drive \(\epsilon\to 0\) while respecting this constraint, the target \(-V^*(s,g)\) must itself obey a matching gradient regularity~\citep{anil2019sorting}. The next lemma formalizes this regularity, ensuring that a compatible \(d_{\boldsymbol{\theta}^*}\) exists and closing the argument for optimal value recovery.

\begin{lemma}[$1$-Lipschitz optimal value]
\label{lem:unit_speed_local}
Let $K\subset \mathcal K$ be compact and convex, fix any $g\in \mathcal K$ for which $d^*(\cdot,g)$ is finite on $K$, and suppose the dynamics on $K$ are the unit-speed integrator
$\dot s(t)=a(t)$ with $\|a(t)\|\le 1$ (equivalently, $\mathcal T(s,a)=s+a\,\Delta t+o(\Delta t)$ on $K$).
Assume the running cost is constant, $c(s,g)\equiv 1$ on $\mathcal K\setminus\{g\}$ and $c(g,g)=0$.
Then, for all $s,s'\in K$, $d^*(\cdot,g)$ is $1$-Lipschitz on $K$.

\end{lemma}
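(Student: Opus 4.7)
The plan is to exhibit, for any two points $s,s'\in K$, a feasible control policy starting at $s'$ that first travels to $s$ and then follows the optimal trajectory from $s$ to $g$. Bounding the resulting cost and using optimality will yield $d^*(s',g)-d^*(s,g)\le \|s-s'\|$, and a symmetric argument gives the reverse inequality.

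First, I would fix arbitrary $s,s'\in K$ and use the convexity of $K$ to conclude that the straight line segment $\sigma(t)=s'+(s-s')\,t/\|s-s'\|$ for $t\in[0,\|s-s'\|]$ lies entirely in $K\subseteq\mathcal K$. Setting $a(t)=(s-s')/\|s-s'\|$ on this interval gives an admissible control since $\|a(t)\|=1\le 1$, and under the unit-speed integrator $\dot s=a$ we have $s(0)=s'$ and $s(\|s-s'\|)=s$. Because $c\equiv 1$ on $\mathcal K\setminus\{g\}$, the cost accumulated along this leg is at most $\|s-s'\|$ (with equality unless the segment passes through $g$, in which case the cost is strictly smaller, which only strengthens the inequality).

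Next I would concatenate this leg with a near-optimal trajectory from $s$ to $g$: for every $\eta>0$ there is an admissible control achieving cost at most $d^*(s,g)+\eta$ by definition of the infimum. The concatenation is admissible and reaches $g$ starting from $s'$ with total accumulated cost at most $\|s-s'\|+d^*(s,g)+\eta$. Taking the infimum over controls on the left and letting $\eta\downarrow 0$ yields
\begin{equation*}
d^*(s',g)\;\le\;\|s-s'\|+d^*(s,g).
\end{equation*}
Swapping the roles of $s$ and $s'$ (again using convexity of $K$ so the reverse segment is feasible) gives $d^*(s,g)\le\|s-s'\|+d^*(s',g)$. Combining the two inequalities yields $|d^*(s,g)-d^*(s',g)|\le\|s-s'\|$, i.e., $d^*(\cdot,g)$ is $1$-Lipschitz on $K$.

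The only subtle point is making sure the concatenated control is genuinely admissible and that no hidden issue arises when the straight segment happens to hit $g$; this is handled by the monotonicity of cost accumulation (the cost of the truncated segment is still at most $\|s-s'\|$) and by the finiteness assumption on $d^*(\cdot,g)$ over $K$, which guarantees that near-optimal trajectories from $s$ to $g$ exist in $\mathcal K$. No delicate PDE or viscosity-solution machinery is required: the argument is a direct consequence of the triangle-inequality-style construction enabled by unit-speed dynamics and convexity of $K$.
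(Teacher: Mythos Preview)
Your proof is correct and follows essentially the same approach as the paper: both arguments use convexity of $K$ to show that the straight-line unit-speed trajectory between $s$ and $s'$ is feasible with cost at most $\|s-s'\|$, and then combine this with optimality to obtain the Lipschitz bound. The only cosmetic difference is that the paper invokes the triangle inequality for $d^*$ directly (as $d^*$ is a quasimetric) to write $|d^*(s,g)-d^*(s',g)|\le\max\{d^*(s,s'),d^*(s',s)\}$, whereas you make the concatenation with a near-optimal trajectory from $s$ to $g$ explicit; your version is slightly more self-contained but amounts to re-deriving that triangle inequality on the fly.
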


By Lemma~\ref{lem:unit_speed_local}, we have that \(d^*(\cdot,g)=-V^*(\cdot,g)\) is \(1\)-Lipschitz on \(K\). This implies the Eikonal condition \(\|\nabla_s d^*(s,g)\|=1\) on \(K\setminus\{g\}\), so \(d^*\) is \emph{feasible} for the Eik-LRs constraint in~(\ref{eq:PI-QRL_lagrangian_v1}). This alignment between the true geometry and the Eik-LRs allows a universal quasimetric approximator optimized by (\ref{eq:PI-QRL_lagrangian_v1}) to recover \(d^*\). We leverage this fact to state our approximation guarantee.

\begin{theorem}[Approximate value recovery]
\label{thm:approximation_theorem_grad}
Let the conditions of Lemma~\ref{lem:unit_speed_local} hold, and \( \{d_\theta\}_{\theta}\) be the set of universal approximators of \( \mathcal{Q}\mathrm{met}(\mathcal{K})\) in the \( L_\infty \) norm. Suppose that \( d_{\theta^*} \in \{d_\theta\}_{\theta} \) is a solution of Eq.~(\ref{eq:PI-QRL_lagrangian_v1}) such that \( d_{\theta^*} \in \mathcal{C}_{\mathrm{grad}}^{(\epsilon)} := \left\{ d_\theta : \|\nabla_s d_\theta(s,g)\| \leq 1 + \epsilon \right\} \) for small $\epsilon>0$. Then, for any \( \eta > 0 \) and \( s \sim \mathcal{P}_{\text{state}}, g \sim \mathcal{P}_{\text{goal}} \), it holds with probability at least \(1-\mathcal{O}(\frac{\epsilon}{\eta}(-\mathbb{E}[V^*]))\) that $\left| d_{\theta^*}(s, g) + (1 + \epsilon) V^*(s, g) \right| \in [-\eta,0]$.
\end{theorem}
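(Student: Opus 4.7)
The plan is to exploit the $1$-Lipschitz structure of $d^*(\cdot,g)$ guaranteed by Lemma~\ref{lem:unit_speed_local} together with the gradient constraint defining $\mathcal{C}_{\mathrm{grad}}^{(\epsilon)}$. First I would establish a deterministic pointwise upper bound on $d_{\theta^*}(s,g)$, then a matching expected lower bound via near-optimality of $d_{\theta^*}$ in the objective of~(\ref{eq:PI-QRL_lagrangian_v1}), and finally convert the resulting expected gap into the stated high-probability statement via Markov's inequality.

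For the upper bound, every feasible $d_\theta \in \mathcal{C}_{\mathrm{grad}}^{(\epsilon)}$ satisfies $\|\nabla_s d_\theta(\cdot,g)\| \le 1+\epsilon$, so by convexity of $K$ one has $d_\theta(s,g) - d_\theta(g,g) \le (1+\epsilon)\|s - g\|$. The quasimetric identity $d_\theta(g,g) = 0$ combined with Lemma~\ref{lem:unit_speed_local} (which, on the convex subset $K$ under unit-speed dynamics, gives $\|s-g\| = d^*(s,g) = -V^*(s,g)$) yields
\begin{equation*}
d_{\theta^*}(s,g) + (1+\epsilon)\,V^*(s,g) \;\le\; 0,
\end{equation*}
establishing the upper end of the claimed interval deterministically, for \emph{every} $s,g$.

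For the lower bound, note that the scaled target $\tilde{d}(s,g) := (1+\epsilon)\,d^*(s,g)$ is itself a quasimetric (positive scalings of quasimetrics are quasimetrics) and satisfies $\|\nabla_s \tilde{d}\| = 1+\epsilon$ a.e.\ on $K\setminus\{g\}$, hence it is exactly feasible for the Eik-LRs constraint in~(\ref{eq:PI-QRL_lagrangian_v1}). By universal $L_\infty$-approximation of $\mathcal{Q}\mathrm{met}(\mathcal K)$, there exists $d_{\theta'}\in\{d_\theta\}_\theta$ arbitrarily close to $\tilde d$, remaining admissible up to the constraint slack; hence the maximizer $d_{\theta^*}$ must satisfy $\mathbb E[\zeta(d_{\theta^*})] \ge \mathbb E[\zeta(\tilde d)] - o(1)$. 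Combined with the pointwise upper bound $d_{\theta^*} \le \tilde d$ and monotonicity/Lipschitzness of $\zeta$ on the bounded value range, this yields an expected-gap estimate of the form $\mathbb E[\tilde d(s,g) - d_{\theta^*}(s,g)] = \mathcal O(\epsilon \cdot (-\mathbb E[V^*]))$, where the $-\mathbb E[V^*]$ scale arises because $\tilde d - d^* = \epsilon\, d^* = -\epsilon V^*$.

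The high-probability conclusion then follows by applying Markov's inequality to the non-negative random variable $\tilde d(s,g) - d_{\theta^*}(s,g)$:
\begin{equation*}
\Pr\!\left(\tilde d(s,g) - d_{\theta^*}(s,g) > \eta\right) \;\le\; \frac{\mathbb E[\tilde d(s,g) - d_{\theta^*}(s,g)]}{\eta} \;=\; \mathcal O\!\left(\tfrac{\epsilon}{\eta}(-\mathbb E[V^*])\right),
\end{equation*}
so with probability at least $1 - \mathcal O(\tfrac{\epsilon}{\eta}(-\mathbb E[V^*]))$ the quantity $d_{\theta^*}(s,g)+(1+\epsilon)V^*(s,g)$ lies in $[-\eta,0]$. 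The main obstacle I anticipate is Step 2: rigorously quantifying the $\mathcal O(\epsilon)$ expected gap purely from the slack in the Eik-LRs constraint. One must carefully propagate the $L_\infty$ universal-approximation error through both the constraint (so $d_{\theta'}$ remains $\epsilon$-admissible) and the dampened objective $\zeta(\cdot)$, then invert the resulting functional inequality into a sharp control on the expected pointwise gap; the scaling factor $(1+\epsilon)$ in the target rather than $1$ is precisely what makes this propagation quantitative.
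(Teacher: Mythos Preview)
Your overall skeleton is right --- pointwise upper bound from the gradient constraint, then Markov on the resulting non-negative gap --- and this matches the paper's final step. Where you diverge is the lower bound, and the paper's device there is cleaner than the one you propose.

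The paper first proves a \emph{pointwise} sandwich $-V^*(s,g)\le d_{\theta^*}(s,g)\le -(1+\epsilon)V^*(s,g)$ via an auxiliary lemma. The upper half is essentially your argument; the lower half is obtained not from optimality in the objective of~(\ref{eq:PI-QRL_lagrangian_v1}) but from a relaxed-cost comparison: under Lemma~\ref{lem:unit_speed_local}, the function $-\widetilde V^*:=(1+\tfrac{\epsilon}{2})d^*$ has gradient norm $1+\tfrac{\epsilon}{2}\le 1+\epsilon$ and lies in $\mathcal{Q}\mathrm{met}(\mathcal K)$, so universal $L_\infty$-approximation produces a feasible $d_\theta$ with $|d_\theta+\widetilde V^*|\le\tfrac{\epsilon}{2}$, whence $d_\theta\ge -(1+\tfrac{\epsilon}{2})V^*-\tfrac{\epsilon}{2}\ge -V^*$ whenever $V^*\le -1$. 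With both pointwise bounds in hand the Markov step is immediate: $\mathbb{E}[d_{\theta^*}]\ge -\mathbb{E}[V^*]$ from the lower bound, $\mathbb{E}[d_{\theta^*}]\le -(1+\epsilon)\mathbb{E}[V^*]-p\eta$ from the upper bound plus the definition of the bad event, hence $p\le \tfrac{\epsilon}{\eta}(-\mathbb{E}[V^*])$, and $\zeta$ never appears. Your route through maximization in~(\ref{eq:PI-QRL_lagrangian_v1}) forces you to invert $\mathbb{E}[\zeta(d_{\theta^*})]\ge\mathbb{E}[\zeta(\text{competitor})]$ into control on $\mathbb{E}[\tilde d-d_{\theta^*}]$, which requires a \emph{lower} bound on $\zeta'$ (bi-Lipschitz, not just Lipschitz) --- an extra hypothesis the paper's argument does not need. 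Note also that your competitor $\tilde d=(1+\epsilon)d^*$ sits exactly on the constraint boundary, so its $L_\infty$-approximant may be infeasible; the safer competitor is $d^*$ itself (constraint value zero), and the $\epsilon(-\mathbb{E}[V^*])$ scale then enters via $\tilde d - d^* = \epsilon d^*$, as you correctly say.

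On the upper bound, you integrate along the straight segment $[s,g]$ and invoke $\|s-g\|=-V^*(s,g)$, which needs $g\in K$ and the segment to remain in the domain where the gradient bound holds. The paper instead decomposes along the optimal path $s=s_0\to\cdots\to s_T=g$ with $T=-V^*(s,g)$ and bounds each one-step piece $d_{\theta^*}(s_t,s_{t+1})\le 1+\epsilon$ via the gradient constraint on the short segment $[s_t,s_{t+1}]$; summing gives $d_{\theta^*}(s,g)\le T(1+\epsilon)$, which is the same conclusion but remains valid when the straight line $[s,g]$ is obstructed.
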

\vspace{-4pt}
Thus, Theorem~\ref{thm:approximation_theorem_grad} establishes that a universal quasimetric approximator trained with Eik-QRL in (\ref{eq:PI-QRL_lagrangian_v1}) recovers \(V^*(s,g)\) approximately with high probability, under the sufficient conditions of Lemma~\ref{lem:unit_speed_local}. While these conditions provide clean guarantees, they are not strictly required in practice. As shown in our experiments (Section~\ref{sec:experiments}), Eik-QRL remains competitive even when $1$-Lipschitz property cannot be verified, as in the \texttt{antmaze} tasks. This observation aligns with the fact that if the target $d^*$ is Lipschitz, then approximating it with Lipschitz function, possibly with a different constant, preserves shortest-path geometry up to a positive scaling factor~\citep{sethian1996fast}, which is typically sufficient for policy gradient methods given their scale-invariance~\citep{sutton1999policy,schulman2015trust}.

In summary, Eik-QRL offers clear structural benefits, including trajectory-free estimation and PDE-based regularization, but at the cost of stronger regularity assumptions on the target value, in particular local Lipschitz continuity of the optimal value function. These assumptions are standard in continuous-time optimal control, yet they are not guaranteed in general MDPs. This raises a practical question: \textit{can we retain the advantages of Eik-QRL while mitigating violations of these regularity conditions?}

\begin{figure}
    \centering
    \includegraphics[width=0.9\linewidth]{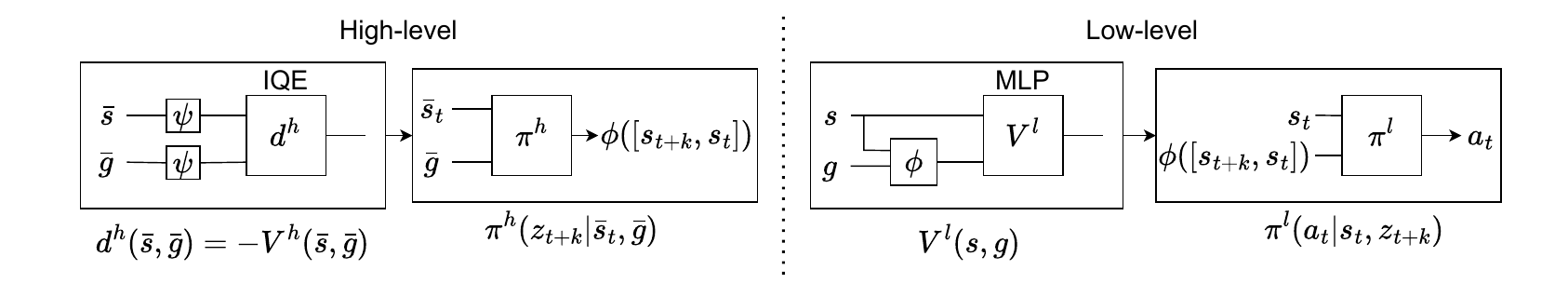}
    \caption{Diagram for Eik-HiQRL. The high-level module consists of a quasimetric model \( d^h \), parameterized as an IQE~\citep{wang2022improved}, which is trained to optimize the Eik-QRL objective in~Eq.~(\ref{eq:PI-QRL_lagrangian_v1}). The low-level module comprises a value function \( V^l \) and a goal representation network \( \phi: \mathcal{S} \times \mathcal{G} \to \mathcal{Z} \)~\citep{park2024hiql}, both instantiated as MLPs and trained via TD-based recursion. Both high-level policy \( \pi^h \) and the low-level policy \( \pi^l \) are trained via advantage-weighted regression~\citep{peng2019advantage}. Full details are provided in Appendix~\ref{sec_app:Eik_HiQRL}.}
    \label{fig:Eik-HiQRL}
    \vspace{-0.3cm}
\end{figure}

\vspace{-0.2cm}
\section{Eikonal-Constrained Hierarchical QRL (Eik-HiQRL)}
\label{sec:Eik-HiQRL}
\vspace{-0.1cm}


To retain the advantages of Eik-QRL while mitigating its limitations, we propose a hierarchical architecture, yielding the Eik-HiQRL algorithm illustrated in Fig.~\ref{fig:Eik-HiQRL}. This design is motivated by the following considerations. First, as highlighted in previous work~\citep{park2024hiql}, a key challenge in GCRL is the low signal-to-noise ratio in value estimates, especially in long-horizon tasks where the initial state and goal are far apart. \citet{park2024hiql} propose a \textit{hierarchical solution} in which a high-level actor outputs subgoals and a low-level controller attempts to reach them. Building on this, we show in Appendix~\ref{sec_app:didactic example} that quasimetric-parameterized value functions can have an even stronger effect than hierarchy alone on mitigating this signal-to-noise problem, further justifying the use of quasimetric value learning strategies such as Eik-QRL, HJB-QRL, and QRL together with hierarchy.

Given this result, a natural question is whether one could rely solely on a single quasimetric value function for both high- and low-level policies. Unfortunately, enforcing quasimetric structures in high-dimensional state spaces is itself challenging as the approximation error of quasimetric parameterizations grows exponentially with the state-space dimension\footnote{This result can be readily derived by extending \citet[Theorem 3]{wang2022improved} and combining it with standard results on covering numbers (e.g., \citet[Ex.~27.1]{shalev2014understanding}).}. A natural way to address this problem is to \textit{reduce the state dimensionality} before applying quasimetric projection, either by projecting states into an embedding space or by selecting a subset of task-relevant state variables on which to define subgoals. If, in this process, we ensure that the resulting abstract space evolves under approximately isotropic dynamics, we can then leverage not only the benefits of the quasimetric structure but also \textit{the advantages of the Eikonal formulation (Remark~\ref{rem:benefits})}.

Eik-HiQRL is designed precisely to reconcile these aspects: (1) quasimetric projection for the high-level value in a low-dimensional abstract space, (2) PDE-based regularization via the Eikonal formulation, and (3) a hierarchical structure that improves the signal-to-noise ratio. At the high level, we define a lower-dimensional abstract space $\bar{\mathcal S}$ (e.g., the positions of the agent or task-relevant objects) where the regularity assumptions of Eik-QRL hold and the quasimetric projection is tractable. This enables learning a high-level value function that improves subgoal generation, going beyond the single-value design in HIQL~\citep{park2024hiql}. These subgoals then guide the low-level controller more effectively, further mitigating the signal-to-noise ratio issue in value function learning.

In the following section, we provide empirical results that demonstrate the benefits of this design. For all the implementation details for both Eik-HiQRL and Eik-QRL refer to Appendix~\ref{sec_app:Eik_HiQRL}.

\vspace{-0.2cm}
\section{Experiments}
\label{sec:experiments}
\vspace{-0.2cm}

\begin{figure}[ht!]
    \centering
    \begin{subfigure}[t]{0.15\linewidth}
        \centering
        \includegraphics[width=0.8\linewidth]{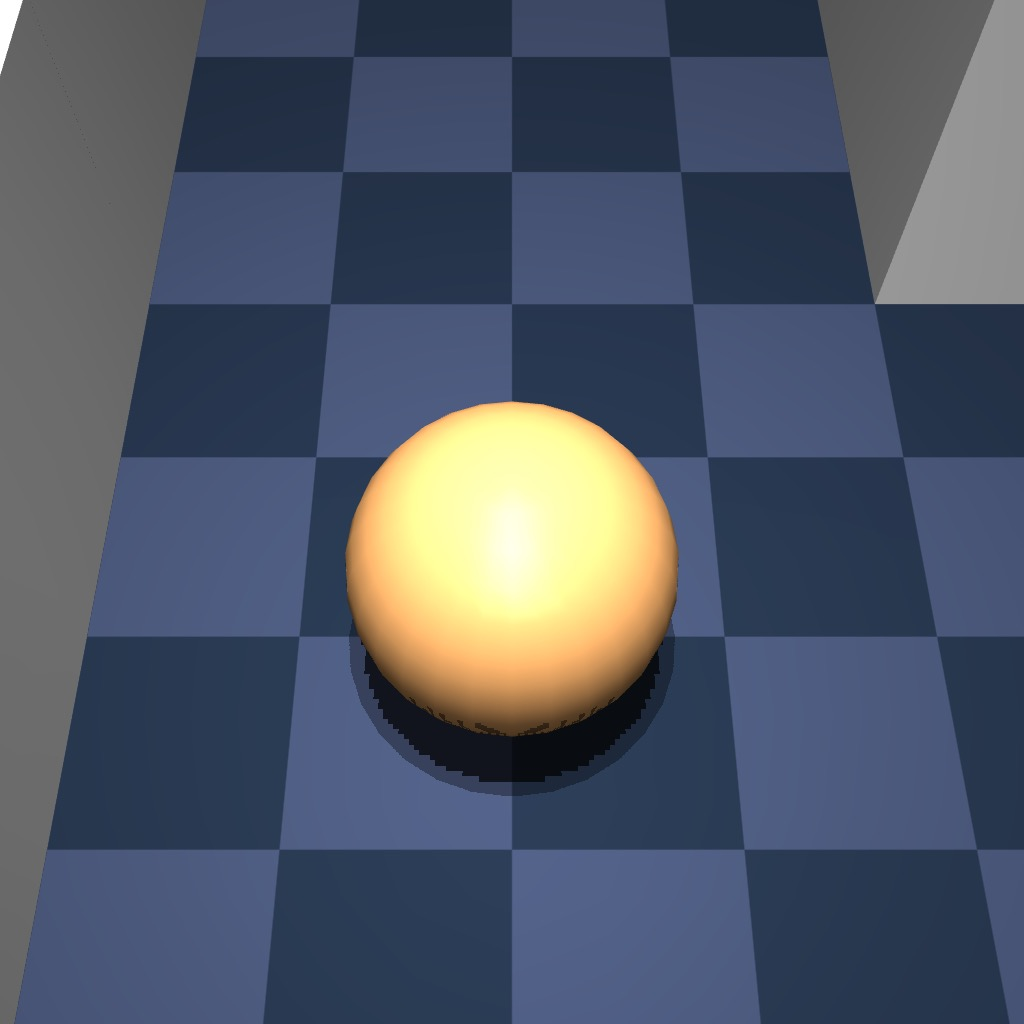}
        \caption{\texttt{pointmaze}}
        \label{fig:pointmaze}
    \end{subfigure}
    ~
    \begin{subfigure}[t]{0.15\linewidth}
        \centering
        \includegraphics[width=0.8\linewidth]{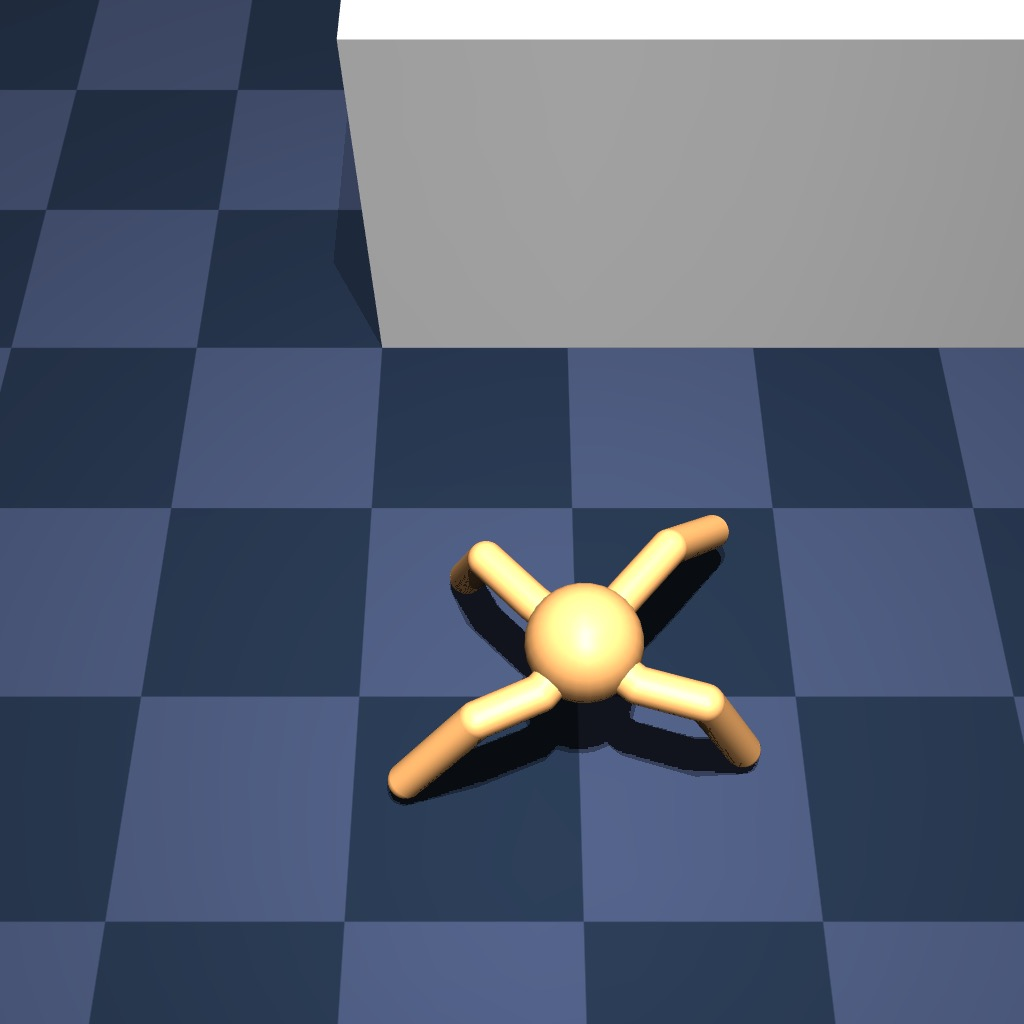}
        \caption{\texttt{antmaze}}
        \label{fig:antmaze}
    \end{subfigure}
    ~
    \begin{subfigure}[t]{0.15\linewidth}
        \centering
        \includegraphics[width=0.8\linewidth]{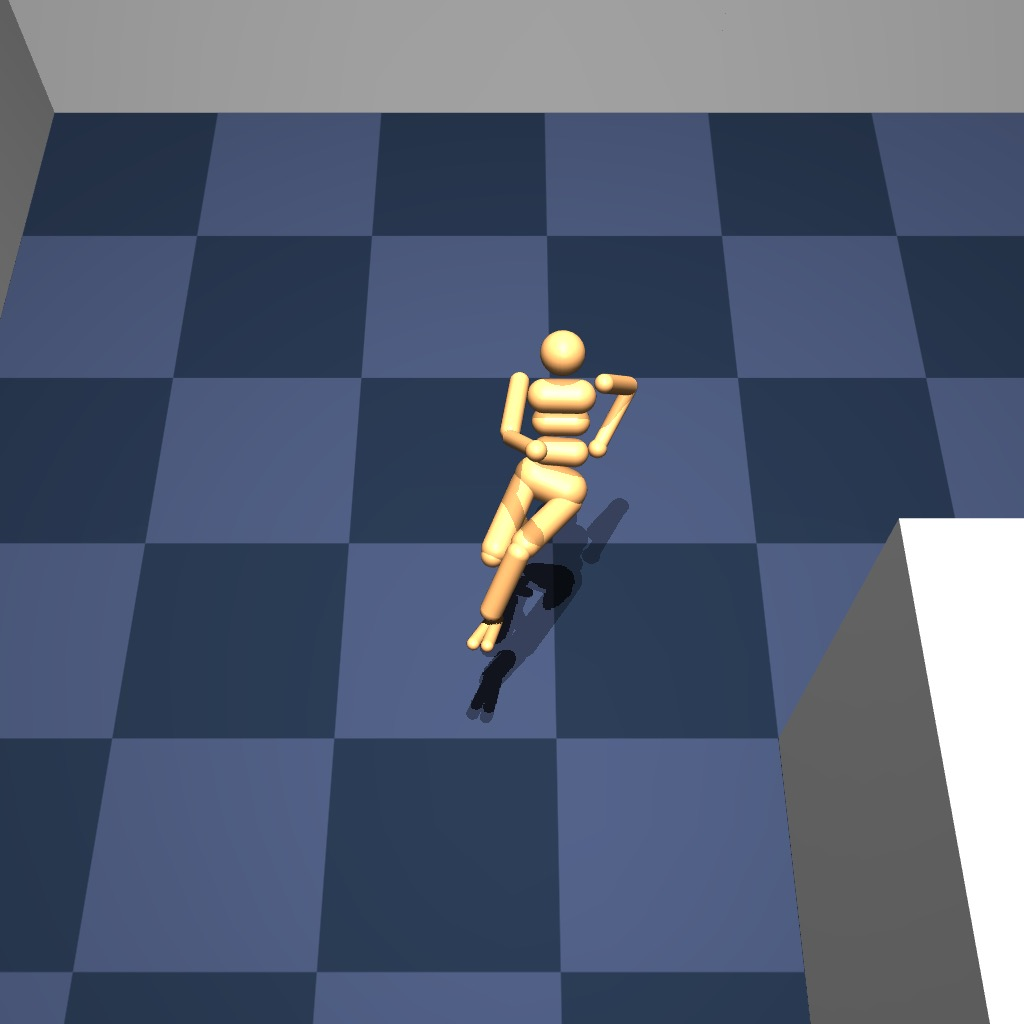}
        \caption{\texttt{humanoid-} \texttt{maze}}
        \label{fig:humanoidmaze}
    \end{subfigure}
    ~
    \begin{subfigure}[t]{0.15\linewidth}
        \centering
        \includegraphics[width=0.8\linewidth]{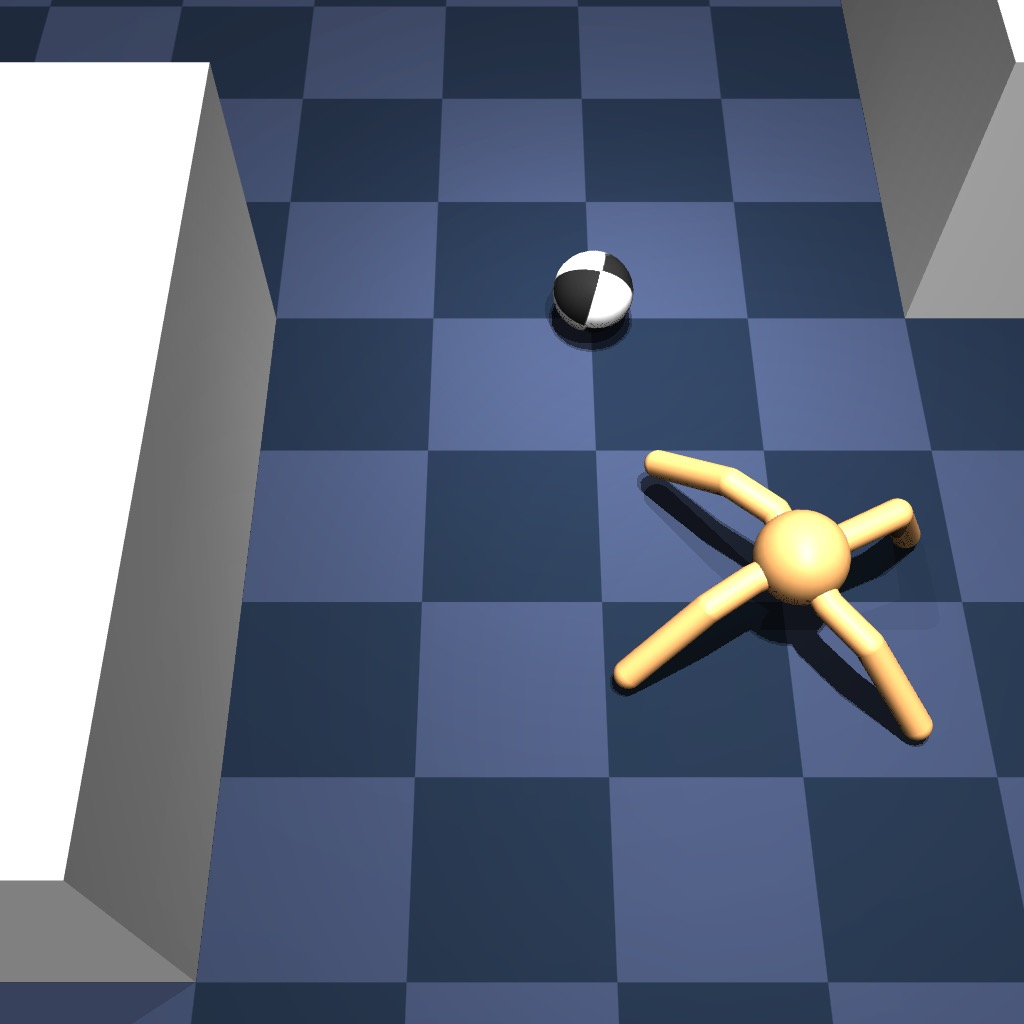}
        \caption{\texttt{antsoccer}}
        \label{fig:antsoccer}
    \end{subfigure}
    ~
    \begin{subfigure}[t]{0.15\linewidth}
        \centering
        \includegraphics[width=0.8\linewidth]{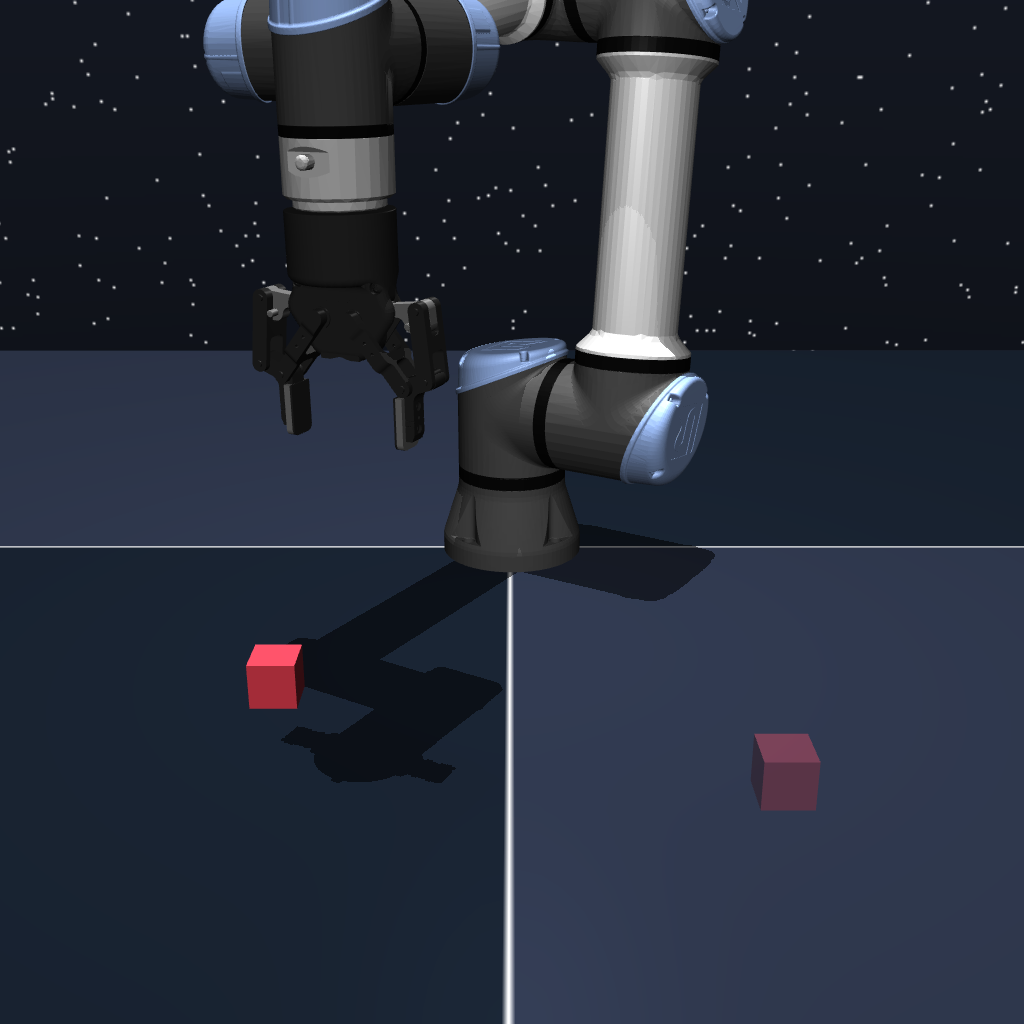}
        \caption{\texttt{cube}}
        \label{fig:cube}
    \end{subfigure}
    ~
    \begin{subfigure}[t]{0.15\linewidth}
        \centering
        \includegraphics[width=0.8\linewidth]{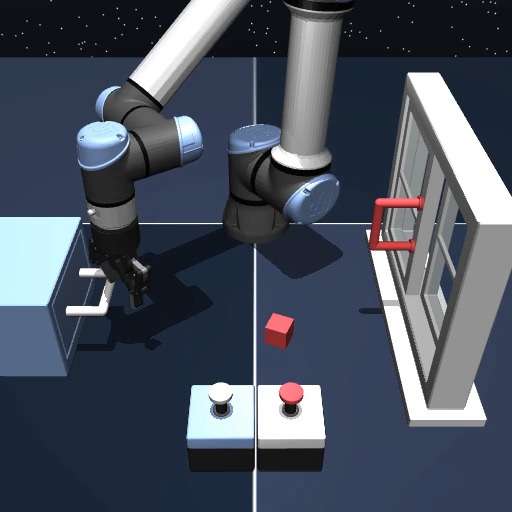}
        \caption{\texttt{scene}}
        \label{fig:scene}
    \end{subfigure}
    \caption{Environments from OGbench~\citep{park2024ogbench} used in our experiments.}
    \label{fig:environments}
    \vspace{-0.3cm}
\end{figure}

Our experiments are primarily conducted in the Offline GCRL setting, which offers a more controlled testbed than the online counterpart for evaluating value function learning and out-of-distribution generalization. In the offline case, the dataset remains fixed throughout training, making it possible to directly assess an algorithm’s ability to generalize beyond the training distribution. By contrast, in the online setting it is difficult to disentangle whether performance fluctuations arise from fortunate exploration or genuine improvements in generalization. To this end, we base our experiments on the tasks from OGbench~\citep{park2024ogbench}, summarized in Fig.~\ref{fig:environments}. We provide in Appendix~\ref{sec_app:environments} more details on these environments, including a discussion of how closely their dynamics satisfy the assumptions stated in Section~\ref{sec:Eik-QRL}.

This section is organized as follows. First, Table~\ref{tab:exp_quasimetric_comparisons} provides a controlled comparison between Eik-QRL in~(\ref{eq:PI-QRL_lagrangian_v1}), HJB-QRL in~(\ref{eq:HJB_QRL}), and QRL in~(\ref{eq:QRL_lagrangian}), highlighting the pros and cons of introducing PDE-based constraints into quasimetric value learning. The same table also includes our hierarchical formulation Eik-HiQRL, showing how the proposed hierarchical design overcomes the limitations of Eik-QRL on the \texttt{antmaze} suite. Next, we compare Eik-HiQRL against strong Offline GCRL baselines on OGbench tasks in Fig.~\ref{fig:humanoidmaze}, and further stress-test it in non-regular environments (where our assumptions do not hold) in Table~\ref{tab:MDP_interaction}. Appendix~\ref{sec_app:additional_experiments} complements these main results with additional analyses and ablations. Specifically, Appendix~\ref{sec_app:comparison_with_QRL_hi} and~\ref{sec_app:additional_ablation} investigate the effects of different actors and optimization pipelines on empirical performance; Appendix~\ref{sec_app:full_comparison_Eik_HiQRL} reports a complete comparison between Eik-HiQRL and Offline GCRL baselines; Appendix~\ref{sec_app:ablation_Eik_HiQRL} isolates the impact of the Eikonal constraint within our hierarchy by comparing Eik-HiQRL to HiQRL (without the Eikonal term in the high-level value function). Finally, Appendix~\ref{sec_app:traj_free_experiments} and~\ref{sec_app:onlineRL} respectively explore how to exploit the trajectory-free property of Eik-QRL in practice and evaluate its effectiveness in an online RL setting.

\vspace{-0.2cm}
\paragraph{Empirical Comparison of Quasimetric Formulations} 
We begin with a controlled comparison of QRL, HJB-QRL, Eik-QRL, and Eik-HiQRL, intended to provide empirical support for the theoretical results established in the previous sections. We evaluate these algorithms on two benchmark suites: \texttt{pointmaze} and \texttt{antmaze}. The \texttt{pointmaze} environment offers an idealized setting with isotropic dynamics, so that the main assumptions in our analysis are satisfied. In contrast, the \texttt{antmaze} suite involves higher-dimensional state spaces and complex dynamics, where discontinuities at contact points challenge the Lipschitz continuity assumption on the dynamics (Assumption~\ref{assumption:dynamics_Lip}). All algorithms are compared across different maze dimensions and dataset variants. Performance is reported using two main metrics. The first is the success rate $\mathcal{R}$, defined as the percentage of episodes in which a randomly sampled goal is reached. The second is collision rate $\kappa$, measured as the percentage of time steps during which the agent is colliding with obstacles. These results are summarized in Table~\ref{tab:exp_quasimetric_comparisons} and learning curves are provided in Appendix~\ref{sec_app:learning_curves}.

A few notes on implementations are in order. Eik-QRL and HJB-QRL are both implemented with a hierarchical actor, consistent with our analysis in Appendix~\ref{sec_app:didactic example}. Whereas QRL follows the original implementation of \citet{wang2023optimal} as presented in Fig.~\ref{fig:QRL}. As mentioned, Appendix~\ref{sec_app:comparison_with_QRL_hi} illustrates an additional comparison where QRL is combined with a hierarchical actor (denoted QRL:hi) in order to isolate the contribution of PDE constraints versus hierarchy. All the details for these variants are provided in Appendix~\ref{sec_app:hierarchical_quasimetric_variants}. Finally, Eik-HiQRL follows the design introduced in Section~\ref{sec:Eik-HiQRL}, where, for all the presented navigation tasks, the abstract space $\bar{\mathcal{S}}$ is defined as the agent’s coordinate space.

Turning to the results in Table~\ref{tab:exp_quasimetric_comparisons}, in the \texttt{pointmaze} setting, we observe no tangible differences among the three PDE-constrained formulations, consistent with our theoretical analysis. All of them achieve improved accuracy in the estimated value function, which translates into lower collision rates and stronger performance compared to QRL. Refer for instance to \texttt{giant} mazes for both \texttt{navigate} and \texttt{stitch} datasets. QRL achieves competitive success rates, but the learned policies heavily rely on collisions with walls, a behavior that undermines performance in larger environments. 

For the \texttt{antmaze} suite, we highlight the following observations. Compared to the idealized \texttt{pointmaze} setting, the performance of all purely quasimetric algorithms (Eik-QRL, HJB-QRL, and QRL) deteriorates, with particularly pronounced drops in the \texttt{giant} variants. This reflects the increased difficulty of learning accurate quasimetrics in higher-dimensional state space and with complex and non-Lipschitz dynamics. Despite this degradation, Eik-QRL consistently outperforms HJB-QRL, underscoring the numerical challenges of optimizing~(\ref{eq:HJB_QRL}). At the same time, Eik-QRL remains on par with QRL (and with QRL:hi in Table~\ref{tab_app:comparison_with_QRL_hi}), indicating that the Eikonal PDE constraint does not excessively compromise effectiveness in this regime. Finally, Eik-HiQRL achieves the strongest performance across the \texttt{antmaze} tasks, highlighting the benefits of our hierarchical design.

\begin{table}
    \centering
    \tiny
    \caption{Summary of the comparison among the QRL formulations. All agents are trained for $10^5$ steps using 10 seeds and evaluated every $10^4$ steps. We report the mean and standard deviation across seeds for the last evaluation. For each seed, evaluations are conducted over $5$ different random goals, with the learned policy tested for $50$ episodes per goal. For success rate ($\mathcal{R}$), results within $95\%$ of the best value are written in \textbf{bold}. For collision avoidance ($\kappa$) the lowest average is \textcolor{teal}{colored}.}
    \vspace{-0.15cm}
    \begin{tabular}{p{1.1cm} p{0.7cm} p{0.85cm} | p{0.82cm} p{0.82cm} | p{0.82cm} p{0.8cm} | p{0.82cm} p{0.75cm} | p{0.82cm} p{0.75cm}}
        \toprule
        \multirow{2}{*}{\textbf{Environment}} & \multirow{2}{*}{\textbf{Dataset}} & \multirow{2}{*}{\textbf{Dim}} & \multicolumn{2}{c|}{\textbf{Eik-HiQRL}} & \multicolumn{2}{c|}{\textbf{Eik-QRL}} & \multicolumn{2}{c|}{\textbf{HJB-QRL}} & \multicolumn{2}{c}{\textbf{QRL}} \\
        & & & $\mathcal{R} \ (\uparrow)$ & $\kappa \ (\downarrow)$ & $\mathcal{R} \ (\uparrow)$ & $\kappa \ (\downarrow)$ & $\mathcal{R} \ (\uparrow)$ & $\kappa \ (\downarrow)$ & $\mathcal{R} \ (\uparrow)$ & $\kappa \ (\downarrow)$ \\
        \cmidrule(lr){1-11}
        \multirow{8}{*}{\texttt{pointmaze}} & \multirow{4}{*}{\texttt{navigate}} & \texttt{medium} & \bm{$83 \pm 9$} & \textcolor{teal}{$8 \pm 2$} & \bm{$87 \pm 7$} & $10 \pm 4$ & \bm{$88 \pm 6$} & $10 \pm 3$ & $79 \pm 4$ & $46 \pm 4$ \\
        & & \texttt{large} & $82 \pm 12$ & \textcolor{teal}{$5 \pm 2$} & \bm{$90 \pm 8$} & $11 \pm 8$ & \bm{$91 \pm 7$} & $11 \pm 6$ & \bm{$89 \pm 7$} & $59 \pm 3$ \\
        & & \texttt{giant} & $73 \pm 9$ & \textcolor{teal}{$14 \pm 8$} & \bm{$82 \pm 12$} & $18 \pm 12$ & \bm{$83 \pm 8$} & $17 \pm 6$ & $69 \pm 8$ & $60 \pm 8$ \\
        & & \texttt{teleport} & \bm{$46 \pm 17$} & \textcolor{teal}{$36 \pm 6$} & $40 \pm 11$ & $49 \pm 14$ & $40 \pm 8$ & $46 \pm 11$ & $20 \pm 9$ & $85 \pm 8$ \\
        \cmidrule(lr){2-11}
        & \multirow{4}{*}{\texttt{stitch}} & \texttt{medium} & \bm{$96 \pm 4$} & \textcolor{teal}{$10 \pm 2$} & $86 \pm 5$ & \textcolor{teal}{$10 \pm 1$} & $85 \pm 9$ & $10 \pm 2$ & $78 \pm 11$ & $50 \pm 6$ \\
        & & \texttt{large} & $75 \pm 10$ & \textcolor{teal}{$5 \pm 2$} & $78 \pm 7$ & \textcolor{teal}{$5 \pm 1$} & $73 \pm 4$ & $9 \pm 4$ & \bm{$84 \pm 11$} & $54 \pm 5$ \\
        & & \texttt{giant} & $62 \pm 22$ & $28 \pm 14$ & \bm{$73 \pm 16$} & \textcolor{teal}{$19 \pm 11$} & \bm{$70 \pm 17$} & \textcolor{teal}{$19 \pm 12$} & $51 \pm 13$ & $61 \pm 10$ \\
        & & \texttt{teleport} & \bm{$50 \pm 10$} & \textcolor{teal}{$5 \pm 5$} & $37 \pm 12$ & $21 \pm 12$ & $29 \pm 9$ & $31 \pm 12$ & $33 \pm 13$ & $78 \pm 7$ \\
        \cmidrule(lr){1-11}
        \multirow{8}{*}{\texttt{antmaze}} & \multirow{4}{*}{\texttt{navigate}} & \texttt{medium} & \bm{$93 \pm 2$} & \textcolor{teal}{$18 \pm 2$} & $84 \pm 3$ & $25 \pm 1$ & $31 \pm 4$ & $35 \pm 3$ & $82 \pm 7$ & $25 \pm 4$ \\
        & & \texttt{large} & \bm{$86 \pm 2$} & \textcolor{teal}{$25 \pm 2$} & $74 \pm 3$ & \textcolor{teal}{$25 \pm 1$} & $28 \pm 8$ & $36 \pm 2$ & $54 \pm 9$ & $38 \pm 4$ \\
        & & \texttt{giant} & \bm{$59 \pm 7$} & \textcolor{teal}{$28 \pm 3$} & $8 \pm 2$ & $35 \pm 1$ & $0 \pm 0$ & $34 \pm 2$ & $0 \pm 0$ & $55 \pm 4$ \\
        & & \texttt{teleport} & \bm{$49 \pm 2$} & $40 \pm 3$ & $45 \pm 3$ & \textcolor{teal}{$38 \pm 2$} & $28 \pm 4$ & $42 \pm 2$ & $32 \pm 6$ & $52 \pm 5$ \\
        \cmidrule(lr){2-11}
        & \multirow{4}{*}{\texttt{stitch}} & \texttt{medium} & \bm{$94 \pm 1$} & \textcolor{teal}{$19 \pm 2$} & $70 \pm 4$ & $32 \pm 2$ & $37 \pm 5$ & $38 \pm 2$ & $66 \pm 9$ & $26 \pm 2$ \\
        & & \texttt{large} & \bm{$81 \pm 8$} & \textcolor{teal}{$23 \pm 2$} & $23 \pm 4$ & $31 \pm 3$ & $13 \pm 4$ & $36 \pm 3$ & $15 \pm 6$ & $39 \pm 3$ \\
        & & \texttt{giant} & \bm{$47 \pm 12$} & \textcolor{teal}{$29 \pm 2$} & $0 \pm 0$ & $44 \pm 2$ & $0 \pm 0$ & $46 \pm 3$ & $0 \pm 0$ & $52 \pm 1$ \\
        & & \texttt{teleport} & \bm{$51 \pm 6$} & \textcolor{teal}{$28 \pm 2$} & $28 \pm 7$ & $35 \pm 3$ & $13 \pm 3$ & $37 \pm 4$ & $27 \pm 5$ & $32 \pm 3$ \\
        \bottomrule
    \end{tabular}
    \label{tab:exp_quasimetric_comparisons}
    \vspace{-0.3cm}
\end{table}

\vspace{-0.3cm}
\paragraph{Evaluation in Challenging and Non-Regular Environments}
This second set of experiments is divided into two parts. The first evaluates Eik-HiQRL against established Offline GCRL algorithms in the \texttt{humanoidmaze} navigation tasks. The second examines scenarios where the regularity assumptions do not hold, including MDPs with third-party objects and categorical variables in the state space. These experiments are designed to test performance in settings where our theoretical analysis is difficult to apply. The baselines considered in this section include Hierarchical Implicit Q-Learning (HIQL)~\citep{park2024hiql}, Contrastive RL (CRL)~\citep{eysenbach2022contrastive}, QRL~\citep{wang2023optimal}, and Eikonal-regularized HIQL (Eik-HIQL)~\citep{giammarino2025physics}. More details about these baselines, including how they differ from our algorithm, are provided in Appendix~\ref{sec_app:baselines}.

For the \texttt{humanoidmaze} tasks, the results in Fig.~\ref{fig:humanoidmaze} provide strong evidence for the effectiveness of our PDE-constrained hierarchical design. Eik-HiQRL consistently outperforms all baselines, with especially large gains in the long-horizon \texttt{large} and \texttt{giant} settings. The advantage is most evident on \texttt{large-stitch} and \texttt{giant-stitch}, where Eik-HiQRL achieves statistically significant improvements over the best-performing baseline according to Welch’s t-test ($t = 11.7$, $p \approx 10^{-9}$ and $t = 22$, $p \approx 10^{-14}$, respectively). As also highlighted in Table~\ref{tab:exp_quasimetric_comparisons}, when data stitching is required the PDE-constrained algorithms exhibit their clearest benefits due to the regularizing effect of the PDEs. To the best of our knowledge, Eik-HiQRL achieves SOTA performance on this benchmark. For additional experiments, as well as an ablation comparing Eik-HiQRL with HiQRL (which uses QRL instead of Eik-QRL in the high-level policy), see respectively Appendices~\ref{sec_app:full_comparison_Eik_HiQRL} and~\ref{sec_app:ablation_Eik_HiQRL}.

For environments involving external object interactions, we evaluate the \texttt{antsoccer} suite, where an \texttt{ant} agent must move a ball to a designated target, as well as the \texttt{cube} and \texttt{scene} robotic manipulation tasks. In these settings, the abstract space $\bar{\mathcal{S}}$ is defined as the concatenation of the ant and ball coordinates for \texttt{antsoccer}, and as a latent space learned end-to-end for the two \texttt{manipulation} tasks. Specifically, to learn this abstract latent space we define an MLP $\nu: \mathcal{S} \to \mathcal{Z}$, which is trained end-to-end as part of the objective in Eq.~(\ref{eq:PI-QRL_lagrangian_v1}). During training, we feed $\nu(s)$ and $\nu(g)$ to the quasimetric value network, and the gradients from both the Global Relationships loss and the Eikonal Local Relationships term in Eq.~(\ref{eq:PI-QRL_lagrangian_v1}) are backpropagated through $\nu$. Importantly, we do not introduce any additional auxiliary loss to explicitly enforce our geometric assumptions in the embedding space. The results, reported in Table~\ref{tab:MDP_interaction}, show that Eik-HiQRL achieves performance comparable to the baselines. Unlike in the navigation domain, however, we do not observe the same level of performance gains. This reflects the non-trivial characteristics of dynamics in complex manipulation, where interactions with objects introduce additional challenges. In particular, contact events are often represented through categorical or mode-switching variables (e.g., in-contact vs. out-of-contact), which induce sharp discontinuities and hybrid structure in the underlying value function. These discontinuities are at odds with the smooth topology implicitly assumed by our PDE-based constraints (Assumption~\ref{assumption:V_reg}), so that enforcing Eikonal constraints in this regime introduces a bias that does not perfectly match the true optimal value landscape. We believe that these peculiarities make robotic manipulation a promising direction for follow-up work, with significant opportunities to design PDE-constrained algorithms explicitly tailored to contact-rich, hybrid dynamics. Learning curves for these experiments are provided in Appendix~\ref{sec_app:learning_curves}.

\begin{figure}
    \centering
    \includegraphics[width=0.75\linewidth]{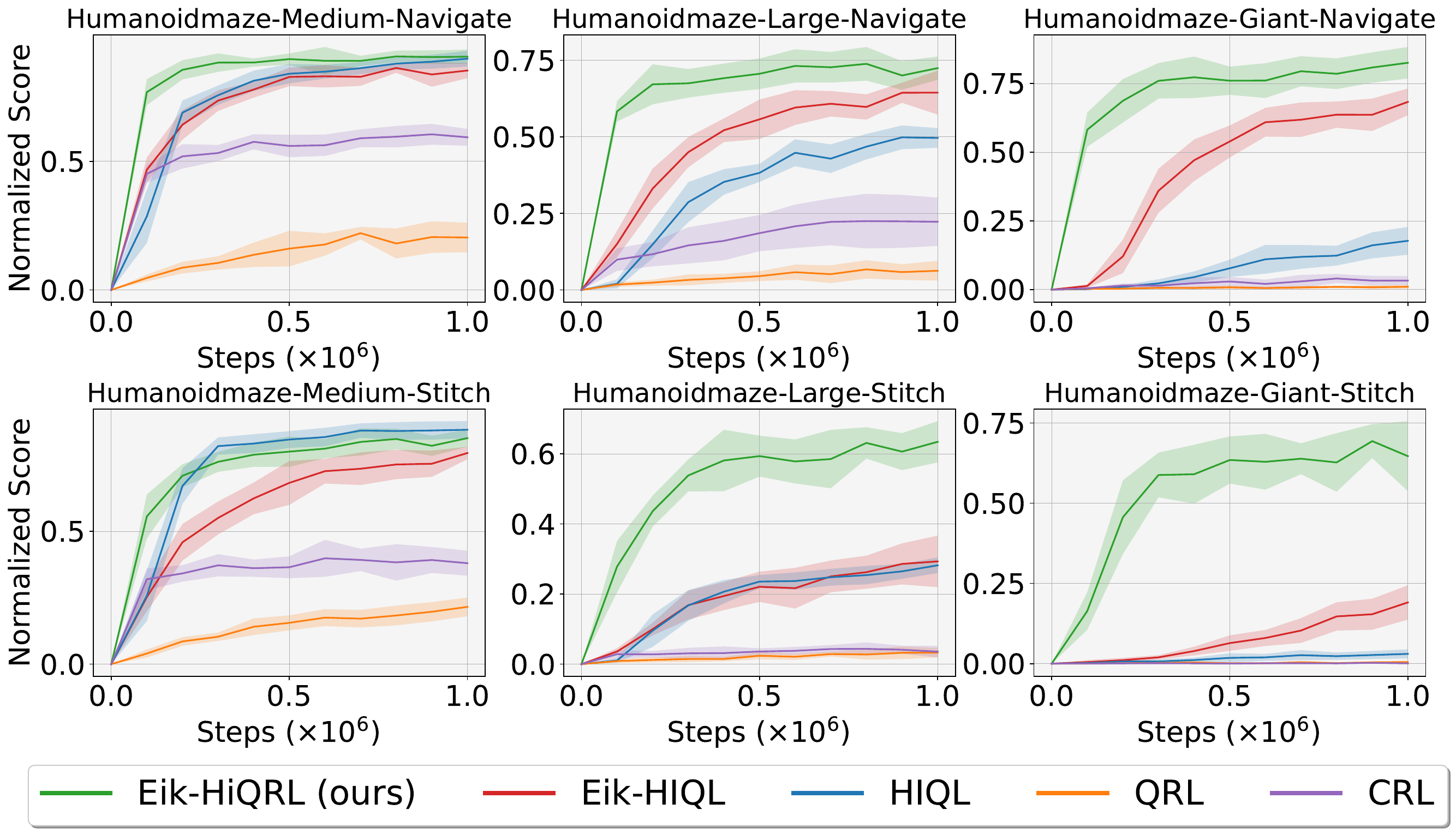}
    \caption{Learning curves for the \texttt{humanoidmaze} experiments. All agents are trained for $10^6$ steps using 10 seeds and evaluated every $10^5$ steps. For each seed, evaluations are conducted as in Table~\ref{tab:exp_quasimetric_comparisons}. Plots show the average success and standard deviation across seeds.}
    \label{fig:humanoidmaze}
    \vspace{-0.2cm}
\end{figure}

\begin{table}
    \centering
    \tiny
    \caption{Summary of the comparison on non-regular environments. All agents are trained for $10^6$ steps using 10 seeds and evaluated every $10^5$ steps. For each seed, evaluations are conducted as in Table~\ref{tab:exp_quasimetric_comparisons} and we report the best evaluation. Results within $95\%$ of the best value are written in \textbf{bold}.}
    \vspace{-0.1cm}
    \label{tab:MDP_interaction}
    \begin{tabular}{p{1.5cm} p{1.5cm} p{1.5cm} | p{1.7cm} p{1cm} p{0.9cm} p{0.8cm} p{0.8cm}}
        \toprule
        \textbf{Environment} & \textbf{Dataset} & \textbf{Dimension} & \textbf{Eik-HiQRL (ours)} & \textbf{Eik-HIQL} & \textbf{HIQL} & \textbf{QRL} & \textbf{CRL} \\
        \cmidrule(lr){1-8}
        \multirow{4}{*}{\texttt{antsoccer}} & \multirow{2}{*}{\texttt{navigate}} & \texttt{arena} & \bm{$61 \pm 5$} & $19 \pm 2$ & \bm{$60 \pm 4$} & $10 \pm 3$ & $24 \pm 2$ \\
        & & \texttt{medium} & \bm{$13 \pm 1$} & $3 \pm 2$ & \bm{$13 \pm 3$} & $2 \pm 2$ & $4 \pm 2$\\
        \cmidrule(lr){2-8}
        & \multirow{2}{*}{\texttt{stitch}} & \texttt{arena} & \bm{$32 \pm 4$} & $2 \pm 0$ & $17 \pm 3$ & $2 \pm 1$ & $1 \pm 1$ \\
        & & \texttt{medium} & \bm{$7 \pm 2$} & $1 \pm 0$ & $5 \pm 1$ & $0 \pm 0$ & $0 \pm 0$ \\
        \cmidrule(lr){1-8}
        \multirow{2}{*}{\texttt{manipulation}} & \multicolumn{2}{c|}{\texttt{cube-single-play}} & $12 \pm 3$ & $25 \pm 1$ & \bm{$31 \pm 4$} & $11 \pm 3$ & \bm{$32 \pm 2$} \\
        \cmidrule(lr){2-8}
        & \multicolumn{2}{c|}{\texttt{scene-play}} & \bm{$55 \pm 14$} & \bm{$52 \pm 7$} & \bm{$52 \pm 3$} & $8 \pm 2$ & $35 \pm 6$ \\
        \bottomrule
    \end{tabular}
    \vspace{-0.3cm}
\end{table}

\vspace{-0.1cm}
\section{Conclusion}
\vspace{-0.0cm}

\label{sec:conclusion}

In this work, we derived and analyzed Eik-QRL, a PDE-constrained algorithm that introduces a PINN-based perspective on value learning for model-free GCRL. Our analysis highlights both the strengths and limitations of this formulation, motivating its hierarchical extension, Eik-HiQRL, and outlining several avenues for future work. We complement this analysis with extensive experiments that provide empirical support for our claims. These results demonstrate the effectiveness of Eik-HiQRL while also underscoring the challenges of applying PDE-constrained algorithms in non-regular environments, pointing to practical and impactful directions for further research. An especially promising direction is to leverage our analysis more directly for representation design. By explicitly characterizing which geometric and regularity properties are beneficial for PDE-based value learning, our results clarify what structural features an embedding space should satisfy for PDE-based algorithms to be applicable. Designing representation-learning mechanisms that realize such embeddings while still supporting effective control remains an open challenge. Progress along this line could offer a compelling path to further bridge the gap between formal guarantees and real-world deployment. Taken together, our contributions establish both theoretical novelty and practical relevance, and we hope this work will serve as a solid foundation for subsequent advances in PDE-informed RL.

\newpage

\section{Reproducibility Statement}
We have taken extensive measures to ensure the reproducibility of our results. The main algorithms are presented concisely in the paper, with design choices, loss formulations, and training procedures clearly specified. Additional implementation details, hyperparameters, and specifications are provided in Appendix~\ref{sec_app:Eik_HiQRL} for Eik-HiQRL and in Appendix~\ref{sec_app:hierarchical_quasimetric_variants} for the other variants. To further support reproducibility, we include a preliminary version of our \href{https://github.com/VittorioGiammarino/Eik-HiQRL/tree/main}{code} with this submission, and we will release a cleaned-up and fully documented version with the camera-ready paper.

\bibliography{my_bib}
\bibliographystyle{iclr2026_conference}

\clearpage
\appendix

\tableofcontents

\section{Theoretical Results for Section~\ref{sec:Eik-QRL}}
\label{sec_app:theo_results_Eik_QRL}

\subsection{Assumptions restated}

We begin by restating the assumptions introduced in Section~\ref{sec:Eik-QRL}. Unless otherwise specified, these assumptions are assumed to hold throughout this section.

\begin{assumption}[Compact state and action spaces]
\label{assumption_app:compact_subset}
Let the state space $\mathcal S$ and the action space $\mathcal A$ be compact, and let $\mathcal K \subseteq \mathcal S$ denote the feasible (obstacle-free) subset. 
\end{assumption}

\begin{assumption}[Lipschitz dynamics]
\label{assumption_app:dynamics_Lip}
The transition map $\mathcal T:\mathcal S \times \mathcal A \to \mathcal S$ is deterministic and Lipschitz continuous in $s$. That is, there exists a constant $L_{\mathcal T} > 0$ such that for every $s, s' \in \mathcal K$ and $a\in\mathcal A$: \(\|\mathcal T(s,a)-\mathcal T(s',a)\|\le L_{\mathcal T}\,\|s-s'\|\).
Equivalently, in the continuous-time limit $\Delta t\to 0$, we have
\(\mathcal T(s,a) = s + f(s,a)\,\Delta t + o(\Delta t)\), where $f(\cdot,a)$ is Lipschitz continuous in $s$ (uniformly in $a$).
\end{assumption}

\begin{assumption}[Cost regularity]
\label{assumption_app:cost_reg}
The cost $c:\mathcal S \times \mathcal{G} \to \mathbb R_{\geq 0}$ is continuous, bounded, and nonnegative. Moreover, $c(g, g)=0$ for all $g \in \mathcal G$, consistent with the shortest-path interpretation.
\end{assumption}

\begin{assumption}[Value function regularity]
\label{assumption_app:V_reg}
For each fixed goal $g\in\mathcal K$, the optimal value function $d^*(\cdot,g)$ exists, is finite on $\mathcal K$, and is locally Lipschitz.
\end{assumption}

\subsection{Lemma~\ref{lem:unit_speed_local} restated}

\begin{lemma}[$1$-Lipschitz property of the optimal value function]
\label{lem_app:unit_speed_local}
Let $K \subset \mathcal K$ be compact and convex, and fix any $g \in \mathcal K$ such that $d^*(\cdot,g)$ is finite on $K$. Suppose the dynamics on $K$ are given by the unit-speed integrator
\[
\dot s(t) = a(t), \quad \|a(t)\|\le 1,
\]
equivalently, $\mathcal T(s,a)=s+a\,\Delta t+o(\Delta t)$ on $K$. Assume the running cost is constant, with $c(s,g)\equiv 1$ for $s \neq g$ and $c(g,g)=0$. Then $d^*(\cdot,g)$ is $1$-Lipschitz on $K$, i.e., for all $s,s' \in K$,
\[
|d^*(s,g)-d^*(s',g)| \le \|s-s'\|.
\]

\begin{proof}
By the triangle inequality for $d^*$,
\[
|d^*(s,g)-d^*(s',g)| \;\le\; \max\{d^*(s,s'),\, d^*(s',s)\}.
\]
Since $K$ is convex and the dynamics are $\dot s=a$ with $\|a\|\le 1$, the straight-line trajectory from $s$ to $s'$ is feasible at unit speed and requires time $\|s-s'\|$. Hence $d^*(s,s') \le \|s-s'\|$ and $d^*(s',s) \le \|s-s'\|$. Substituting into the inequality above proves the claim.
\end{proof}
\end{lemma}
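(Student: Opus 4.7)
The plan is to combine two simple ingredients: a triangle inequality for $d^*$, and an explicit upper bound on the pairwise cost $d^*(s,s')$ obtained from a straight-line trajectory. First, I would argue that for all $s, s', g \in K$ one has $d^*(s,g) \le d^*(s,s') + d^*(s',g)$. This is a standard property of deterministic shortest-path value functions: concatenating an optimal trajectory from $s$ to $s'$ with one from $s'$ to $g$ yields an admissible path from $s$ to $g$, whose accumulated cost upper bounds $d^*(s,g)$. Applying the same argument with the roles of $s$ and $s'$ exchanged and rearranging gives
\[
|d^*(s,g) - d^*(s',g)| \;\le\; \max\{d^*(s,s'),\, d^*(s',s)\}.
\]

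Second, I would bound the right-hand side above by $\|s-s'\|$ through an explicit control. Because $K$ is convex, the segment joining $s$ and $s'$ lies entirely in $K$, so the constant control $a(t)\equiv (s'-s)/\|s'-s\|$ is admissible (it satisfies $\|a\|=1\le 1$), and the induced trajectory $s(t) = s + t\,a$ reaches $s'$ exactly at time $T = \|s-s'\|$. Since $c \equiv 1$ on $K\setminus\{g\}$, integrating the running cost along this trajectory yields total cost $\|s-s'\|$, hence $d^*(s,s') \le \|s-s'\|$. The symmetric construction (swapping endpoints) gives the analogous bound for $d^*(s',s)$. Substituting both into the symmetrized triangle inequality delivers the $1$-Lipschitz conclusion.

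The main obstacle is conceptually minor but worth spelling out: I must verify that the triangle inequality genuinely holds under the stated dynamics and cost, which relies on determinism of $\mathcal{T}$, the time-invariance of $c$, and the fact that a trajectory can be concatenated at $s'$ without incurring an additional junction cost. A secondary bookkeeping point is that the upper bound on $d^*(s,s')$ uses the convention that arrival at the target $s'$ at time $T$ does not contribute positively to the integral (automatic since $\{T\}$ has measure zero, or equivalently since $c(s',s')=0$). Both points are standard under Assumptions~\ref{assumption_app:dynamics_Lip}--\ref{assumption_app:cost_reg}, so once they are stated the rest of the argument is immediate.
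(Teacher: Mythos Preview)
Your proposal is correct and follows essentially the same approach as the paper: invoke the triangle inequality for $d^*$ to reduce to bounding $d^*(s,s')$ and $d^*(s',s)$, then use the straight-line unit-speed control (feasible by convexity of $K$) to show each is at most $\|s-s'\|$. Your write-up is in fact slightly more explicit than the paper's, since you spell out the concatenation argument behind the triangle inequality and the measure-zero bookkeeping at the terminal time.
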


\subsection{Theorem~\ref{thm:approximation_theorem_grad} restated}

Before restating and proving Theorem~\ref{thm:approximation_theorem_grad} we introduce the following auxiliary lemma.

\begin{lemma}[Bound for Gradient-Regularized Quasimetrics]
\label{lem_app:sandwiched_gradient_bound}
Define the feasible region
\[
\mathcal{F} = \left\{ (s,g) \in \mathcal{K} \times \mathcal{K} \;:\; V^*(s,g) > -\infty \right\}.
\]
Suppose the conditions of Lemma~\ref{lem_app:unit_speed_local} hold, and therefore, we can assume at the same time:
\begin{itemize}    
    \item[(A1)] \textbf{Gradient constraint:} \( d_\theta \in \mathcal{Q}\mathrm{met}(\mathcal{S}) \) is continuously differentiable in \(s\) and satisfies
    \[
    \|\nabla_s d_\theta(s, g)\| \le 1 + \epsilon, \quad \forall\, s, g \in \mathcal{K}.
    \]
    
    \item[(A2)] \textbf{Approximation of relaxed value:} \( d_\theta \) is a universal approximator of \( \mathcal{Q}\mathrm{met}(\mathcal{K}) \) in the \( L_\infty \) norm. Therefore, it approximates the value function \( \widetilde{V}^* \) of a relaxed-cost MDP with reward \( \widetilde{\mathcal{R}}(s,a) = -1 - \epsilon/2 \), up to additive error \( \epsilon/2 \):
    \[
    \left| d_\theta(s, g) + \widetilde{V}^*(s, g) \right| \le \frac{\epsilon}{2}, \quad \forall\, s, g \in \mathcal{S}.
    \]
\end{itemize}
Then, for all \( (s, g) \in \mathcal{F} \), the following bound holds:
\[
\boxed{%
    -V^*(s, g) \;\le\; d_\theta(s, g) \;\le\; -(1 + \epsilon)V^*(s, g)
}.
\]

\begin{proof}
\textbf{Lower bound.} This part of the proof follows from Lemma 1 in \citet{wang2023optimal}. In our case, however, assumptions $(A1)$ and $(A2)$ can only be satisfied simultaneously under the conditions stated in Lemma~\ref{lem_app:unit_speed_local}. 

Let \( \widetilde{V}^* \) denote the value function of the relaxed-cost MDP with reward \( \widetilde{\mathcal{R}}(s,s') = -1 - \epsilon/2 \). Then:
\[
\widetilde{V}^*(s,g) = (1 + \epsilon/2) V^*(s,g).
\]
By assumption $(A2)$,
\[
d_\theta(s,g) \ge -\widetilde{V}^*(s,g) - \frac{\epsilon}{2}
= - (1 + \epsilon/2) V^*(s,g) - \frac{\epsilon}{2}.
\]
If \( s \neq g \), then \( V^*(s,g) \le -1 \), so:
\[
- (1 + \epsilon/2) V^*(s,g) - \frac{\epsilon}{2} \ge -V^*(s,g),
\]
hence
\[
d_\theta(s,g) \ge -V^*(s,g).
\]
If \( s = g \), then \( V^*(s,g) = 0 \), and by definition of a quasimetric and goal-reaching cost, we also have \( d_\theta(s,g) = \widetilde{V}^*(s,g) = 0 \). Thus, the inequality holds with equality.

\textbf{Upper bound.} Let \( s_0 = s \to s_1 \to \cdots \to s_T = g \) be an optimal path of length \( T = -V^*(s,g) \). Using the triangle inequality:
\[
d_\theta(s,g) \le \sum_{t=0}^{T-1} d_\theta(s_t, s_{t+1}).
\]
Define the interpolation \(\gamma_t(u) = s_t + u(s_{t+1} - s_t)\), for \(u \in [0,1]\). Note that the whole one-step interpolation path lies within the feasible space $\mathcal{K}$. Then:
\[
d_\theta(s_t, s_{t+1})
= \int_0^1 \nabla_s d_\theta(\gamma_t(u), g)^\top (s_{t+1} - s_t) \, du
\le \int_0^1 \|\nabla_s d_\theta(\gamma_t(u), g)\| \cdot \|s_{t+1} - s_t\| \, du.
\]
Since \(\|\nabla_s d_\theta\| \le 1 + \epsilon\), and for $\Delta t$ small enough  \(\|s_{t+1} - s_t\| \le 1\), we get:
\[
d_\theta(s_t, s_{t+1}) \le (1 + \epsilon).
\]
Summing over the \(T\) steps we obtain:
\[
d_\theta(s,g) \le T \cdot (1 + \epsilon) = (1 + \epsilon)\, (-V^*(s,g)).
\]
Leading to the expected bound:
\[
d_\theta(s,g) \le -(1 + \epsilon) V^*(s,g).
\]
\end{proof}
\end{lemma}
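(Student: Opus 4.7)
The plan is to prove the two-sided sandwich inequality by handling the lower and upper bounds separately, each leveraging a different one of the two assumptions, with Lemma~\ref{lem_app:unit_speed_local} supplying the geometric structure (convexity, unit-speed dynamics, unit cost) that makes the two halves compatible. I would also dispatch the degenerate case $s = g$ upfront via quasimetric reflexivity and the terminal condition $V^*(g,g) = 0$, so the remainder of the argument can assume $s \neq g$ and in particular $V^*(s,g) \le -1$.

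For the lower bound, the key observation is that the relaxed cost $\widetilde{\mathcal R} = -1 - \epsilon/2$ scales the value linearly, so $\widetilde V^*(s,g) = (1+\epsilon/2)\,V^*(s,g)$. Applying (A2) gives $d_\theta(s,g) \ge -(1+\epsilon/2)\,V^*(s,g) - \epsilon/2$. I would then close the gap to $-V^*(s,g)$ by plugging in $V^*(s,g) \le -1$: the difference $\bigl[-(1+\epsilon/2)V^*(s,g) - \epsilon/2\bigr] - \bigl[-V^*(s,g)\bigr] = -(\epsilon/2)\,V^*(s,g) - \epsilon/2$ is nonnegative precisely because $-V^*(s,g) \ge 1$. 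This reduces the lower bound to a single algebraic check.

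For the upper bound, the plan is to chain a feasible shortest path $s = s_0 \to s_1 \to \cdots \to s_T = g$ of length $T = -V^*(s,g)$ and apply the quasimetric triangle inequality to obtain $d_\theta(s,g) \le \sum_{t=0}^{T-1} d_\theta(s_t, s_{t+1})$. Each one-step term I would control via a line-integral estimate: parameterize $\gamma_t(u) = s_t + u(s_{t+1} - s_t)$ for $u \in [0,1]$, invoke the fundamental theorem of calculus in the first argument of $d_\theta(\cdot, g)$, and then bound the integrand using Cauchy-Schwarz together with the pointwise gradient cap $\|\nabla_s d_\theta\| \le 1+\epsilon$ from (A1). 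Because the unit-speed dynamics imply $\|s_{t+1} - s_t\| \le 1$ (in the small-$\Delta t$ regime underlying Lemma~\ref{lem_app:unit_speed_local}), each step contributes at most $1+\epsilon$, and summing over $T$ steps yields $d_\theta(s,g) \le T(1+\epsilon) = -(1+\epsilon)\,V^*(s,g)$.

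The main obstacle I expect is the rigorous justification of the line-integral step in the upper bound: I need each interpolating segment $\gamma_t([0,1])$ to lie inside the feasible set $\mathcal K$ so that the gradient bound from (A1) can be integrated along it. This is where the convexity of $K$ imported from Lemma~\ref{lem_app:unit_speed_local} is essential, combined with the unit-speed structure that both validates the per-step displacement bound $\|s_{t+1}-s_t\|\le 1$ and guarantees the optimal path has integer length $T = -V^*(s,g)$. Once these feasibility and regularity ingredients are cleanly invoked from the hypotheses, the two halves of the sandwich reduce to standard applications of Cauchy-Schwarz, the triangle inequality, and the defining properties of $V^*$ and $\widetilde V^*$.
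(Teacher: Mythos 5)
Your proposal is correct and follows essentially the same route as the paper's proof: the lower bound via the linear rescaling $\widetilde V^*=(1+\epsilon/2)V^*$, assumption (A2), and the algebraic check using $V^*(s,g)\le -1$; the upper bound via the triangle inequality along an optimal path and a line-integral estimate of each one-step term under the gradient cap from (A1). Your explicit attention to why each interpolating segment stays in $\mathcal K$ (convexity of $K$) is a point the paper only asserts, but it does not change the argument.
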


Provided Lemma~\ref{lem_app:sandwiched_gradient_bound}, we can now formally restate and prove Theorem~\ref{thm:approximation_theorem_grad}.

\begin{theorem}[Approximate Value Recovery under Gradient Constraint]
\label{thm_app:approximation_theorem_grad}
Define the feasible region
\[
\mathcal{F} = \left\{ (s,g) \in \mathcal{K} \times \mathcal{K} \;:\; V^*(s,g) > -\infty \right\}.
\]
Suppose the conditions of Lemma~\ref{lem_app:unit_speed_local} hold and furthermore
\begin{itemize}
    \item[(i)] \(\{ d_\theta\}_{\theta} \) is a family of quasimetric models, i.e., a set of universal approximators of \( \mathcal{Q}\mathrm{met}(\mathcal{S}) \) in the \( L_\infty \) norm;
    
    \item[(ii)] The model \( d_\theta^* \in \{ d_\theta\}_{\theta}\) is a solution to the constrained optimization problem:
    \[
    \min_{\theta} \max_{\lambda > 0} \; \mathbb{E}_{s,g}\left[-\zeta(d_\theta(s, g))\right] 
    + \lambda \, \mathbb{E}_{s,g} \left[\left( \|\nabla_s d_\theta(s, g)\| - 1 \right)^2 \right] 
    - \lambda \epsilon^2,
    \]
    for some strictly increasing convex function \( \zeta \), such that \( d_\theta^* \in \mathcal{C}_{\mathrm{grad}}^{(\epsilon)} := \left\{ d_\theta : \|\nabla_s d_\theta(s, g)\| \le 1 + \epsilon \;\; \forall (s, g) \in \mathcal{F} \right\} \);
\end{itemize}

Then for all \( (s, g) \in \mathcal{F} \), it holds that:
\[
\boxed{
    -V^*(s, g) \;\le\; d_{\theta^*}(s, g) \;\le\; - (1 + \epsilon) V^*(s, g)
}.
\]

Moreover, for any \( \eta > 0 \), and \( s \sim \mathcal{P}_{\mathrm{state}}, g \sim \mathcal{P}_{\mathrm{goal}} \), we have:
\[
\left| d_{\theta^*}(s, g) + (1 + \epsilon) V^*(s, g) \right| \le \eta
\quad \text{with probability at least} \quad 
1 - \mathcal{O} \left( \frac{\epsilon}{\eta} \cdot \bigl( - \mathbb{E}_{s,g}[V^*(s,g)] \bigr) \right).
\]

\begin{proof}
By Lemma~\ref{lem_app:sandwiched_gradient_bound}, and under the assumptions of the theorem, we have for all \((s, g) \in \mathcal{F}\):
\begin{equation}
    - V^*(s, g) \le d_{\theta^*}(s, g) \le - (1 + \epsilon) V^*(s, g).
    \label{eq:sandwich}
\end{equation}
This bound means the learned quasimetric overestimates the cost (i.e., is more pessimistic than \( -V^* \)), but not more than a factor \(1 + \epsilon\).

We now quantify how close \( d_{\theta^*}(s, g) \) is to the upper bound \( - (1 + \epsilon) V^*(s, g) \) with high probability.

\textbf{Define the bad event.} Let \(\eta > 0\) be a small tolerance. Define the event:
\[
E := \left\{ (s, g) : d_{\theta^*}(s, g) < - (1 + \epsilon) V^*(s, g) - \eta \right\}.
\]
This is the "bad" event where the quasimetric overshoots the pessimistic bound by more than \(\eta\). Let $p$ be the bad event probability \(p = \mathbb{P}[E]\) for all $s \sim \mathcal{P}_{\text{state}}$ and $g \sim \mathcal{P}_{\text{goal}}$.

\textbf{Upper-bound the expectation.} Split the expectation of \(d_{\theta^*}\) over the event \(E\) and its complement \(E^c\):
\[
\mathbb{E}[d_{\theta^*}] 
= \mathbb{E}[d_{\theta^*} \mid E] \cdot p + \mathbb{E}[d_{\theta^*} \mid E^c] \cdot (1 - p).
\]

On \(E\), by definition of the event:
\[
d_{\theta^*}(s, g) \le - (1 + \epsilon) V^*(s, g) - \eta
\quad \Rightarrow \quad 
\mathbb{E}[d_{\theta^*} \mid E] \le - (1 + \epsilon) \mathbb{E}[V^*] - \eta.
\]

On \(E^c\), by the upper bound from \eqref{eq:sandwich}:
\[
d_{\theta^*}(s, g) \le - (1 + \epsilon) V^*(s, g)
\quad \Rightarrow \quad 
\mathbb{E}[d_{\theta^*} \mid E^c] \le - (1 + \epsilon) \mathbb{E}[V^*].
\]

Putting these together:
\[
\begin{aligned}
\mathbb{E}[d_{\theta^*}]
&\le p \cdot \left( - (1 + \epsilon) \mathbb{E}[V^*] - \eta \right) 
+ (1 - p) \cdot \left( - (1 + \epsilon) \mathbb{E}[V^*] \right) \\
&= - (1 + \epsilon) \mathbb{E}[V^*] - p \eta.
\end{aligned}
\]

\textbf{Lower-bound the expectation.} From the lower bound of \eqref{eq:sandwich}:
\[
d_{\theta^*}(s, g) \ge - V^*(s, g)
\quad \Rightarrow \quad
\mathbb{E}[d_{\theta^*}] \ge - \mathbb{E}[V^*].
\]

\textbf{Conclusion.} Putting both bounds on \( \mathbb{E}[d_{\theta^*}] \) together:
\[
- (1 + \epsilon) \mathbb{E}[V^*] - p \eta \ge - \mathbb{E}[V^*]
\quad \Rightarrow \quad
p \le \frac{\epsilon}{\eta} \cdot \left( - \mathbb{E}[V^*] \right).
\]

With probability at least
\[
1 - \frac{\epsilon}{\eta} \cdot \left( - \mathbb{E}[V^*] \right),
\]
the bad event \(E\) does not occur. That is,
\[
d_{\theta^*}(s, g) \ge - (1 + \epsilon) V^*(s, g) - \eta.
\]
Combined with the always-valid upper bound \(d_{\theta^*}(s, g) \le - (1 + \epsilon) V^*(s, g)\), we conclude:
\[
\left| d_{\theta^*}(s, g) + (1 + \epsilon) V^*(s, g) \right| \in [-\eta,0]
\quad \text{with probability at least} \quad 
1 - \mathcal{O} \left( \frac{\epsilon}{\eta} \cdot (-\mathbb{E}[V^*]) \right).
\]
\end{proof}
\end{theorem}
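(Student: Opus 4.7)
My plan is to decompose the proof into two stages: first establish a \emph{deterministic sandwich bound} $-V^*(s,g) \le d_{\theta^*}(s,g) \le -(1+\epsilon)V^*(s,g)$ pointwise on the feasible set $\mathcal F$, and then turn the gap in this sandwich into the stated high-probability estimate via a Markov-type averaging argument that balances the two inequalities against each other in expectation. The sandwich is the substantive content of the theorem; the probability step is essentially bookkeeping once the bound is in hand.

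\paragraph{Step 1: the sandwich bound.} For the \emph{lower} bound, I would introduce the auxiliary MDP with inflated per-step cost $1+\epsilon/2$ whose optimal value satisfies $\widetilde V^*(s,g) = (1+\epsilon/2)\,V^*(s,g)$. By Lemma~\ref{lem:unit_speed_local}, $-V^*$ is $1$-Lipschitz on $\mathcal K$, hence $-\widetilde V^*$ is $(1+\epsilon/2)$-Lipschitz, lies in $\mathcal Q\mathrm{met}(\mathcal K)$, and is feasible for the Eik-LRs constraint in~(\ref{eq:PI-QRL_lagrangian_v1}) at tolerance $\epsilon/2$. The universal-approximation hypothesis then guarantees some $d_{\theta^*}$ with $\|d_{\theta^*}+\widetilde V^*\|_\infty \le \epsilon/2$; combining this with $V^*(s,g)\le -1$ whenever $s\neq g$ (and $d_{\theta^*}(g,g)=0=V^*(g,g)$ at the diagonal) yields $d_{\theta^*}(s,g)\ge -V^*(s,g)$. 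For the \emph{upper} bound, I would pick a near-optimal path $s=s_0\to s_1\to\cdots\to s_T=g$ of length $T=-V^*(s,g)$, apply the quasimetric triangle inequality $d_{\theta^*}(s,g)\le\sum_{t} d_{\theta^*}(s_t,s_{t+1})$, and bound each summand by integrating $\nabla_s d_{\theta^*}$ along the straight-line interpolation $\gamma_t(u)=s_t+u(s_{t+1}-s_t)$ (which remains in $\mathcal K$ by the convexity/feasibility assumptions underlying Lemma~\ref{lem:unit_speed_local}). Since $\|\nabla_s d_{\theta^*}\|\le 1+\epsilon$ and successive states lie within unit separation under the unit-speed dynamics, each term contributes at most $1+\epsilon$, and summing over $T$ steps gives $d_{\theta^*}(s,g)\le (1+\epsilon)\,(-V^*(s,g))$.

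\paragraph{Step 2: from the deterministic bound to the probability statement.} Fix $\eta>0$ and define the bad event $E:=\{(s,g)\in\mathcal F : d_{\theta^*}(s,g) < -(1+\epsilon)V^*(s,g)-\eta\}$ with $p:=\mathbb P[E]$ under $s\sim\mathcal P_{\mathrm{state}}, g\sim\mathcal P_{\mathrm{goal}}$. Splitting the expectation and using the Step 1 bounds on each piece gives
\begin{equation*}
\mathbb E[d_{\theta^*}] \le p\bigl(-(1+\epsilon)\mathbb E[V^*]-\eta\bigr) + (1-p)\bigl(-(1+\epsilon)\mathbb E[V^*]\bigr) = -(1+\epsilon)\mathbb E[V^*] - p\eta,
\end{equation*}
while the pointwise lower bound yields $\mathbb E[d_{\theta^*}]\ge -\mathbb E[V^*]$. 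Chaining these two inequalities produces $-\mathbb E[V^*]\le -(1+\epsilon)\mathbb E[V^*] - p\eta$, which rearranges to $p\le (\epsilon/\eta)\bigl(-\mathbb E[V^*]\bigr)$. Combined with the always-valid upper bound of Step 1, the complement event gives the announced two-sided estimate $|d_{\theta^*}(s,g)+(1+\epsilon)V^*(s,g)|\in[-\eta,0]$ with the stated probability.

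\paragraph{Main obstacle.} The delicate point is the \emph{simultaneous} satisfaction of two requirements in the lower bound: the target of the universal-approximation argument must be both a quasimetric \emph{and} admissible for the Eikonal gradient constraint at tolerance $\epsilon$. This is exactly where Lemma~\ref{lem:unit_speed_local} is indispensable: without the $1$-Lipschitz property of $d^*$, the inflated target $-\widetilde V^*$ need not satisfy $\|\nabla(-\widetilde V^*)\|\le 1+\epsilon$, and the two hypotheses of the theorem could be vacuous (no such $d_{\theta^*}$ would exist). Care will also be required to handle the diagonal $s=g$ where quasimetric values collapse to zero and to verify that the straight-line interpolation used in the upper bound stays inside $\mathcal K$; both issues are handled by restricting attention to the convex compact subset $K$ of Lemma~\ref{lem:unit_speed_local} and by the convention $c(g,g)=0$ in Assumption~\ref{assumption:cost_reg}.
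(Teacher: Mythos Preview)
Your proposal is correct and follows essentially the same approach as the paper: the sandwich bound you outline in Step~1 (inflated-cost auxiliary MDP for the lower bound, triangle inequality plus straight-line gradient integration for the upper bound) is exactly the content of the paper's Lemma~\ref{lem_app:sandwiched_gradient_bound}, and your Step~2 Markov-type averaging argument matches the paper's proof verbatim, including the definition of the bad event $E$ and the chaining of the two expectation bounds. Your identification of the ``main obstacle'' (the need for Lemma~\ref{lem:unit_speed_local} to make the gradient constraint and the quasimetric approximation simultaneously satisfiable) is also the same point the paper emphasizes.
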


\newpage
\section{Didactic example: joint benefits of hierarchy and quasimetric}
\label{sec_app:didactic example}

In the following we provide additional theoretical results that justify the use of a quasimetric value function together with a hierarchical actor.

\begin{wrapfigure}{r}{0.30\textwidth}
    \centering
    \vspace{-12pt}
    \begin{subfigure}[t]{0.2\textwidth}
        \includegraphics[width=\linewidth]{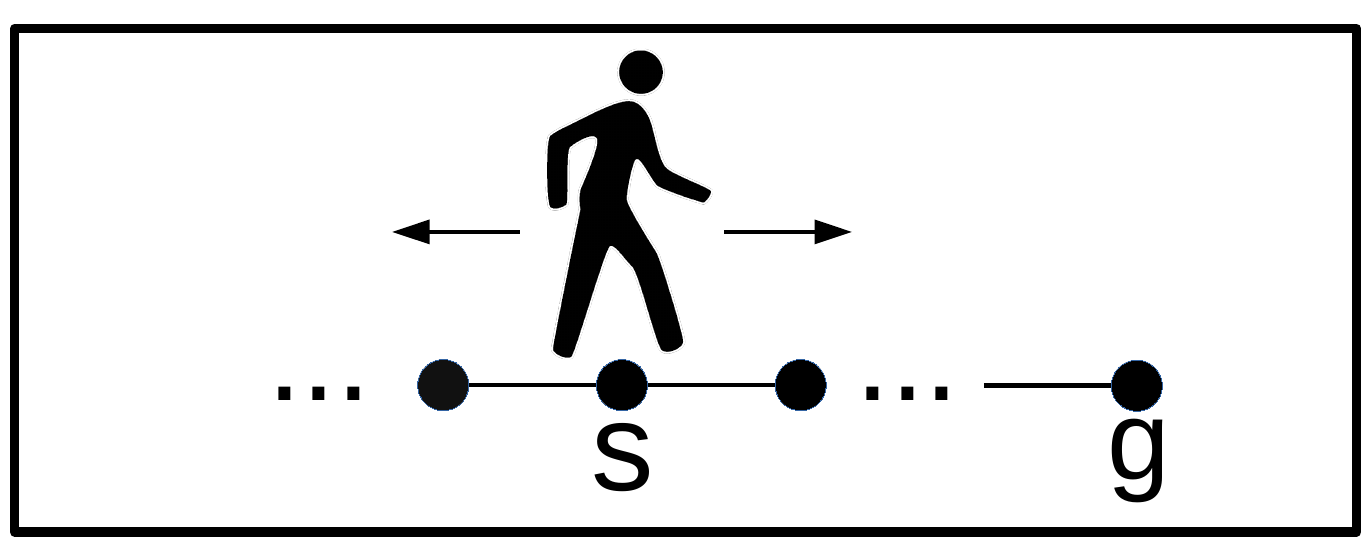}
        \label{fig_app:1d-env}
    \end{subfigure}
    \vspace{-12pt}
    \caption{Toy environment.}
    \label{fig_app:1d-env}
    \vspace{-10pt}
\end{wrapfigure}
We consider a goal-conditioned value function \( V(s,g) \) in the 1D environment in Fig.~\ref{fig_app:1d-env}, where an agent moves toward a fixed goal state \( g \) from an initial state \( s \), making decisions based on \( V(s,g) \). The true optimal value function (assuming a step cost of \(-1\)) is given by:
\[V^*(s,g) = -|s-g| = -T.\]

The learned (unprojected) noisy value function follows a multiplicative Gaussian noise model:
\begin{equation}
    \hat{V}(s,g) = V^*(s,g) (1 + \sigma Z_{s,g}), \quad Z_{s,g} \sim \mathcal{N}(0,1),
    \label{eq:gaussian_noise_value}
\end{equation}

with noise magnitude \( \sigma > 0 \).

We then apply an approximate quasimetric projection to enforce triangle inequality constraints:
\[\tilde{V}(s,g) \geq \tilde{V}(s,s_1) + \tilde{V}(s_1,g), \quad \forall s, s_1, g.\]
This projection modifies the noise structure by decomposing long-horizon estimates into 1-step segments:
\begin{equation}
    \tilde{V}(s,g) = \sum_{i=s}^{g-1} \tilde{v}(i), \quad \tilde{v}(i) = -1 - \sigma Z_i, \quad Z_i \sim \mathcal{N}(0,1).
    \label{eq:quasimetric_proj_value}
\end{equation}

In the following, we state Proposition~\ref{prop_app:quasi_proj_1D}, which provides a novel bound on the probability of actor's error under quasimetric value function parameterization. We provide a complete proof for Proposition~\ref{prop_app:quasi_proj_1D}, and include a brief comparison with the corresponding results obtained without quasimetric projection in~\cite{park2024hiql}.

\begin{proposition}
\label{prop_app:quasi_proj_1D}
In the environment of Fig.~\ref{fig_app:1d-env}, after applying the quasimetric projection described above, the error probabilities of the flat policy \( \pi \) and the hierarchical policy \( \pi^l \circ \pi^h \) are respectively bounded by:
\begin{align*}
\mathcal{E}(\pi) &\leq \Phi\left(-\frac{\sqrt{2}}{\sigma\sqrt{T(1-\rho_{\text{flat}})}}\right), \\
\mathcal{E}(\pi^l \circ \pi^h) &\leq \Phi\left(-\frac{\sqrt{2}k}{\sigma\sqrt{T(1 - \rho_{\text{high}})}}\right) + \Phi\left(-\frac{\sqrt{2}}{\sigma\sqrt{k(1-\rho_{\text{low}})}}\right),
\end{align*}
where \( T \) is the full horizon, $k$ is the number of steps after which $\pi^h$ selects a subgoal, \( \Phi \) denotes the cumulative distribution function of the standard normal distribution, \( \Phi(x) = \mathbb{P}[Z \leq x] = \frac{1}{\sqrt{2\pi}} \int_{-\infty}^{x} e^{-t^2/2} \, dt \), and \( \rho \in [0,1] \) is the correlation coefficient between the total noise accumulated along the two neighboring values that the policy compares.
\end{proposition}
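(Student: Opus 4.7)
The plan is to reduce each policy's decision at a single step to a binary test between two Gaussian random variables and then invoke the standard Gaussian tail formula. Concretely, if a policy at state $s$ with target $g$ must rank two candidate next-states whose value estimates $(V_1, V_2)$ are jointly Gaussian with means $(\mu_1, \mu_2)$, equal (or approximately equal) variances $\sigma_V^2$, and correlation $\rho$, then the probability of choosing the wrong candidate is $\mathbb{P}[V_1 - V_2 < 0] = \Phi\!\bigl(-(\mu_1-\mu_2)/\sqrt{2\sigma_V^2(1-\rho)}\bigr)$. All three bounds in the proposition will be instances of this identity, so the work is in computing the appropriate $\mu_i$ and $\sigma_V^2$ from the quasimetric decomposition \eqref{eq:quasimetric_proj_value}, and then using a union bound to combine the hierarchical errors.

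\textbf{Flat policy.} First I would consider the flat policy $\pi$, which compares $\tilde V(s{+}1,g)$ against $\tilde V(s{-}1,g)$ (assuming without loss of generality $g>s$). Plugging in the telescoping form $\tilde V(s',g)=\sum_{i=s'}^{g-1}(-1-\sigma Z_i)$, the means are $-(T-1)$ and $-(T+1)$, so the signal gap is $\mu_1-\mu_2 = 2$. The noises are partial sums over overlapping index ranges with variances $\sigma^2(T-1)$ and $\sigma^2(T+1)$, and covariance $\sigma^2(T-1)$ coming from shared terms. Parameterizing this through the stated correlation $\rho_{\mathrm{flat}}$ and using the approximation $\sigma_V^2 \approx \sigma^2 T$, one obtains $\mathrm{Var}(\tilde V(s{+}1,g)-\tilde V(s{-}1,g)) \leq 2\sigma^2 T(1-\rho_{\mathrm{flat}})$. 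Applying the Gaussian tail formula with gap $2$ and this variance yields exactly $\Phi\!\bigl(-\sqrt{2}/(\sigma\sqrt{T(1-\rho_{\mathrm{flat}})})\bigr)$.

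\textbf{Hierarchical policy.} For $\pi^{l}\!\circ\!\pi^{h}$ I would split the error into high-level subgoal selection and low-level execution and bound each by a union argument. The high-level actor compares candidate subgoals at distance $k$ in either direction, so the same computation as above applies but with mean gap $2k$ (since the two candidates sit at distances $T-k$ and $T+k$ from $g$) and joint variance $2\sigma^2 T(1-\rho_{\mathrm{high}})$, producing the first Gaussian term. The low-level actor, once conditioned on a correctly chosen subgoal at distance $k$, faces a structurally identical problem to the flat policy but with effective horizon $k$ instead of $T$; reapplying the flat argument with $T \mapsto k$ and $\rho_{\mathrm{flat}} \mapsto \rho_{\mathrm{low}}$ gives the second term. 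A union bound then yields $\mathcal E(\pi^{l}\!\circ\!\pi^{h}) \leq \mathcal E(\pi^{h})+\mathcal E(\pi^{l})$ as stated.

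\textbf{Main obstacle.} The routine calculations are the Gaussian algebra; the delicate point is the parameterization. The variances of the two candidate values in each comparison are not exactly equal (they differ by $O(1)$ for the flat case and $O(k)$ for the hierarchical high-level), so writing the variance of the difference uniformly as $2\sigma^2 T(1-\rho)$ with a single correlation $\rho$ requires either (a) the approximation $\sqrt{(T-c)(T+c)} \approx T$ valid for $T\gg c$, or (b) defining $\rho$ directly as $\mathrm{Cov}/(\sigma^2 T)$ so that the identity $\mathrm{Var}(V_1-V_2) = 2\sigma^2 T(1-\rho)$ becomes tight by construction. I would adopt the second convention, matching the proposition's wording ``the correlation coefficient between the total noise accumulated along the two neighboring values,'' so that the bounds hold as stated without hidden $o(1)$ terms. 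Everything else, including the Gaussianity of partial sums of i.i.d.\ $Z_i$ and the monotonicity $\Phi(-x)$ in $x$, is standard.
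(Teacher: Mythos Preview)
Your proposal is correct and follows essentially the same approach as the paper: both reduce each decision to a Gaussian comparison with mean gap $2$ (flat/low) or $2k$ (high), parameterize the variance of the difference as $2\sigma^2 T(1-\rho)$ (or $2\sigma^2 k(1-\rho_{\mathrm{low}})$), apply the Gaussian tail formula, and combine the hierarchical terms by a union bound. The only minor divergence is your ``main obstacle'' resolution: the paper adopts your option~(a), explicitly writing $\mathrm{Var}(\Delta)=\sigma^2[(T{+}c)+(T{-}c)-2\rho\sqrt{(T{+}c)(T{-}c)}]\approx 2\sigma^2 T(1-\rho)$ via $\sqrt{T^2-c^2}\approx T$, whereas you prefer option~(b) and absorb the discrepancy into the definition of $\rho$; both are acceptable and yield the stated bounds.
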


\begin{proof}
The proof follows similarly to \cite[Proposition 4.1]{park2024hiql}, however under the approximate quasimetric projection model.

\textbf{Flat policy.}
The flat policy chooses between \( \tilde{V}(s-1, g) \) and \( \tilde{V}(s+1, g) \), so define the difference:
\[
\Delta_{\text{flat}} = \tilde{V}(s+1,g) - \tilde{V}(s-1,g) = 2 - \sigma(S_2 - S_1),
\]
where \( S_1 = \sum_{i=1}^{T+1} Z_i \), \( S_2 = \sum_{i=1}^{T-1} Z_i' \), and \( \rho_{\text{flat}} = \mathrm{Corr}(S_1, S_2) \). Then:
\[
\mathrm{Var}(\Delta_{\text{flat}}) = \sigma^2 \left[(T+1)+(T-1) - 2\rho_{\text{flat}}\sqrt{(T+1)(T-1)}\right] \approx 2\sigma^2 T (1 - \rho_{\text{flat}}),
\]
yielding:
\[
\mathcal{E}(\pi) = \mathbb{P}[\Delta_{\text{flat}} < 0] \leq \Phi\left(-\frac{2}{\sigma\sqrt{2T(1 - \rho_{\text{flat}})}}\right).
\]

\textbf{Hierarchical policy (high-level).}
Let the high-level policy compare subgoals at \( s-k \) and \( s+k \). Define:
\[
\Delta_{\text{high}} = \tilde{V}(s+k, g) - \tilde{V}(s-k, g) = 2k - \sigma(S_2 - S_1),
\]
Let \( S_1 = \sum_{i=1}^{T + k} Z_i \), \( S_2 = \sum_{i=1}^{T - k} Z'_i \), and assume \( \mathrm{Corr}(S_1, S_2) = \rho_{\text{high}} \). Then:
\[
\mathrm{Var}(\Delta_{\text{high}}) = \sigma^2 \, \mathrm{Var}(S_2 - S_1) = \sigma^2 \left[ (T + k) + (T - k) - 2 \rho_{\text{high}} \sqrt{(T + k)(T - k)} \right],
\]
\[
\mathrm{Var}(\Delta_{\text{high}}) = \sigma^2 \left[ 2T - 2 \rho_{\text{high}} \sqrt{T^2 - k^2} \right].
\]
When \( T \gg k \), we use \( \sqrt{T^2 - k^2} \approx T - \frac{k^2}{2T} \), yielding:
\[
\mathrm{Var}(\Delta_{\text{high}}) \approx 2\sigma^2 T (1 - \rho_{\text{high}}).
\]
so:
\[
\mathcal{E}(\pi^h) \leq \Phi\left(-\frac{2k}{\sigma\sqrt{2T(1 - \rho_{\text{high}})}}\right).
\]

\textbf{Hierarchical policy (low-level).}
For reaching a chosen subgoal (horizon \(k\)), define:
\[
\Delta_{\text{low}} = \tilde{V}(s+1, s+k) - \tilde{V}(s-1, s+k) = 2 - \sigma(S_2 - S_1),
\]
where each sum is over \(k \pm 1\) terms. Then:
\[
\mathrm{Var}(\Delta_{\text{low}}) \approx 2\sigma^2 k (1 - \rho_{\text{low}}),
\]
and:
\[
\mathcal{E}(\pi^l) \leq \Phi\left(-\frac{2}{\sigma\sqrt{2k(1 - \rho_{\text{low}})}}\right).
\]

\textbf{Total hierarchical error.}
Using a union bound:
\[
\mathcal{E}(\pi^l \circ \pi^h) \leq \mathcal{E}(\pi^h) + \mathcal{E}(\pi^l),
\]
which concludes the result.
\end{proof}

In summary, under quasimetric projection we get from Proposition~\ref{prop_app:quasi_proj_1D}:
\begin{align}
\tilde{\mathcal{E}}(\pi) &\leq \Phi\left(-\frac{\sqrt{2}}{\sigma\sqrt{T(1-\rho_{\text{flat}})}}\right), \label{eq:Q_f_actor}\\
\tilde{\mathcal{E}}(\pi^l \circ \pi^h) &\leq \Phi\left(-\frac{\sqrt{2}k}{\sigma\sqrt{T(1 - \rho_{\text{high}})}}\right) + \Phi\left(-\frac{\sqrt{2}}{\sigma\sqrt{k(1-\rho_{\text{low}})}}\right). \label{eq:Q_h_actor}
\end{align}
Without quasimetric projection from \cite[Proposition 4.1]{park2024hiql} we get:
\begin{align}
\mathcal{E}(\pi) &\leq \Phi\left(-\frac{\sqrt{2}}{\sigma \sqrt{T^2 + 1}}\right), \label{eq:f_actor}\\
\mathcal{E}(\pi^l \circ \pi^h) &\leq \Phi\left(-\frac{\sqrt{2}k}{\sigma \sqrt{T^2 + k^2}}\right) + \Phi\left(-\frac{\sqrt{2}}{\sigma\sqrt{k^2+1}}\right). \label{eq:h_actor}
\end{align}
Eik-HiQRL gets
\begin{align}
\mathcal{E}_{\text{Eik-HiQRL}}(\pi^l \circ \pi^h) &\leq \Phi\left(-\frac{\sqrt{2}k}{\sigma\sqrt{T(1 - \rho_{\text{high}})}}\right) + \Phi\left(-\frac{\sqrt{2}}{\sigma\sqrt{k^2+1}}\right). \label{eq:bound_Pi-HiQRL}
\end{align}

From Proposition~\ref{prop_app:quasi_proj_1D}, the tightest upper bound is obtained when quasimetric projection is used for both $\pi^l$ and $\pi^h$, corresponding to the bound in Eq.~(\ref{eq:Q_h_actor}). This strategy is implemented in our Eik-QRL and QRL:hi variants (see Appendix~\ref{sec_app:hierarchical_quasimetric_variants}), and it explains the strong empirical results observed in the idealized \texttt{pointmaze} setup (Table~\ref{tab_app:comparison_with_QRL_hi} in Appendix~\ref{sec_app:additional_experiments}). In higher-dimensional spaces such as \texttt{antmaze}, however, accurate quasimetric projections become substantially harder to obtain. This difficulty manifests as increased noise $\sigma$ in Eq.~(\ref{eq:quasimetric_proj_value}), which in turn explains the challenges faced by all purely quasimetric-based algorithms, whether PDE-constrained or not, when applied to high-dimensional state spaces.

In this analysis, the correlation coefficient $\rho$ is a modeling parameter, and, similarly to $\sigma$ in Eqs.~(\ref{eq:gaussian_noise_value})–(\ref{eq:quasimetric_proj_value}), it is typically not estimated from data in practical environments. In Proposition~\ref{prop_app:quasi_proj_1D}, we introduce $\rho$ for correctness: after quasimetric projection, the noise terms on neighboring segments become correlated, and this correlation must be reflected in the variance of the value difference. In the illustrative example of Fig.~\ref{fig_app:proposition_bounds}, we simply fix $\rho = 0.01$. Choosing a small correlation coefficient makes the bound more conservative, corresponding to a harder regime where shared noise between neighboring estimates is limited, and random fluctuations have a stronger impact on value comparisons.

Finally, Fig.~\ref{fig_app:proposition_bounds} compares the bounds derived above. Provided an appropriately chosen subgoal step $k$, Eik-HiQRL achieves performance comparable to fully quasimetric agents. Moreover, hierarchical policies consistently outperform their flat counterparts, underscoring the benefits of temporal abstraction in long-horizon tasks.
\begin{figure}
    \centering
    \includegraphics[width=\linewidth]{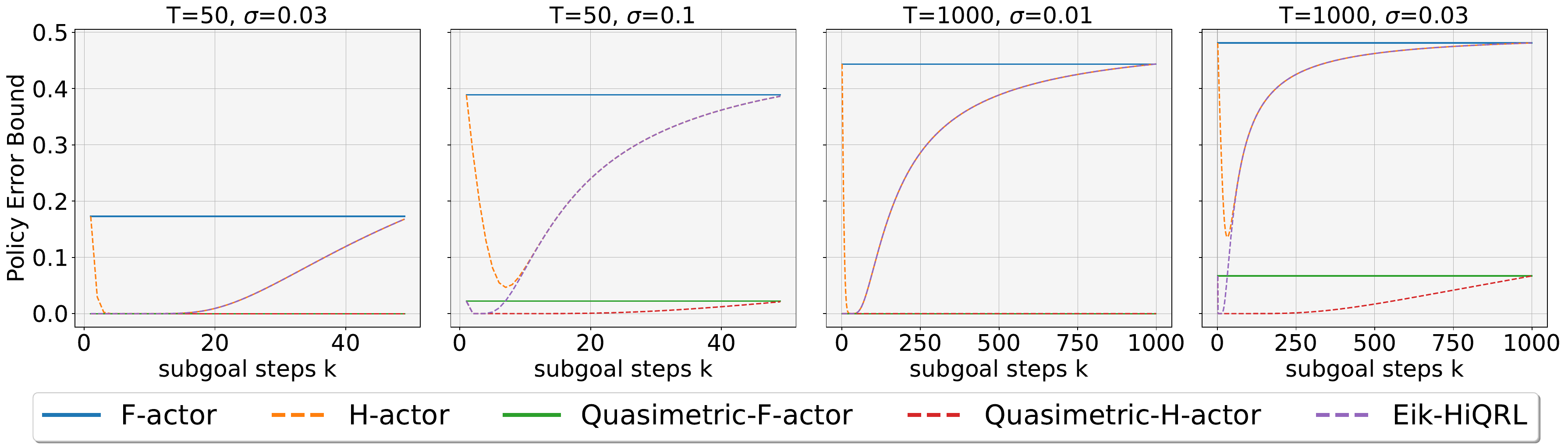}
    \caption{Comparison of policy errors in the didactic environment in Fig.~\ref{fig_app:1d-env}. In the figure legend, F-actor corresponds to the bound in~(\ref{eq:f_actor}), H-actor to~(\ref{eq:h_actor}), Quasimetric-F-actor to~(\ref{eq:Q_f_actor}), Quasimetric-H-actor to~(\ref{eq:Q_h_actor}), and Eik-HiQRL to~(\ref{eq:bound_Pi-HiQRL}).}
    \label{fig_app:proposition_bounds}
\end{figure}

\section{Eik-HiQRL}
\label{sec_app:Eik_HiQRL}

\begin{figure}
    \centering
    \includegraphics[width=\linewidth]{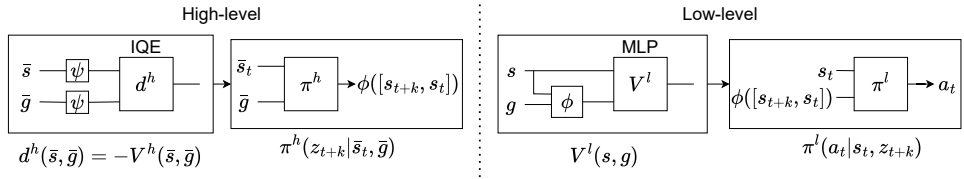}
    \caption{Diagram of the Eik-HiQRL architecture. The high-level planning module consists of a quasimetric model \( d^h \), parameterized as an IQE~\citep{wang2022improved}, which is trained to optimize the physics-informed objective in~(\ref{eq_app:PI-QRL}), and a high-level policy \( \pi^h \) trained via the advantage-weighted objective in~(\ref{eq_app:pi_hi_loss}). The low-level module comprises a value function \( V^l \) and a goal representation network \( \phi \), both instantiated as MLPs and trained to minimize the Implicit V-Learning loss in~(\ref{eq_app:expectile_regre_V_learning}). The low-level policy \( \pi^l \) is trained to maximize the advantage-weighted regression objective in~(\ref{eq_app:pi_lo_loss}).
}
    \label{fig_app:Pi-HiQRL}
\end{figure}

In the following, we provide a comprehensive overview of the Eik-HiQRL algorithm, including full implementation details, complete pseudo-code, and the main hyperparameters used in our experiments. Additional resources and extended documentation are available in our \href{https://anonymous.4open.science/r/Pi-HiQRL-644E/README.md}{GitHub repository}.

\subsection{Overview}
Refer to Fig.~\ref{fig_app:Pi-HiQRL} for a graphical illustration of the Eik-HiQRL architecture. The high-level planning module consists of a quasimetric model parameterized by an IQE~\citep{wang2022improved}, together with a high-level policy that outputs a subgoal embedding \( z_{t+k} = \phi([s_{t+k}, s_t]) \). The low-level control module includes a value function \( V^l \) parameterized by an MLP, as well as a goal representation network \( \phi \), also instantiated as an MLP with length normalization applied to its final layer. The low-level policy takes the current state and the subgoal embedding as input and outputs an action to interact with the environment.

Recall that the value decomposition described above enables the construction of a high-level abstract space \( \bar{\mathcal{S}} \). Accordingly, in Fig.~\ref{fig_app:Pi-HiQRL}, we denote the inputs to the high-level  module by \( \bar{s}, \bar{g} \in \bar{\mathcal{S}} \).

The high-level value function \( d_{\bm{\theta}} \) (denoted as \( d^h \) in Fig.~\ref{fig_app:Pi-HiQRL}) is trained to solve the following Eik-QRL optimization problem:
\begin{align}
    \begin{split}
        \max_{\bm{\theta}} \underbrace{\mathbb{E}_{s, g} \left[
        \zeta(d_{\bm{\theta}}(s, g))\right]}_{\text{Global Relationships (GRs)}} \quad \text{s.t.} \quad 
        \underbrace{\mathbb{E}_{s \sim \mathcal{P}_{\text{state}},\; g \sim \mathcal{P}_{\text{goal}}}\left[(\|\nabla_s d_{\bm{\theta}}(s, g)\| - 1)^2\right]}_{\text{Eikonal-Local Relationships (Eik-LRs)}} \leq \epsilon^2,
    \end{split}
    \label{eq_app:PI-QRL}
\end{align}
where \( \mathcal{P}_{\text{state}} \) and \( \mathcal{P}_{\text{goal}} \) denote the marginal state and goal distributions induced by the offline dataset, and \( \zeta \) is a monotonically increasing transformation that dampens large distances \( d_{\bm{\theta}}(s, g) \).

The low-level value function \( V_{\bm{\theta}_V} \) (referred to as \( V^l \) in Fig.~\ref{fig_app:Pi-HiQRL}) and the goal representation network $\phi_{\bm{\theta}_{\phi}}$ ($\phi$ in Fig.~\ref{fig_app:Pi-HiQRL}) are trained by minimizing the following Implicit V-Learning objective:
\begin{align}
    \begin{split}
        \mathcal{L}_V(\bm{\theta}_V, \bm{\theta}_{\phi}) = \mathbb{E}_{(s,s') \sim \mathcal{D},\; g \sim \mathcal{P}_g}\Big[L_2^{\iota}\big(\mathcal{R}(s,g) + \gamma V_{\bar{\bm{\theta}}_V} (s', z_{\phi}') - V_{\bm{\theta}_V} (s, z_{\phi})\big)\Big],
    \end{split}
    \label{eq_app:expectile_regre_V_learning}
\end{align}
where \( z_{\phi} \) and \( z'_{\phi} \) are the goal embeddings produced by \( \phi_{\bm{\theta}_{\phi}} \), \( \bar{\bm{\theta}}_V \) are the parameters of the target value network~\citep{mnih2013playing}, and \( L_2^{\iota}(x) = |\iota - \mathbbm{1}(x < 0)|x^2 \) is the expectile loss with \( \iota \in [0.5, 1] \).

Policy extraction is performed via advantage-weighted regression for both the high-level policy \( \pi^{h}_{\bm{\theta}_{h}} \) and the low-level policy \( \pi^{l}_{\bm{\theta}_{l}} \), using the following objectives:
\begin{align}
    J_{\pi^{h}}(\bm{\theta}_{h}) &= \mathbb{E}_{(s_t,s_{t+k}) \sim \mathcal{D},\; g \sim \mathcal{P}_g}\Big[\exp\big(\beta \cdot \tilde A^{h}(s_t, s_{t+k}, g)\big)\log \pi^{h}_{\bm{\theta}_{h}}(s_{t+k} \mid s_t, g) \Big], \label{eq_app:pi_hi_loss}\\
    J_{\pi^{l}}(\bm{\theta}_{l}) &= \mathbb{E}_{(s_t, a_t, s_{t+1}, s_{t+k}) \sim \mathcal{D}}\Big[\exp\big(\beta \cdot \tilde A^{l}(s_t, a_t, s_{t+k})\big)\log \pi_{\bm{\theta}_{l}}^{l}(a_t \mid s_t, s_{t+k}) \Big], \label{eq_app:pi_lo_loss}
\end{align}
where \( \beta \) is an inverse temperature hyperparameter, and the advantages are approximated as \( \tilde A^{h}(s_t, s_{t+k}, g) = d_{\bm{\theta}}(s_{t}, g) - d_{\bm{\theta}}(s_{t+k}, g) \) and \( \tilde A^{l}(s_t, a_t, s_{t+k}) = V_{\bm{\theta}_V}(s_{t+1}, s_{t+k}) - V_{\bm{\theta}_V}(s_{t}, s_{t+k}) \).

\subsection{Implementation details}
In this section, we describe how the optimization problem in~(\ref{eq_app:PI-QRL}) is solved in practice. We propose two main variants: the first, which we refer to as Eik-QRL-$\lambda$, and a simpler unconstrained version, Eik-QRL. 

The Eik-QRL-$\lambda$ variant adopts a constrained optimization approach based on the Lagrangian relaxation of the original problem. Specifically, it formulates a minimax objective that introduces a dual variable \( \lambda \geq 0 \) to explicitly enforce the Eikonal constraint:
\begin{align}
    \begin{split}
        \min_{\bm{\theta}}\max_{\lambda \geq 0} \ \mathbb{E}_{s \sim \mathcal{P}_{\text{state}},\; g \sim \mathcal{P}_{\text{goal}}}\bigg[-\zeta(d_{\bm{\theta}}(s, g)) + \lambda\left(\|\nabla_s d_{\bm{\theta}}(s, g)\| - 1\right)^2 - \lambda \epsilon^2\bigg],
    \end{split}
    \label{eq_app:Pi-QRL-lambda_lagrangian}
\end{align}
This formulation ensures a principled trade-off between maximizing global value alignment (through the first term) and enforcing local Lipschitz continuity via the constraint. However, it requires careful tuning of dual ascent steps and often leads to unstable optimization dynamics due to the interplay between the primal and dual variables.

By contrast, the Eik-QRL variant simplifies the training process by removing the explicit constraint and instead incorporating the local regularization term directly into the objective as a soft penalty:
\begin{align}
    \begin{split}
        \min_{\bm{\theta}} \ \mathbb{E}_{s \sim \mathcal{P}_{\text{state}},\; g \sim \mathcal{P}_{\text{goal}}}\bigg[-\zeta(d_{\bm{\theta}}(s, g)) + \left(\|\nabla_s d_{\bm{\theta}}(s, g)\| - 1\right)^2\bigg].
    \end{split}
    \label{eq_app:Pi-QRL_lagrangian}
\end{align}
This relaxed formulation eliminates the need for dual variable updates and simplifies implementation, while still encouraging the learned quasimetric to satisfy the Eikonal constraint approximately. Although it lacks explicit constraint satisfaction guarantees, we find it to be more robust in practice and easier to tune.

We evaluate both implementations in the ablation study reported in Section~\ref{sec_app:additional_ablation}. Our final algorithm adopts the simpler formulation in~(\ref{eq_app:Pi-QRL_lagrangian}), as it demonstrated superior training stability and consistently higher performance with both flat and hierarchical actors. Refer to Table~\ref{tab_app:additional_ablation} for a summary of these results.

\subsection{Pseudo-code and Hyperparameters}
In this section, we provide implementation-level details necessary to reproduce our results. We first present, in Algorithm~\ref{alg_app:Pi-HiQRL}, the full pseudo-code for Eik-HiQRL, covering both the high-level planning and low-level control modules. This includes training routines for the quasimetric model, value functions, and policies. We then summarize in Table~\ref{tab_app:Hyper} the key hyperparameters used in our experiments. These details reflect the final configuration that achieved the best performance across our benchmarks. For further information and code, please refer to our \href{https://anonymous.4open.science/r/Pi-HiQRL-644E/README.md}{GitHub repository}.

\begin{algorithm}
   \caption{Eikonal-Constrained Hierarchical Quasimetric RL (Eik-HiQRL)}
   \label{alg_app:Pi-HiQRL}
\begin{algorithmic}
   \STATE {\bfseries Input:} Offline dataset $\mathcal{D}$, high-level quasimetric model $d_{\bm{\theta}}$, low-level value function $V_{\bm{\theta}_V}$, target value function $V_{\bar{\bm{\theta}}_V}$, goal representation network $\phi_{\bm{\theta}_{\phi}}$, high-level policy $\pi^{h}_{\bm{\theta}_{h}}$, low-level policy $\pi^{l}_{\bm{\theta}_{l}}$, expectile factor $\iota$, discount factor $\gamma$, inverse temperature parameter $\beta$, learning rates $\alpha_d$, $\alpha_V$, $\alpha_{h}$, $\alpha_{l}$, target update rate $\tau$
   \WHILE{not converged}
   \STATE $(s_t, s_{t+1}, g) \sim \mathcal{D}$ 
   \STATE Update $d_{\bm{\theta}}$ optimizing the problem in (\ref{eq_app:Pi-QRL_lagrangian}) with learning rate $\alpha_d$
   \STATE Update $V_{\bm{\theta}_V}$ and $\phi_{\bm{\theta}_{\phi}}$ minimizing $\mathcal{L}_V(\bm{\theta}_V, \bm{\theta}_{\phi})$ in (\ref{eq_app:expectile_regre_V_learning}) with learning rate $\alpha_V$
   \STATE $\bar{\bm{\theta}}_V \gets (1-\tau)\bar{\bm{\theta}}_V + \tau \bm{\theta}_V$
   \ENDWHILE
   \WHILE{not converged}
   \STATE $(s_t, s_{t+k}, g) \sim \mathcal{D}$ 
   \STATE Update $\pi^{h}_{\bm{\theta}_{h}}$ maximizing $J_{\pi^{h}}(\bm{\theta}_{h})$ in (\ref{eq_app:pi_hi_loss}) with learning rate $\alpha_{h}$
   \ENDWHILE
   \WHILE{not converged}
   \STATE $(s_t, a_t, s_{t+1}, s_{t+k}) \sim \mathcal{D}$ 
   \STATE Update $\pi^{l}_{\bm{\theta}_{l}}$ maximizing $J_{\pi^{l}}(\bm{\theta}_{l})$ in (\ref{eq_app:pi_lo_loss}) with learning rate $\alpha_{l}$
   \ENDWHILE
\end{algorithmic}
\end{algorithm}

\begin{table}
\centering
\caption{Hyperparameter values for Eik-HiQRL.}
\label{tab_app:Hyper}
\begin{tabular}{c c c c}\toprule
\multicolumn{2}{l}{Hyperparameter Name} & \multicolumn{2}{c}{Value}\\
\cmidrule(lr){1-2} \cmidrule(lr){3-4}
\multicolumn{2}{l}{Actor $(\pi^{h}_{\bm{\theta}_{h}}, \pi^{l}_{\bm{\theta}_{l}})$ hidden dims} & \multicolumn{2}{c}{$(512, 512, 512)$} \\
\multicolumn{2}{l}{Values $(d_{\bm{\theta}}, V_{\bm{\theta}_V})$ hidden dims} & \multicolumn{2}{c}{$(512, 512, 512)$} \\
\multicolumn{2}{l}{Goal representation network $(\phi_{\bm{\theta}_{\phi}})$ hidden dims} & \multicolumn{2}{c}{$(512, 512, 512)$} \\
\multicolumn{2}{l}{Quasimetric type} & \multicolumn{2}{c}{IQE} \\
\multicolumn{2}{l}{Quasimetric latent dim $(m)$} & \multicolumn{2}{c}{$512$} \\
\multicolumn{2}{l}{Subgoal representation dim $(\text{dim}(\mathcal{Z}))$} & \multicolumn{2}{c}{$10$} \\
\multicolumn{2}{l}{Discount factor $(\gamma)$} & \multicolumn{2}{c}{$0.99$} \\
\multicolumn{2}{l}{Batch size $(B)$} & \multicolumn{2}{c}{$1024$} \\
\multicolumn{2}{l}{Optimizer} & \multicolumn{2}{c}{Adam}\\
\multicolumn{2}{l}{Learning rates $\alpha_d$, $\alpha_V$, $\alpha_{h}$, $\alpha_{l}$} & \multicolumn{2}{c}{$3\cdot10^{-4}$}\\
\multicolumn{2}{l}{Target update rate $(\tau)$} & \multicolumn{2}{c}{$0.005$}\\
\multicolumn{2}{l}{Expectile factor $(\iota)$} & \multicolumn{2}{c}{$0.7$}\\
\multicolumn{2}{l}{Inverse temperature parameter $(\beta)$} & \multicolumn{2}{c}{$3.0$}\\
\bottomrule
\end{tabular}
\end{table}

\section{Eik-QRL, Eik-QRL-$\lambda$ and QRL:hi}
\label{sec_app:hierarchical_quasimetric_variants}

In this section, we describe the hierarchical variants Eik-QRL, Eik-QRL-$\lambda$ and QRL:hi, which maintain a single value function but incorporate hierarchical actors that operate over subgoals. Introducing hierarchical policies requires additional care, primarily due to the need for subgoal encoding to enable effective high-level planning. An illustrative diagram of this hierarchical architecture is provided in Fig.~\ref{fig_app:Pi-QRL:hi}.

Each of these variants employs a single quasimetric value function trained to optimize a different objective: QRL:hi minimizes~(\ref{eq:QRL_lagrangian}), Eik-QRL minimizes~(\ref{eq_app:Pi-QRL_lagrangian}), and Eik-QRL-$\lambda$ minimizes the constrained form~(\ref{eq_app:Pi-QRL-lambda_lagrangian}). Subgoal representations are implemented using length normalization, and both levels of the hierarchical actor are trained via advantage-weighted regression using the objectives in~(\ref{eq_app:pi_hi_loss}) and~(\ref{eq_app:pi_lo_loss}). The corresponding advantages are computed as
\[
\tilde A^{h}(s_t, s_{t+k}, g) = d_{\bm{\theta}}(s_{t}, g) - d_{\bm{\theta}}(s_{t+k}, g), \quad
\tilde A^{l}(s_t, a_t, s_{t+k}) = d_{\bm{\theta}}(s_{t}, s_{t+k}) - d_{\bm{\theta}}(s_{t+1}, s_{t+k}).
\]

Note that when Eik-QRL is not paired with a hierarchical actor, it follows the standard QRL pipeline described in the preliminaries (Fig.~\ref{fig:QRL} in Section~\ref{sec:Preliminaries}), optimizing the unconstrained objective in~(\ref{eq_app:Pi-QRL_lagrangian}), while Eik-QRL-$\lambda$ optimizes the constrained formulation in~(\ref{eq_app:Pi-QRL-lambda_lagrangian}). We denote these flat variants as Eik-QRL:flat and Eik-QRL-$\lambda$:flat respectively and experimental results for these versions are reported in Table~\ref{tab_app:additional_ablation}.

\begin{figure}
    \centering
    \includegraphics[width=0.8\linewidth]{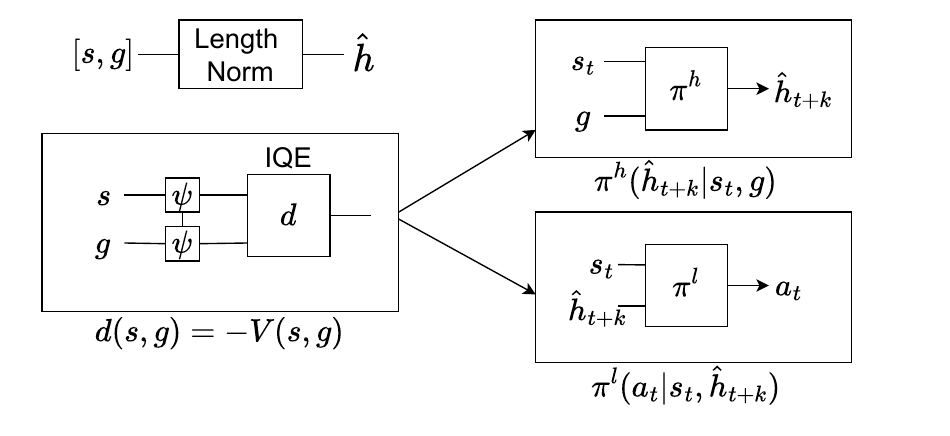}
    \caption{Diagram for the the hierarchical Eik-QRL, Eik-QRL-$\lambda$ and QRL:hi used for the experiments in Table~\ref{tab_app:comparison_with_QRL_hi} and Table~\ref{tab_app:additional_ablation}.}
    \label{fig_app:Pi-QRL:hi}
\end{figure}

\section{Offline GCRL Environments used in Section~\ref{sec:experiments}}
\label{sec_app:environments}

\begin{figure}[ht!]
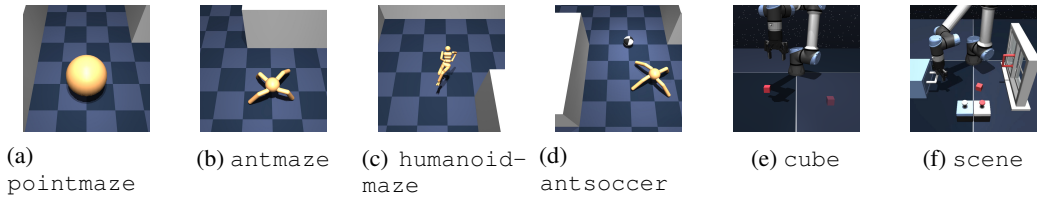

    \centering
    \begin{subfigure}[t]{0.15\linewidth}
        \centering
        \includegraphics[width=0.8\linewidth]{Figures/pointmaze.png}
        \caption{\texttt{pointmaze}}
        \label{fig_app:pointmaze}
    \end{subfigure}
    ~
    \begin{subfigure}[t]{0.15\linewidth}
        \centering
        \includegraphics[width=0.8\linewidth]{Figures/antmaze.png}
        \caption{\texttt{antmaze}}
        \label{fig_app:antmaze}
    \end{subfigure}
    ~
    \begin{subfigure}[t]{0.15\linewidth}
        \centering
        \includegraphics[width=0.8\linewidth]{Figures/humanoidmaze.png}
        \caption{\texttt{humanoid-} \texttt{maze}}
        \label{fig_app:humanoidmaze}
    \end{subfigure}
    ~
    \begin{subfigure}[t]{0.15\linewidth}
        \centering
        \includegraphics[width=0.8\linewidth]{Figures/antsoccer.png}
        \caption{\texttt{antsoccer}}
        \label{fig_app:antsoccer}
    \end{subfigure}
    ~
    \begin{subfigure}[t]{0.15\linewidth}
        \centering
        \includegraphics[width=0.8\linewidth]{Figures/cube-single.png}
        \caption{\texttt{cube}}
        \label{fig:cube}
    \end{subfigure}
    ~
    \begin{subfigure}[t]{0.15\linewidth}
        \centering
        \includegraphics[width=0.8\linewidth]{Figures/scene-v0_task4.jpg}
        \caption{\texttt{scene}}
        \label{fig_app:scene}
    \end{subfigure}
    \caption{Environments from OGbench~\citep{park2024ogbench} used in our experiments in Section~\ref{sec:experiments}.}
    \label{fig_app:environments}
\end{figure}

In the following, we provide a brief description of the agents used in our experiments (Fig.~\ref{fig_app:environments}). Our goal is to highlight to what extent their dynamics comply with the regularity assumptions in Section~\ref{sec:Eik-QRL}. From an experimental standpoint, two assumptions are particularly relevant: Assumption~\ref{assumption:dynamics_Lip}, which requires Lipschitz continuity of the dynamics, and Assumption~\ref{assumption:V_reg}, which prescribes Lipschitz continuity of the optimal goal-conditioned value function.

\paragraph{\texttt{pointmaze}} This task involves position control of a 2-D point mass. The state dimension for this agent is $2$ (the $(x,y)$ coordinates), and the action space is also 2-dimensional. The dynamics in this environment are isotropic and therefore Lipschitz continuous, so Assumption~\ref{assumption:dynamics_Lip} holds. Moreover, in this setting Assumption~\ref{assumption:dynamics_Lip} is a sufficient condition for Assumption~\ref{assumption:V_reg}, and thus both assumptions are satisfied.

\paragraph{\texttt{antmaze}} This task involves controlling a quadrupedal ant agent with $8$ degrees of freedom. The state dimension for this agent is $29$, and the action dimension is $8$. The dynamics involve intermittent contact between the feet and the ground for locomotion, so Assumption~\ref{assumption:dynamics_Lip} is not strictly satisfied. However, due to the absence of explicit categorical variables in the state space and external manipulable objects, we expect the optimal goal-conditioned value function to remain reasonably regular in practice, making Assumption~\ref{assumption:V_reg} a plausible approximation.

\paragraph{\texttt{humanoidmaze}} This task involves controlling a humanoid agent with $21$ degrees of freedom. The state dimension for this agent is $69$, and the action dimension is $21$. As in \texttt{antmaze}, the dynamics involve intermittent contact with the ground, so Assumption~\ref{assumption:dynamics_Lip} does not strictly hold. Nonetheless, the absence of categorical state variables and external manipulable objects suggests that the optimal goal-conditioned value function is still relatively smooth, and Assumption~\ref{assumption:V_reg} is again a reasonable approximation.

\paragraph{\texttt{antsoccer}} This task involves controlling an ant agent to dribble a soccer ball. The overall state dimension is $42$ ($29$ for the ant and $13$ for the ball: $7$ position/orientation coordinates and $6$ linear and angular velocity coordinates), and the action dimension is $8$. The dynamics involve both ground contacts and interactions with a separate ball object, which introduce discontinuities and non-smooth transitions. Consequently, Assumption~\ref{assumption:dynamics_Lip} does not hold in this setting. Moreover, due to these contact-rich interactions with an external object, we do not expect the optimal goal-conditioned value function to be globally Lipschitz, and Assumption~\ref{assumption:V_reg} is generally violated.

\paragraph{\texttt{cube}} This task involves pick-and-place manipulation of cube blocks, where a robot arm must arrange cubes into designated configurations. The state dimension for this agent is $28$, and the action dimension is $5$. The dynamics involve contact-rich interactions between the end-effector and the cubes, which break Lipschitz continuity, so Assumption~\ref{assumption:dynamics_Lip} does not hold. In addition, the presence of multiple interacting objects and discrete mode switches (e.g., grasp / no-grasp) leads to value function discontinuities, so Assumption~\ref{assumption:V_reg} is also violated in practice.

\paragraph{\texttt{scene}} This task involves manipulating diverse everyday objects, such as a cube block, a window, a drawer, and two button locks, where pressing a button toggles the lock status of the corresponding object (the drawer or window). The state dimension for this agent is $40$, and the action dimension is $5$. As in \texttt{cube}, the dynamics are contact-rich, and button presses induce discrete mode switches, so Assumption~\ref{assumption:dynamics_Lip} does not hold. The combination of external objects and categorical state variables (e.g., lock status) introduces sharp discontinuities in the optimal goal-conditioned value function, leading to violations of Assumption~\ref{assumption:V_reg}.

\section{Offline GCRL Baselines used in Section~\ref{sec:experiments}}
\label{sec_app:baselines}

\begin{table}[ht!]
\centering
\small
\caption{Structural comparison of the considered goal-conditioned RL methods. A checkmark (\cmark) indicates that the corresponding component is present in the method, while a cross (\xmark) indicates its absence.}
\label{table_1}
\begin{tabular}{l c c c c c c c c c c c c}\toprule
 & \multicolumn{2}{c}{Eik-HiQRL} & \multicolumn{2}{c}{Eik-QRL} & \multicolumn{2}{c}{QRL} & \multicolumn{2}{c}{HIQL} & \multicolumn{2}{c}{Eik-HIQL} & \multicolumn{2}{c}{CRL} \\
\cmidrule(lr){1-13}
Quasimetric Value      & \multicolumn{2}{c}{\cmark} & \multicolumn{2}{c}{\cmark} & \multicolumn{2}{c}{\cmark} & \multicolumn{2}{c}{\xmark} & \multicolumn{2}{c}{\xmark} & \multicolumn{2}{c}{\xmark} \\
Eikonal Regularization & \multicolumn{2}{c}{\cmark} & \multicolumn{2}{c}{\cmark} & \multicolumn{2}{c}{\xmark} & \multicolumn{2}{c}{\xmark} & \multicolumn{2}{c}{\cmark} & \multicolumn{2}{c}{\xmark} \\
Hierarchical Actor     & \multicolumn{2}{c}{\cmark} & \multicolumn{2}{c}{\cmark} & \multicolumn{2}{c}{\xmark} & \multicolumn{2}{c}{\cmark} & \multicolumn{2}{c}{\cmark} & \multicolumn{2}{c}{\xmark} \\
Hierarchical Value     & \multicolumn{2}{c}{\cmark} & \multicolumn{2}{c}{\xmark} & \multicolumn{2}{c}{\xmark} & \multicolumn{2}{c}{\xmark} & \multicolumn{2}{c}{\xmark} & \multicolumn{2}{c}{\xmark} \\
\bottomrule
\end{tabular}
\end{table}

Table~\ref{table_1} summarizes the main structural differences among Eik-QRL~(\ref{eq:PI-QRL_lagrangian_v1}), QRL~\citep{wang2023optimal}, HIQL~\citep{park2024hiql}, Eik-HIQL~\citep{giammarino2025physics}, CRL~\citep{eysenbach2022contrastive} and Eik-HiQRL~(Fig. \ref{fig:Eik-HiQRL}).

QRL learns a quasimetric goal-conditioned value function without any PDE-based constraints. Eik-QRL builds on QRL by adding Eikonal constraints, which provides similar theoretical guarantees under our regularity assumptions and a smoother value landscape, while also introducing a hierarchical actor. HIQL, in contrast, uses a hierarchical actor but learns a standard TD-based value (rather than a quasimetric) and does not impose any PDE constraints. Eik-HIQL augments HIQL with Eikonal regularization, yet it maintains a non-quasimetric value and does not decompose the value hierarchically. Our Eik-HiQRL combines the strengths of these approaches: it uses a quasimetric value constrained by the Eikonal PDE at the high level, together with both a hierarchical actor and a hierarchical value structure. This design preserves the theoretical benefits of Eik-QRL where its assumptions are most plausible, while mitigating its practical limitations through hierarchy and enabling improved scaling to long-horizon tasks. Finally, CRL~\citep{eysenbach2022contrastive} is a substantially different method compared to the previous algorithms which learns a critic based on contrastive learning principles. We include it here for the sake of completeness.

\section{Learning curves for experiments in Section~\ref{sec:experiments}}
\label{sec_app:learning_curves}

In this section, we present the full set of learning curves corresponding to the experiments in the main text. Fig.~\ref{fig_app:eik_hiqrl}, \ref{fig_app:eik_qrl}, \ref{fig_app:hjb_qrl}, \ref{fig_app:qrl} show success rate and collision rate as functions of training steps for the results summarized in Table~\ref{tab:exp_quasimetric_comparisons}, covering respectively Eik-HiQRL, Eik-QRL, HJB-QRL, and QRL. 

Fig.~\ref{fig_app:antsoccer} and \ref{fig_app:manipulation} report the learning curves for the \texttt{antsoccer} and \texttt{manipulation} experiments in Table~\ref{tab:MDP_interaction}.

All experiments were conducted on a single NVIDIA RTX 3090 GPU (24 GB VRAM), using a local server equipped with a 12th Gen Intel i7-12700F CPU, 32 GB RAM. No cloud services or compute clusters were used. Each individual experimental run required approximately 4 hours of compute time on the GPU. 

\begin{wrapfigure}{r}{0.5\textwidth}
    \centering
    \vspace{-12pt}
    \begin{subfigure}[c]{\linewidth}
        \includegraphics[width=\linewidth]{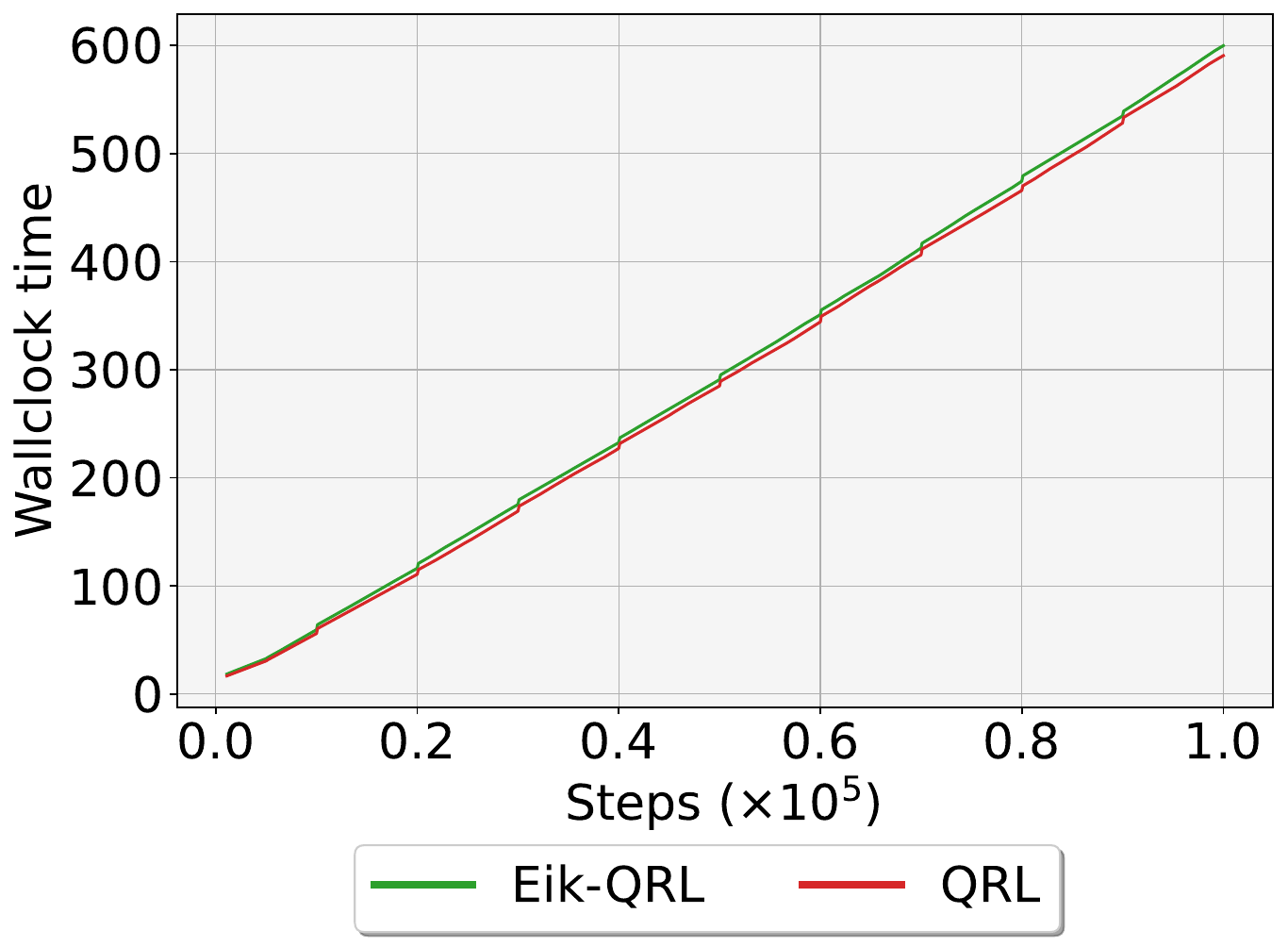}
        \label{fig:summary_training_time}
    \end{subfigure}
    \vspace{-12pt}
    \caption{Wall-clock training time as a function of training steps for Eik-QRL and QRL over \(100\text{k}\) gradient steps with batch size \(1024\). The curves closely overlap, indicating that the Eikonal term in Eik-QRL introduces negligible computational overhead compared to QRL.}
    \label{fig_app:training_time}
    \vspace{-10pt}
\end{wrapfigure}

A natural concern is the additional computational overhead introduced by the Eikonal constraint, which requires automatic differentiation with respect to the inputs of the value network. In our implementation, this amounts to one extra backward pass per minibatch, which can be fully parallelized on modern hardware. To quantify the overhead, we compare wall-clock training time for Eik-QRL and QRL over \(100\text{k}\) gradient steps with batch size \(1024\). The average time for $100$ training steps (mean \(\pm\) standard deviation across training) is \(6.37 \pm 5.69\text{ ms}\) for Eik-QRL and \(6.20 \pm 5.55\text{ ms}\) for QRL, i.e., a difference of less than \(3\%\), well within the observed variability. In practice, we also observe comparable memory usage for the two methods. These results indicate that the Eikonal regularization has negligible impact on training efficiency in our setting (Cf. Fig.~\ref{fig_app:training_time}).

\begin{figure}
    \centering
    \includegraphics[width=0.9\linewidth]{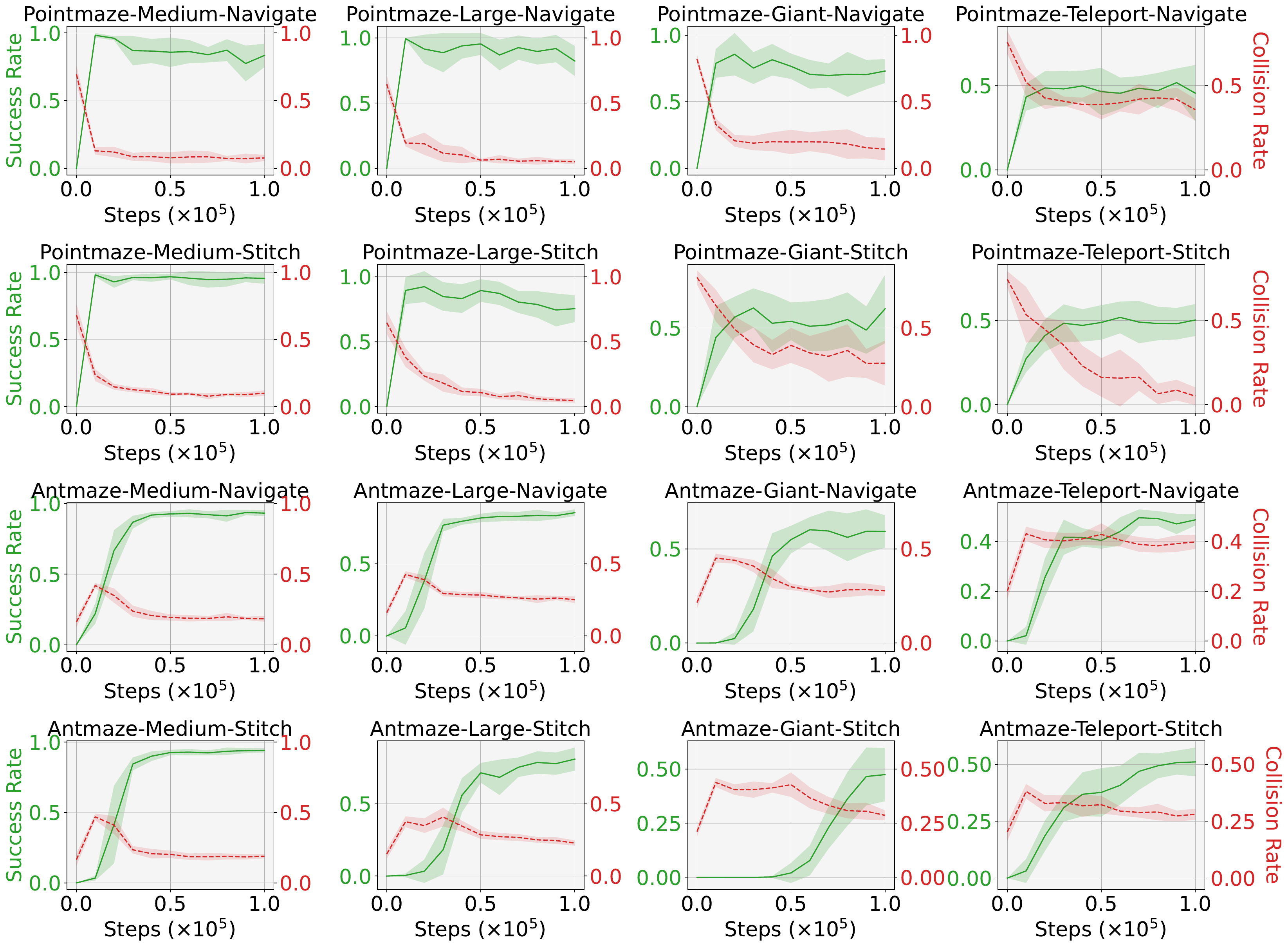}
    \caption{Learning curves for Eik-HiQRL in the experiments in Table~\ref{tab:exp_quasimetric_comparisons}. Plots show the average success rate and collision rate per evaluation across seeds as a function of training steps.}
    \label{fig_app:eik_hiqrl}
\end{figure}

\begin{figure}
    \centering
    \includegraphics[width=0.9\linewidth]{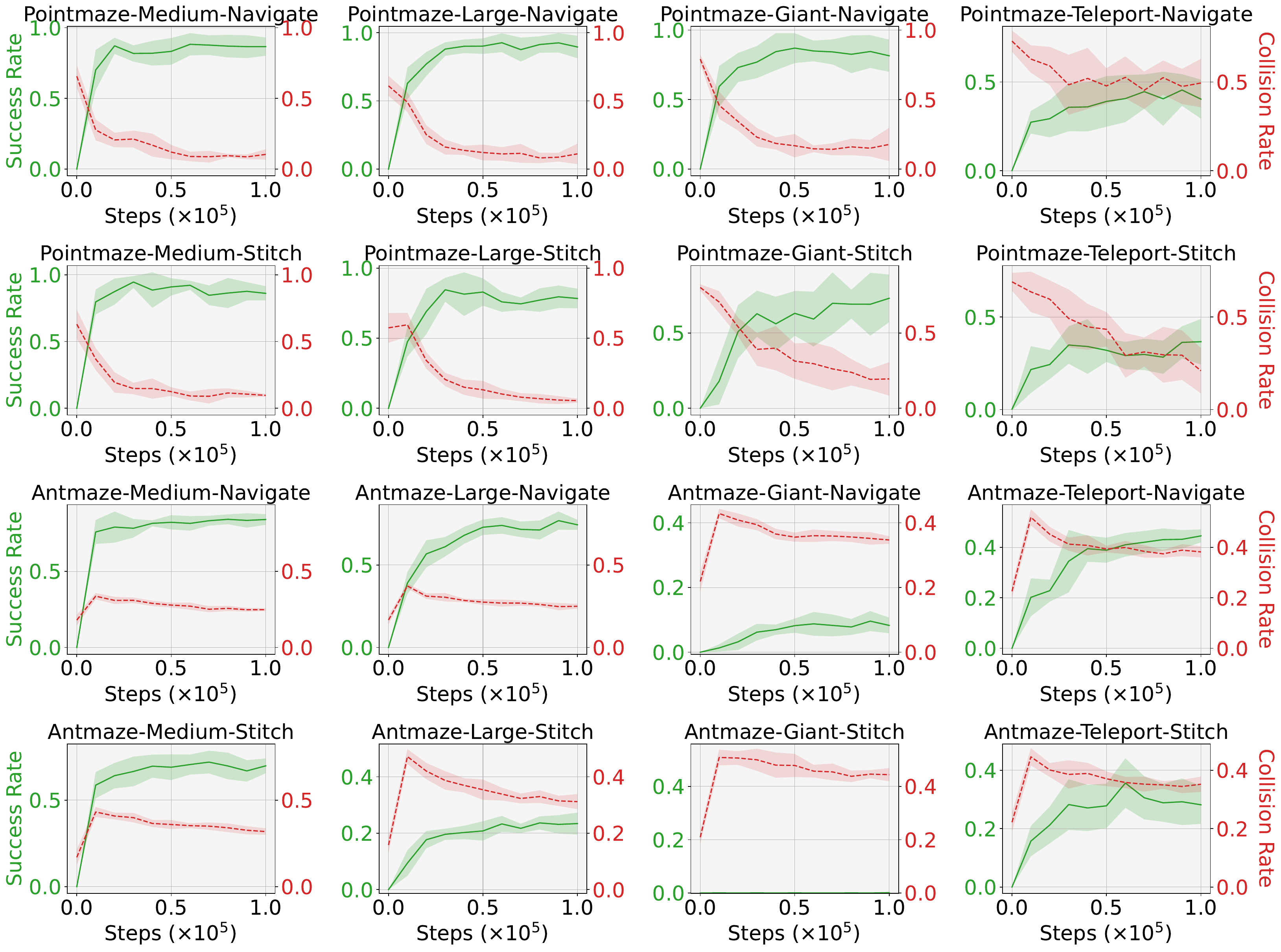}
    \caption{Learning curves for Eik-QRL in the experiments in Table~\ref{tab:exp_quasimetric_comparisons}. Plots show the average success rate and collision rate per evaluation across seeds as a function of training steps.}
    \label{fig_app:eik_qrl}
\end{figure}

\begin{figure}
    \centering
    \includegraphics[width=0.9\linewidth]{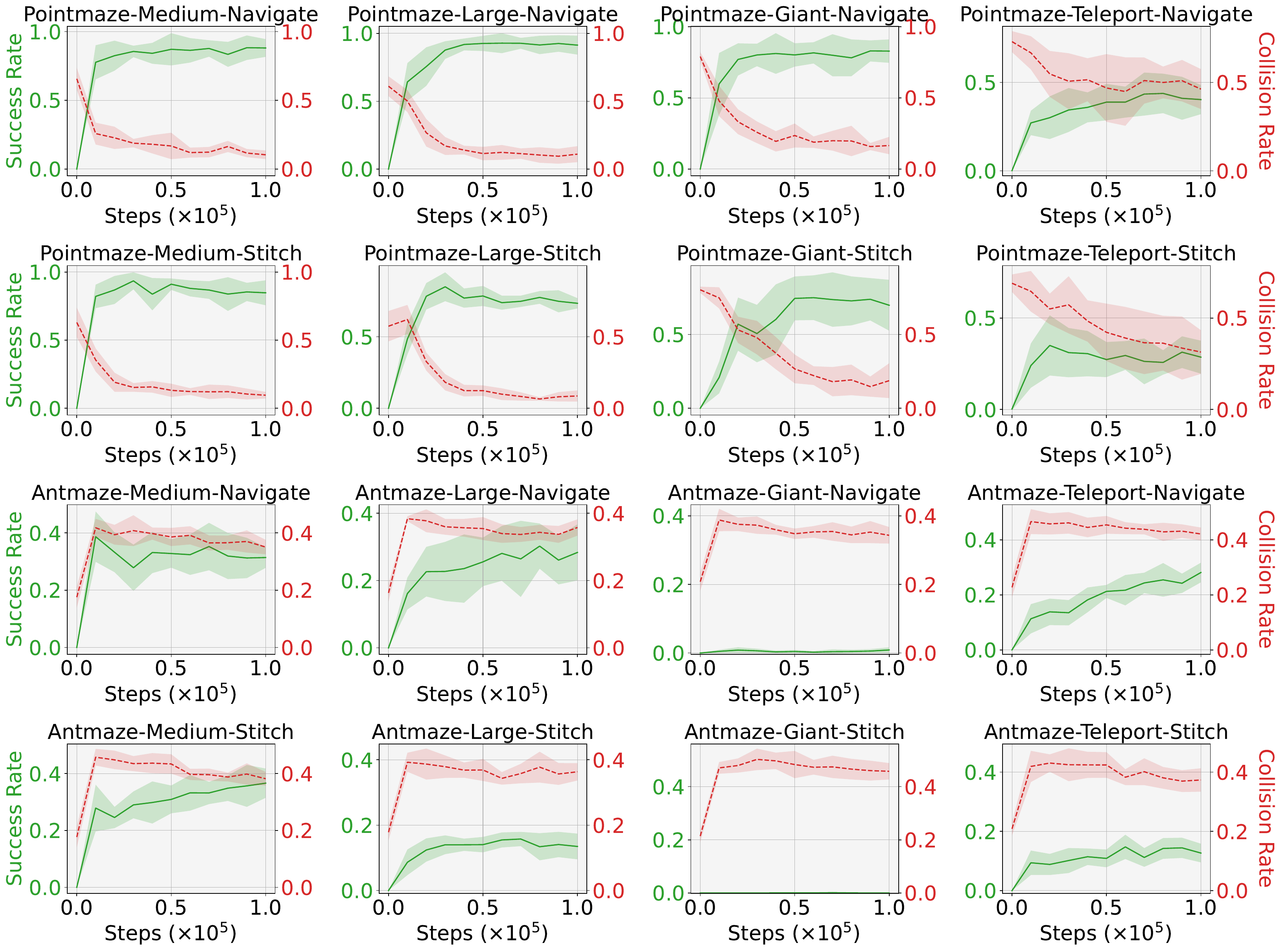}
    \caption{Learning curves for HJB-QRL in the experiments in Table~\ref{tab:exp_quasimetric_comparisons}. Plots show the average success rate and collision rate per evaluation across seeds as a function of training steps.}
    \label{fig_app:hjb_qrl}
\end{figure}

\begin{figure}
    \centering
    \includegraphics[width=0.9\linewidth]{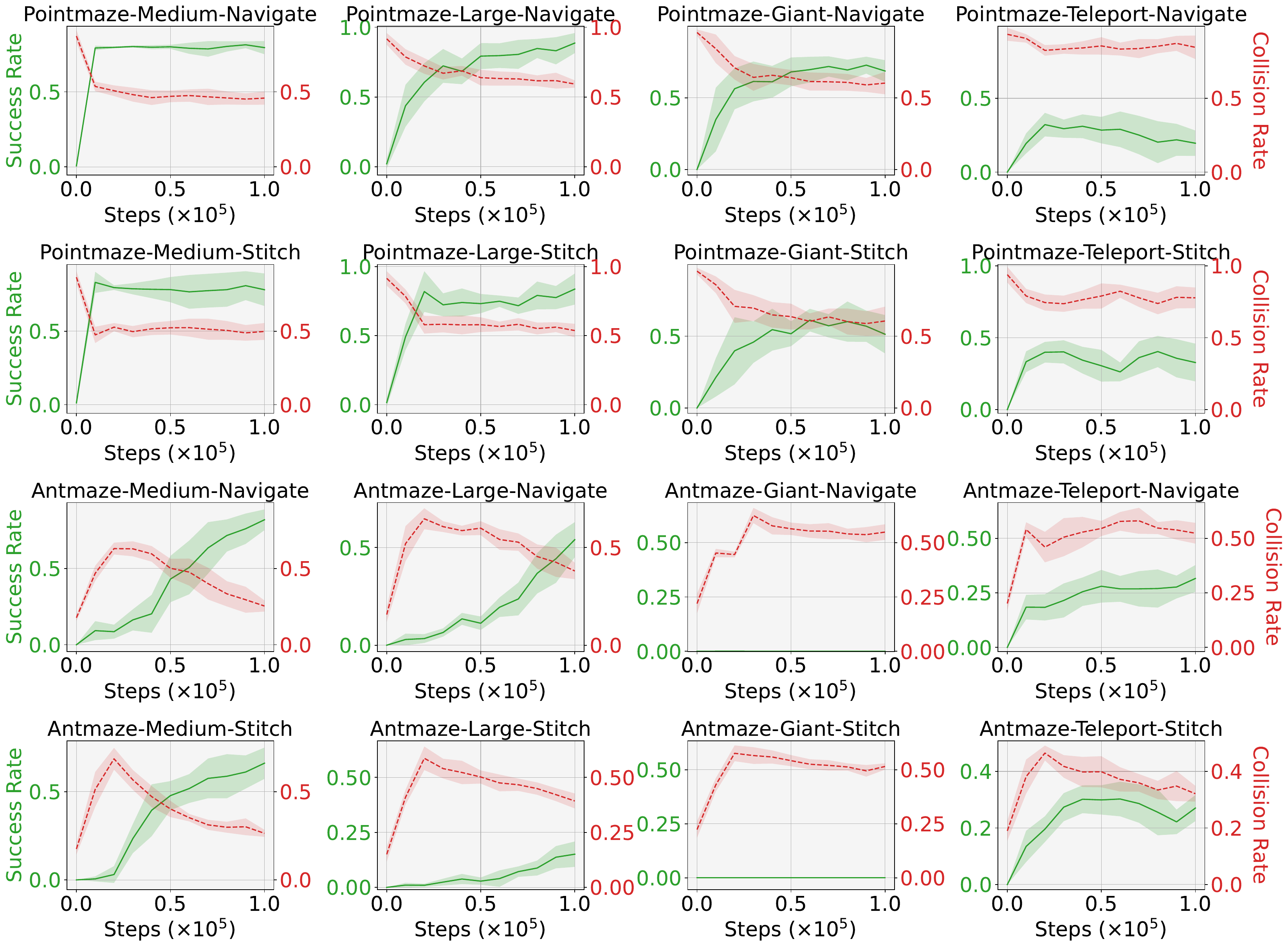}
    \caption{Learning curves for QRL in the experiments in Table~\ref{tab:exp_quasimetric_comparisons}. Plots show the average success rate and collision rate per evaluation across seeds as a function of training steps.}
    \label{fig_app:qrl}
\end{figure}

\begin{figure}
    \centering
    \includegraphics[width=\linewidth]{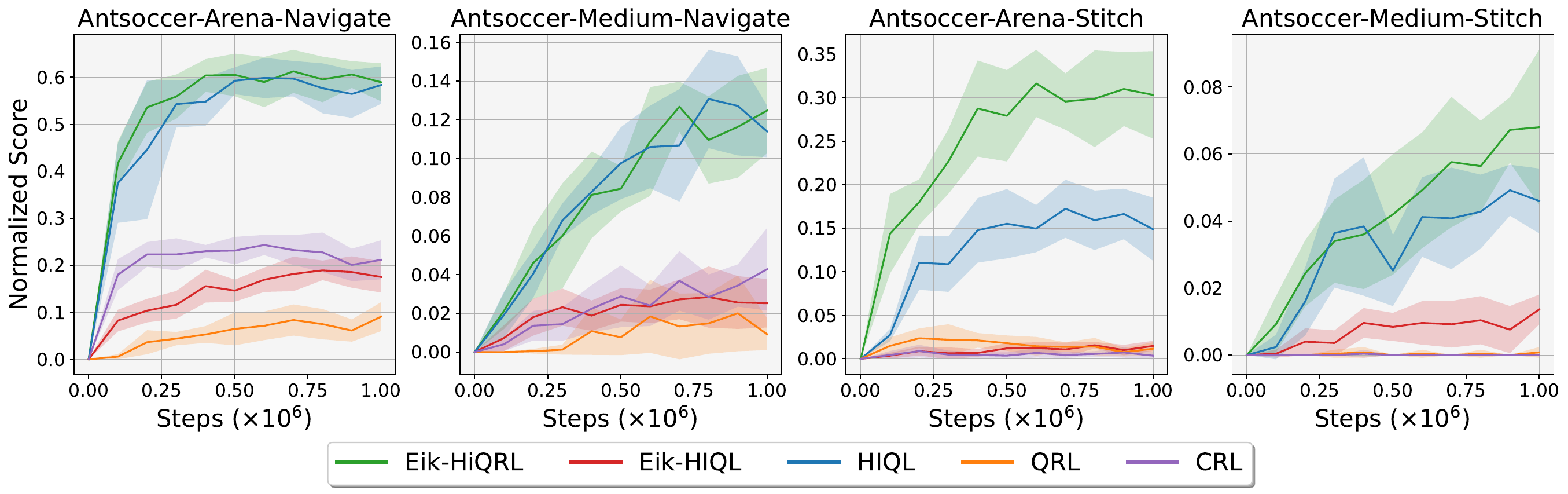}
    \caption{Learning curves for the \texttt{antsoccer} experiments in Table~\ref{tab:MDP_interaction}. Plots show the average success percentage per evaluation across seeds as a function of training steps.}
    \label{fig_app:antsoccer}
\end{figure}

\begin{figure}
    \centering
    \includegraphics[width=0.7\linewidth]{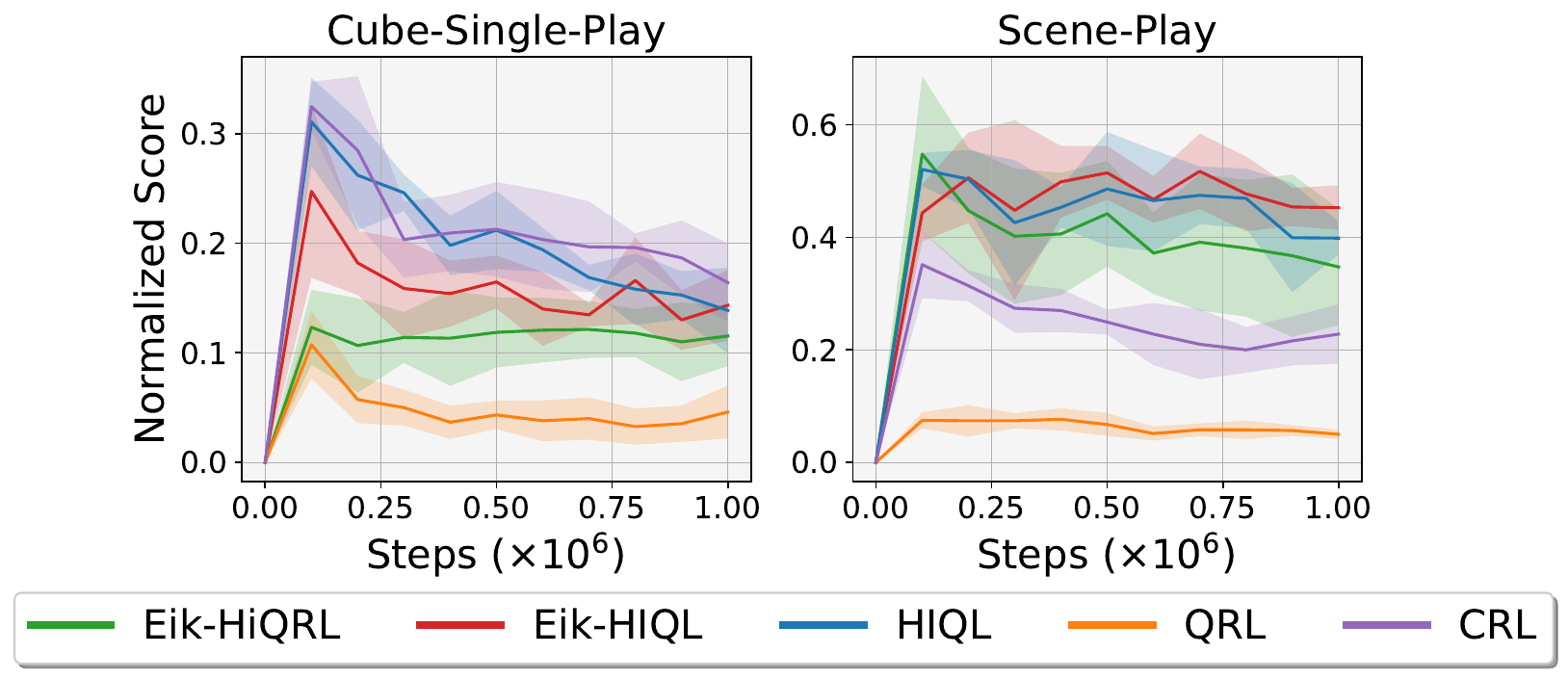}
    \caption{Learning curves for the \texttt{manipulation} experiments in Table~\ref{tab:MDP_interaction}. Plots show the average success percentage per evaluation across seeds as a function of training steps.}
    \label{fig_app:manipulation}
\end{figure}

\newpage
\section{Additional Experiments}
\label{sec_app:additional_experiments}

In the following, we present additional experiments and ablations that complement the results in the main paper. Section~\ref{sec_app:comparison_with_QRL_hi} reports a comparison analogous to Table~\ref{tab:exp_quasimetric_comparisons}, but with QRL replaced by QRL:hi as introduced in Section~\ref{sec_app:hierarchical_quasimetric_variants}. 

Section~\ref{sec_app:additional_ablation} presents an ablation over actors and optimization losses to validate our design choices. 

Section~\ref{sec_app:full_comparison_Eik_HiQRL} provides a full comparison between Eik-HiQRL and Offline GCRL baselines, extending the experiments from the main text. 

Section~\ref{sec_app:ablation_Eik_HiQRL} reports an ablation between Eik-HiQRL and HiQRL, contrasting the Eikonal-constrained optimization pipeline in Eq.~(\ref{eq:PI-QRL_lagrangian_v1}) with the standard QRL formulation in Eq.~(\ref{eq:QRL_lagrangian}) within our hierarchical framework.

Section~\ref{sec_app:traj_free_experiments} investigates the trajectory-free setting and presents an empirical study demonstrating the potential of Eik-QRL in this regime.

Finally, Section~\ref{sec_app:onlineRL} concludes this Section by showing that Eik-QRL can also be applied in the online RL setting.

\subsection{Comparison with QRL:hi}
\label{sec_app:comparison_with_QRL_hi}

In the following, we report a comparison analogous to Table~\ref{tab:exp_quasimetric_comparisons}, but with QRL replaced by QRL:hi as presented in Appendix~\ref{sec_app:hierarchical_quasimetric_variants}. We include these experiments for completeness and to better disentangle the contributions of hierarchy from those of the PDE-based constraints. The results, summarized in Table~\ref{tab_app:comparison_with_QRL_hi}, emphasize the strong regularizing effect of PDE-based constraints, which yield tangible improvements in the \texttt{pointmaze-giant} environment under both the \texttt{navigate} and \texttt{stitch} data regimes. In the \texttt{antmaze} setting, the considerations discussed in the main text remain valid here as well. The corresponding learning curves are shown in Fig.~\ref{fig_app:eik_hiqrl}, \ref{fig_app:eik_qrl}, \ref{fig_app:hjb_qrl}, and \ref{fig_app:qrl_hi} for Eik-HiQRL, Eik-QRL, HJB-QRL, and QRL:hi, respectively.

\begin{table}
    \centering
    \tiny
    \caption{Summary of the comparison among the QRL formulations. All agents are trained for $10^5$ steps using 10 seeds and evaluated every $10^4$ steps. We report the mean and standard deviation across seeds for the last evaluation. For each seed, evaluations are conducted over $5$ different random goals, with the learned policy tested for $50$ episodes per goal. For success rate ($\mathcal{R}$), results within $95\%$ of the best value are written in \textbf{bold}. For collision avoidance ($\kappa$) the lowest average is \textcolor{teal}{colored}.}
    \begin{tabular}{p{1.1cm} p{0.7cm} p{0.85cm} | p{0.82cm} p{0.82cm} | p{0.82cm} p{0.8cm} | p{0.82cm} p{0.75cm} | p{0.82cm} p{0.75cm}}
        \toprule
        \multirow{2}{*}{\textbf{Environment}} & \multirow{2}{*}{\textbf{Dataset}} & \multirow{2}{*}{\textbf{Dim}} & \multicolumn{2}{c|}{\textbf{Eik-HiQRL}} & \multicolumn{2}{c|}{\textbf{Eik-QRL}} & \multicolumn{2}{c|}{\textbf{HJB-QRL}} & \multicolumn{2}{c}{\textbf{QRL:hi}} \\
        & & & $\mathcal{R} \ (\uparrow)$ & $\kappa \ (\downarrow)$ & $\mathcal{R} \ (\uparrow)$ & $\kappa \ (\downarrow)$ & $\mathcal{R} \ (\uparrow)$ & $\kappa \ (\downarrow)$ & $\mathcal{R} \ (\uparrow)$ & $\kappa \ (\downarrow)$ \\
        \cmidrule(lr){1-11}
        \multirow{8}{*}{\texttt{pointmaze}} & \multirow{4}{*}{\texttt{navigate}} & \texttt{medium} & \bm{$83 \pm 9$} & \textcolor{teal}{$8 \pm 2$} & \bm{$87 \pm 7$} & $10 \pm 4$ & \bm{$88 \pm 6$} & $10 \pm 3$ & \bm{$85 \pm 6$} & $13 \pm 4$ \\
        & & \texttt{large} & $82 \pm 12$ & \textcolor{teal}{$5 \pm 2$} & \bm{$90 \pm 8$} & $11 \pm 8$ & \bm{$91 \pm 7$} & $11 \pm 6$ & \bm{$94 \pm 7$} & $7 \pm 2$ \\
        & & \texttt{giant} & $73 \pm 9$ & \textcolor{teal}{$14 \pm 8$} & \bm{$82 \pm 12$} & $18 \pm 12$ & \bm{$83 \pm 8$} & $17 \pm 6$ & $69 \pm 13$ & $20 \pm 5$ \\
        & & \texttt{teleport} & \bm{$46 \pm 17$} & \textcolor{teal}{$36 \pm 6$} & $40 \pm 11$ & $49 \pm 14$ & $40 \pm 8$ & $46 \pm 11$ & $23 \pm 8$ & $58 \pm 11$ \\
        \cmidrule(lr){2-11}
        & \multirow{4}{*}{\texttt{stitch}} & \texttt{medium} & \bm{$96 \pm 4$} & \textcolor{teal}{$10 \pm 2$} & $86 \pm 5$ & \textcolor{teal}{$10 \pm 1$} & $85 \pm 9$ & $10 \pm 2$ & $89 \pm 9$ & $11 \pm 2$ \\
        & & \texttt{large} & $75 \pm 10$ & \textcolor{teal}{$5 \pm 2$} & $78 \pm 7$ & \textcolor{teal}{$5 \pm 1$} & $73 \pm 4$ & $9 \pm 4$ & \bm{$91 \pm 6$} & $11 \pm 5$ \\
        & & \texttt{giant} & $62 \pm 22$ & $28 \pm 14$ & \bm{$73 \pm 16$} & \textcolor{teal}{$19 \pm 11$} & \bm{$70 \pm 17$} & \textcolor{teal}{$19 \pm 12$} & $54 \pm 12$ & $33 \pm 14$ \\
        & & \texttt{teleport} & \bm{$50 \pm 10$} & \textcolor{teal}{$5 \pm 5$} & $37 \pm 12$ & $21 \pm 12$ & $29 \pm 9$ & $31 \pm 12$ & $28 \pm 6$ & $45 \pm 15$ \\
        \cmidrule(lr){1-11}
        \multirow{8}{*}{\texttt{antmaze}} & \multirow{4}{*}{\texttt{navigate}} & \texttt{medium} & \bm{$93 \pm 2$} & \textcolor{teal}{$18 \pm 2$} & $84 \pm 3$ & $25 \pm 1$ & $31 \pm 4$ & $35 \pm 3$ & $86 \pm 3$ & $24 \pm 2$ \\
        & & \texttt{large} & \bm{$86 \pm 2$} & $25 \pm 2$ & $74 \pm 3$ & $25 \pm 1$ & $28 \pm 8$ & $36 \pm 2$ & \bm{$85 \pm 2$} & \textcolor{teal}{$24 \pm 1$} \\
        & & \texttt{giant} & \bm{$59 \pm 7$} & \textcolor{teal}{$28 \pm 3$} & $8 \pm 2$ & $35 \pm 1$ & $0 \pm 0$ & $34 \pm 2$ & $23 \pm 2$ & $32 \pm 1$ \\
        & & \texttt{teleport} & \bm{$49 \pm 2$} & $40 \pm 3$ & $45 \pm 3$ & \textcolor{teal}{$38 \pm 2$} & $28 \pm 4$ & $42 \pm 2$ & $40 \pm 4$ & $43 \pm 2$ \\
        \cmidrule(lr){2-11}
        & \multirow{4}{*}{\texttt{stitch}} & \texttt{medium} & \bm{$94 \pm 1$} & \textcolor{teal}{$19 \pm 2$} & $70 \pm 4$ & $32 \pm 2$ & $37 \pm 5$ & $38 \pm 2$ & $83 \pm 5$ & $31 \pm 3$ \\
        & & \texttt{large} & \bm{$81 \pm 8$} & \textcolor{teal}{$23 \pm 2$} & $23 \pm 4$ & $31 \pm 3$ & $13 \pm 4$ & $36 \pm 3$ & $36 \pm 4$ & $31 \pm 0$ \\
        & & \texttt{giant} & \bm{$47 \pm 12$} & \textcolor{teal}{$29 \pm 2$} & $0 \pm 0$ & $44 \pm 2$ & $0 \pm 0$ & $46 \pm 3$ & $0 \pm 0$ & $47 \pm 2$ \\
        & & \texttt{teleport} & \bm{$51 \pm 6$} & \textcolor{teal}{$28 \pm 2$} & $28 \pm 7$ & $35 \pm 3$ & $13 \pm 3$ & $37 \pm 4$ & $38 \pm 3$ & $34 \pm 3$ \\
        \bottomrule
    \end{tabular}
    \label{tab_app:comparison_with_QRL_hi}
\end{table}

\begin{figure}
    \centering
    \includegraphics[width=0.9\linewidth]{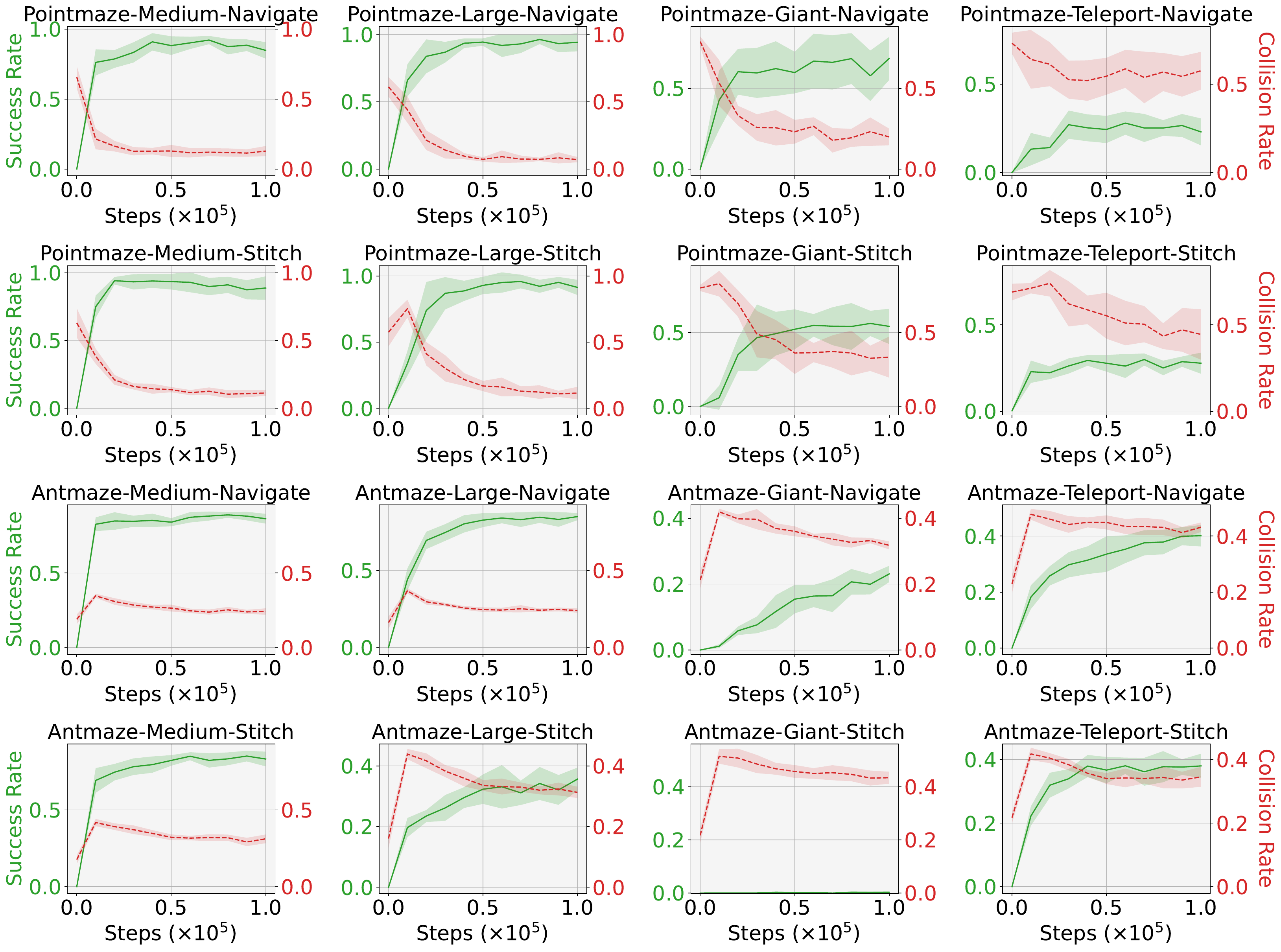}
    \caption{Learning curves for QRL:hi in the experiments in Table~\ref{tab_app:comparison_with_QRL_hi}. Plots show the average success rate and collision rate per evaluation across seeds as a function of training steps.}
    \label{fig_app:qrl_hi}
\end{figure}

\subsection{Additional Ablation}
\label{sec_app:additional_ablation}

This section reports an additional ablation comparing quasimetric-based algorithms across actor types (flat vs. hierarchical) and optimization variants (constrained in Eq. \ref{eq_app:Pi-QRL-lambda_lagrangian} vs. unconstrained in Eq. \ref{eq_app:Pi-QRL_lagrangian}). We include this experiment to further justify our design choices and to provide a more compact comparison across algorithms. The results, summarized in Table \ref{tab_app:additional_ablation}, confirm the benefits of combining hierarchical actors with PDE-based constraints. The corresponding learning curves are shown in Fig. \ref{fig_app:pointmaze_additional_ablation} and Fig. \ref{fig_app:antmaze_additional_ablation}.

\begin{table}
\centering
\tiny
\caption{Additional ablation. All agents are trained for $10^5$ steps using 10 seeds and evaluated every $10^4$ steps. We report the mean and standard deviation across seeds for the best evaluation across training. For each seed, evaluations are conducted over $5$ different random goals, with the learned policy tested for $50$ episodes per goal. Results within $95\%$ of the best value are written in \textbf{bold}.}
\label{tab_app:additional_ablation}
    \begin{tabular}{p{0.8cm} p{0.7cm} p{0.75cm} l p{0.75cm} p{0.75cm} l l p{1cm} p{1.2cm}}
        \toprule
        \textbf{Env} & \textbf{Dataset} & \textbf{Dimension} & \textbf{Eik-HiQRL} & \textbf{QRL} & \textbf{QRL:hi} & \textbf{Eik-QRL:flat} & \textbf{Eik-QRL-$\lambda$:flat} & \textbf{Eik-QRL} & \textbf{Eik-QRL-$\lambda$} \\
        \cmidrule(lr){1-10}
        \multirow{8}{*}{\texttt{pointmaze}} & \multirow{4}{*}{\texttt{navigate}} & \texttt{medium} & \bm{$99 \pm 1$} & $83 \pm 3$ & $93 \pm 4$ & $92 \pm 12$ & $84 \pm 6$ & $90 \pm 5$ & $78 \pm 15$ \\
        & & \texttt{large} & \bm{$99 \pm 1$} & $90 \pm 6$ & \bm{$95 \pm 4$} & $76 \pm 15$ & \bm{$98 \pm 6$} & \bm{$94 \pm 4$} & $87 \pm 14$ \\
        & & \texttt{giant} & \bm{$89 \pm 8$} & $72 \pm 7$ & $73 \pm 10$ & $63 \pm 21$ & $68 \pm 10$ & \bm{$90 \pm 6$} & $57 \pm 13$ \\
        & & \texttt{teleport} & \bm{$53 \pm 6$} & $34 \pm 7$ & $28 \pm 6$ & $25 \pm 18$ & $14 \pm 8$ & $49 \pm 12$ & $27 \pm 9$ \\
        \cmidrule(lr){2-10}
        & \multirow{4}{*}{\texttt{stitch}} & \texttt{medium} & \bm{$99 \pm 2$} & $80 \pm 10$ & \bm{$96 \pm 4$} & $93 \pm 8$ & \bm{$98 \pm 4$} & \bm{$94 \pm 7$} & $85 \pm 7$ \\
        & & \texttt{large} & \bm{$95 \pm 8$} & $85 \pm 11$ & \bm{$96 \pm 3$} & $70 \pm 12$ & \bm{$96 \pm 7$} & $87 \pm 9$ & $75 \pm 13$ \\
        & & \texttt{giant} & $57 \pm 14$ & $56 \pm 9$ & $67 \pm 8$ & $45 \pm 18$ & $58 \pm 15$ & \bm{$76 \pm 11$} & $32 \pm 24$ \\
        & & \texttt{teleport} & \bm{$52 \pm 8$} & $42 \pm 6$ & $30 \pm 4$ & $39 \pm 8$ & $29 \pm 12$ & $32 \pm 10$ & $35 \pm 9$ \\
        \cmidrule(lr){1-10}
        \multirow{8}{*}{\texttt{antmaze}} & \multirow{4}{*}{\texttt{navigate}} & \texttt{medium} & \bm{$94 \pm 2$} & $81 \pm 5$ & \bm{$89 \pm 3$} & $51 \pm 6$ & $10 \pm 6$ & $85 \pm 3$ & $69 \pm 4$ \\
        & & \texttt{large} & \bm{$84 \pm 4$} & $62 \pm 9$ & \bm{$84 \pm 7$} & $9 \pm 5$ & $5 \pm 2$ & $73 \pm 4$ & $68 \pm 10$ \\
        & & \texttt{giant} & \bm{$61 \pm 13$} & $0 \pm 0$ & $19 \pm 5$ & $0 \pm 0$ & $0 \pm 0$ & $9 \pm 3$ & $6 \pm 2$ \\
        & & \texttt{teleport} & \bm{$52 \pm 2$} & $30 \pm 8$ & $39 \pm 3$ & $32 \pm 4$ & $29 \pm 4$ & $43 \pm 6$ & $43 \pm 5$ \\
        \cmidrule(lr){2-10}
        & \multirow{4}{*}{\texttt{stitch}} & \texttt{medium} & \bm{$95 \pm 2$} & $68 \pm 6$ & $88 \pm 2$ & $8 \pm 4$ & $1 \pm 1$ & $72 \pm 6$ & $67 \pm 8$ \\
        & & \texttt{large} & \bm{$83 \pm 3$} & $14 \pm 5$ & $35 \pm 5$ & $5 \pm 4$ & $2 \pm 2$ & $25 \pm 3$ & $24 \pm 7$ \\
        & & \texttt{giant} & \bm{$51 \pm 10$} & $0 \pm 0$ & $0 \pm 0$ & $0 \pm 0$ & $0 \pm 0$ & $0 \pm 0$ & $0 \pm 0$ \\
        & & \texttt{teleport} & \bm{$53 \pm 5$} & $31 \pm 4$ & $43 \pm 6$ & $14 \pm 4$ & $17 \pm 5$ & $33 \pm 5$ & $38 \pm 8$ \\
        \bottomrule
    \end{tabular}
\end{table}

\begin{figure}
    \centering
    \includegraphics[width=0.95\linewidth]{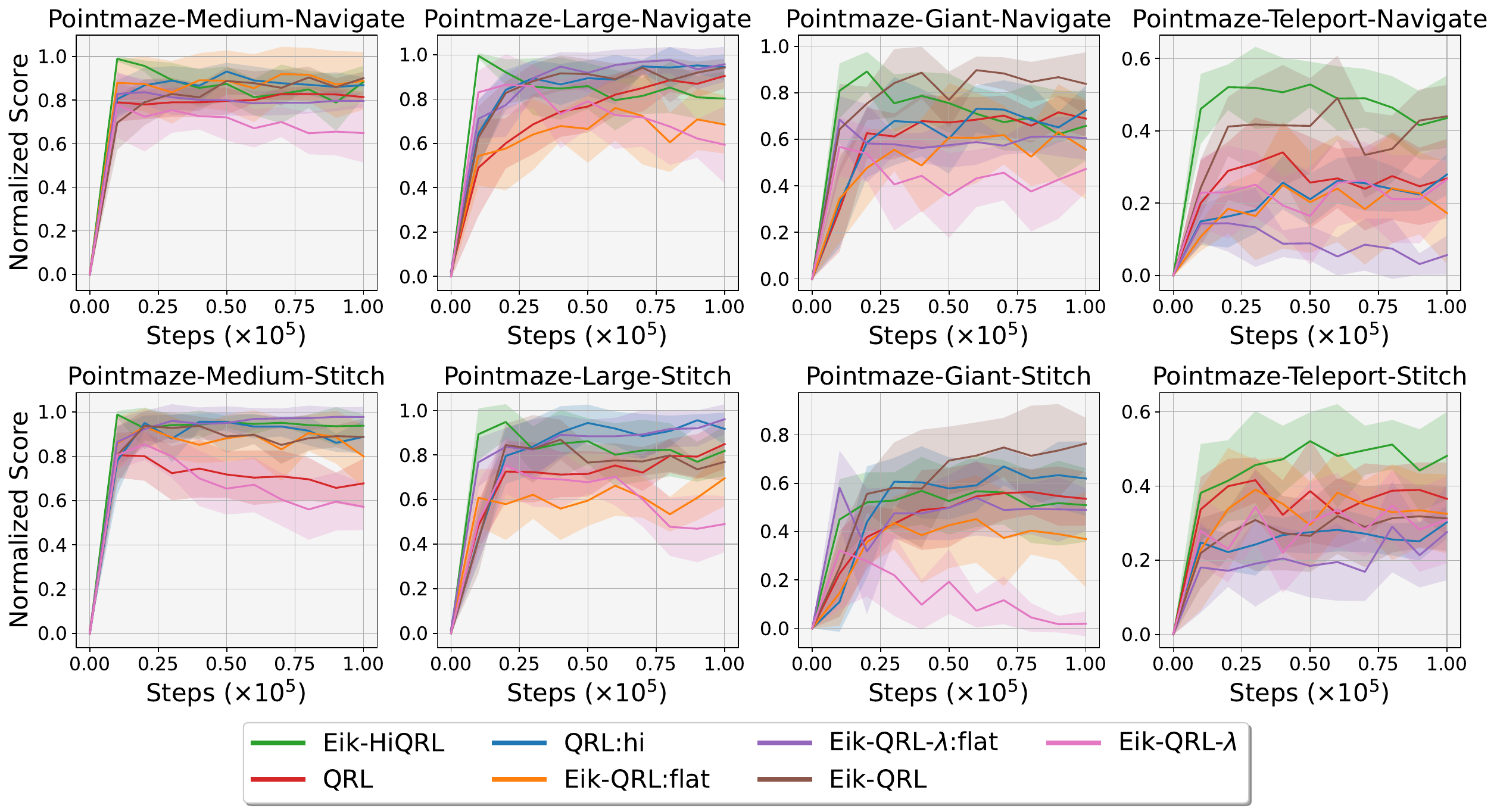}
    \caption{Learning curves for the \texttt{pointmaze} experiments in Table~\ref{tab_app:additional_ablation}. Plots show the average success percentage per evaluation across seeds as a function of training steps.}
    \label{fig_app:pointmaze_additional_ablation}
\end{figure}

\begin{figure}
    \centering
    \includegraphics[width=0.95\linewidth]{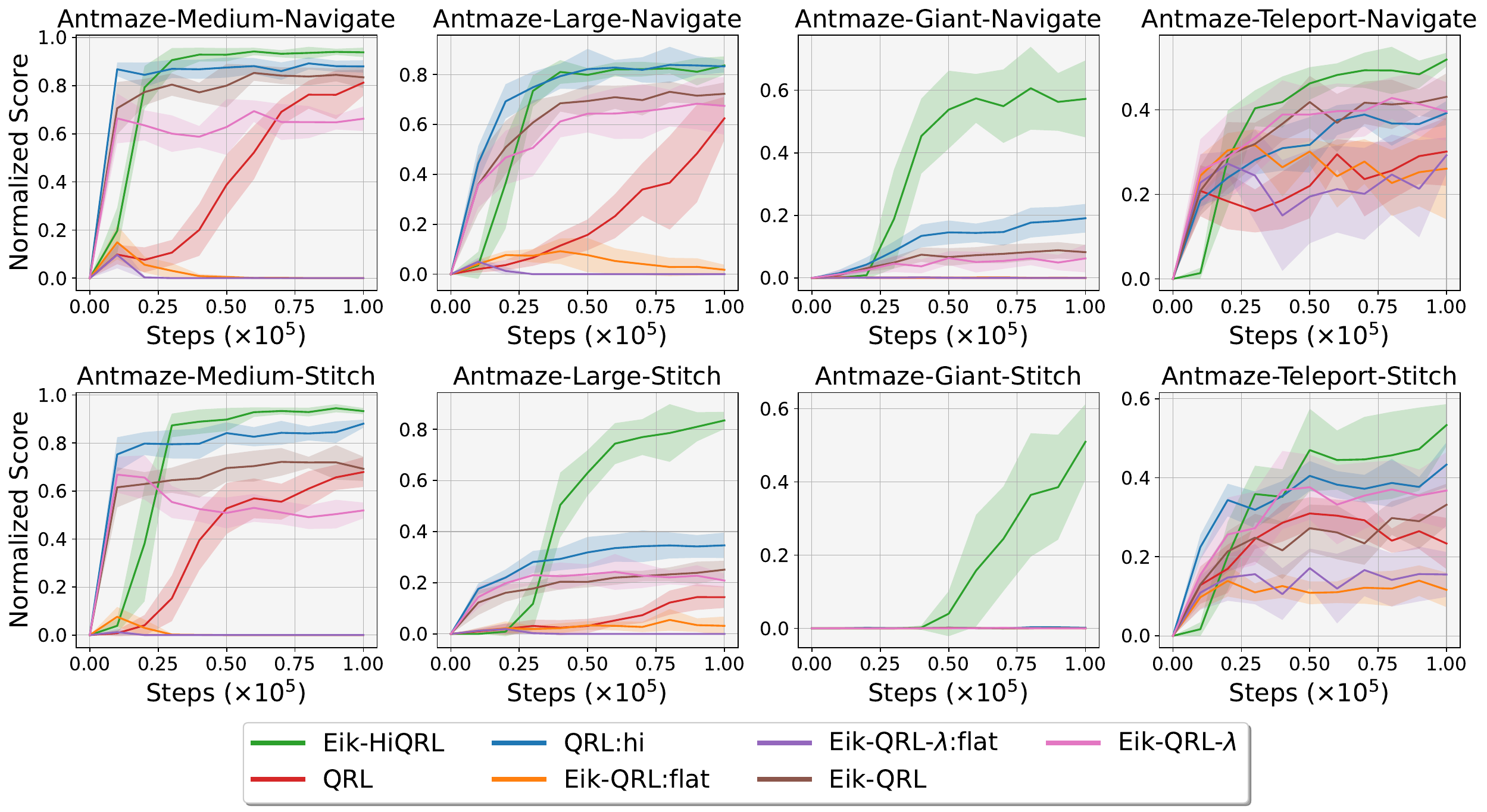}
    \caption{Learning curves for the \texttt{antmaze} experiments in Table~\ref{tab_app:additional_ablation}. Plots show the average success percentage per evaluation across seeds as a function of training steps.}
    \label{fig_app:antmaze_additional_ablation}
\end{figure}

\subsection{Full Comparison Eik-HiQRL vs Baselines}
\label{sec_app:full_comparison_Eik_HiQRL}

This section provides a thorough evaluation of Eik-HiQRL against the Offline GCRL baselines introduced in the main text, complementing and summarizing the earlier comparisons. The experiments are reported in Table \ref{tab:full_offline_GCRL}, and the corresponding learning curves are shown in Fig.~\ref{fig_app:pointmaze}, \ref{fig_app:antmaze}, \ref{fig_app:antmaze_explore}, \ref{fig:humanoidmaze}, \ref{fig_app:antsoccer}, and \ref{fig_app:manipulation} for the \texttt{pointmaze}, \texttt{antmaze}, \texttt{antmaze-explore}, \texttt{humanoidmaze}, \texttt{antsoccer}, and \texttt{manipulation} environments, respectively. Consistent with the conclusions of the main text, Eik-HiQRL comfortably outperforms or matches all baselines across tasks. As also noted, performance gains are smaller in the \texttt{manipulation} setting, which represents an interesting testbed for future research.

\begin{table}
    \centering
    \tiny
    \caption{Summary of our experimental results. All agents are trained for $10^6$ steps using 10 seeds and evaluated every $10^5$ steps. We report the mean and standard deviation across seeds for the best evaluation. For each seed, evaluations are conducted over $5$ different random goals, as designed in \citet{park2024ogbench}, with the learned policy tested for $50$ episodes per goal. Results within $95\%$ of the best value are written in \textbf{bold}.}
    \label{tab:full_offline_GCRL}
    \begin{tabular}{l l l l l l l l}
        \toprule
        \textbf{Environment} & \textbf{Dataset} & \textbf{Dimension} & \textbf{Eik-HiQRL (ours)} & \textbf{Eik-HIQL} & \textbf{HIQL} & \textbf{QRL} & \textbf{CRL} \\
        \cmidrule(lr){1-8}
        \multirow{8}{*}{\texttt{pointmaze}} & \multirow{4}{*}{\texttt{navigate}} & \texttt{medium} & \bm{$99 \pm 1$} & \bm{$93 \pm 5$} & $92 \pm 2$ & $83 \pm 3$ & $54 \pm 19$ \\
        & & \texttt{large} & \bm{$99 \pm 1$} & $83 \pm 9$ & $49 \pm 13$ & $90 \pm 5$ & $56 \pm 9$ \\
        & & \texttt{giant} & \bm{$89 \pm 8$} & $79 \pm 13$ & $7 \pm 8$ & $72 \pm 7$ & $37 \pm 17$ \\
        & & \texttt{teleport} & \bm{$53 \pm 6$} & $47 \pm 10$ & $29 \pm 7$ & $34 \pm 7$ & \bm{$50 \pm 5$} \\
        \cmidrule(lr){2-8}
        & \multirow{4}{*}{\texttt{stitch}} & \texttt{medium} & \bm{$99 \pm 2$} & \bm{$96 \pm 3$} & $76 \pm 8$ & $80 \pm 10$ & $3 \pm 5$ \\
        & & \texttt{large} & \bm{$95 \pm 8$}  & $73 \pm 6$ & $19 \pm 7$ & $85 \pm 11$ & $4 \pm 6$ \\
        & & \texttt{giant} & \bm{$57 \pm 14$} & $22 \pm 10$ & $1 \pm 4$ & \bm{$56 \pm 9$} & $0 \pm 0$ \\
        & & \texttt{teleport} & \bm{$52 \pm 8$} & $43 \pm 9$ & $38 \pm 5$ & $42 \pm 6$ & $12 \pm 6$ \\
        \cmidrule(lr){1-8}
        \multirow{11}{*}{\texttt{antmaze}} & \multirow{4}{*}{\texttt{navigate}} & \texttt{medium} & \bm{$95 \pm 2$} & \bm{$95 \pm 1$} & \bm{$96 \pm 1$} & $87 \pm 5$ & \bm{$94 \pm 2$} \\
        & & \texttt{large} & \bm{$86 \pm 2$} & \bm{$86 \pm 2$} & \bm{$90 \pm 6$} & $80 \pm 5$ & \bm{$86 \pm 3$} \\
        & & \texttt{giant} & \bm{$66 \pm 1$} & \bm{$67 \pm 5$} & \bm{$69 \pm 3$} & $14 \pm 6$ & $18 \pm 4$ \\
        & & \texttt{teleport} & \bm{$53 \pm 4$} & \bm{$52 \pm 4$} & $43 \pm 3$ & $39 \pm 4$ & \bm{$55 \pm 4$} \\
        \cmidrule(lr){2-8}
        & \multirow{4}{*}{\texttt{stitch}} & \texttt{medium} & \bm{$94 \pm 2$} & \bm{$94 \pm 2$} & \bm{$95 \pm 14$} & $68 \pm 6$ & $54 \pm 8$ \\
        & & \texttt{large} & \bm{$88 \pm 3$} & \bm{$84 \pm 3$} & $74 \pm 6$ & $24 \pm 5$ & $13 \pm 4$ \\
        & & \texttt{giant} & \bm{$61 \pm 9$} & $48 \pm 11$ & $3 \pm 3$ & $2 \pm 2$ & $0 \pm 0$ \\
        & & \texttt{teleport} & \bm{$60 \pm 3$} & $47 \pm 2$ & $35 \pm 3$ & $29 \pm 6$ & $34 \pm 4$ \\
        \cmidrule(lr){2-8}
        & \multirow{3}{*}{\texttt{explore}} & \texttt{medium} & \bm{$49 \pm 20$} & $43 \pm 15$ & $33 \pm 15$ & $5 \pm 4$ & $4 \pm 2$ \\
        & & \texttt{large} & \bm{$16 \pm 11$} & \bm{$13 \pm 1$} & $6 \pm 7$ & $0 \pm 0$ & $0 \pm 0$ \\
        & & \texttt{teleport} & $7 \pm 4$ & $15 \pm 10$ & \bm{$45 \pm 5$} & $2 \pm 2$ & $22 \pm 5$ \\
        \cmidrule(lr){1-8}
        \multirow{6}{*}{\texttt{humanoidmaze}} & \multirow{3}{*}{\texttt{navigate}} & \texttt{medium} & \bm{$91 \pm 2$} & \bm{$86 \pm 2$} & \bm{$90 \pm 3$} & $22 \pm 2$ & $61 \pm 4$ \\
        & & \texttt{large} & \bm{$74 \pm 5$} & $64 \pm 7$ & $50 \pm 4$ & $7 \pm 3$ & $22 \pm 9$ \\
        & & \texttt{giant} & \bm{$83 \pm  6$} & $68 \pm 5$ & $18 \pm 5$ & $1 \pm 1$ & $4 \pm 2$ \\
        \cmidrule(lr){2-8}
        & \multirow{3}{*}{\texttt{stitch}} & \texttt{medium} & \bm{$85 \pm 3$} & $79 \pm 2$ & \bm{$88 \pm 3$} & $22 \pm 4$ & $40 \pm 7$ \\
        & & \texttt{large} & \bm{$63 \pm 6$} & $29 \pm 7$ & $28 \pm 2$ & $3 \pm 1$ & $4 \pm 2$ \\
        & & \texttt{giant} & \bm{$69 \pm 5$} & $19 \pm 5$ & $3 \pm 1$ & $0 \pm 0$ & $0 \pm 0$ \\
        \cmidrule(lr){1-8}
        \multirow{4}{*}{\texttt{antsoccer}} & \multirow{2}{*}{\texttt{navigate}} & \texttt{arena} & \bm{$61 \pm 5$} & $19 \pm 2$ & \bm{$60 \pm 4$} & $10 \pm 3$ & $24 \pm 2$ \\
        & & \texttt{medium} & \bm{$13 \pm 1$} & $3 \pm 2$ & \bm{$13 \pm 3$} & $2 \pm 2$ & $4 \pm 2$\\
        \cmidrule(lr){2-8}
        & \multirow{2}{*}{\texttt{stitch}} & \texttt{arena} & \bm{$32 \pm 4$} & $2 \pm 0$ & $17 \pm 3$ & $2 \pm 1$ & $1 \pm 1$ \\
        & & \texttt{medium} & \bm{$7 \pm 2$} & $1 \pm 0$ & $5 \pm 1$ & $0 \pm 0$ & $0 \pm 0$ \\
        \cmidrule(lr){1-8}
        \multirow{2}{*}{\texttt{manipulation}} & \multicolumn{2}{c}{\texttt{cube-single-play}} & $12 \pm 3$ & $25 \pm 1$ & \bm{$31 \pm 4$} & $11 \pm 3$ & \bm{$32 \pm 2$} \\
        \cmidrule(lr){2-8}
        & \multicolumn{2}{c}{\texttt{scene-play}} & \bm{$55 \pm 14$} & \bm{$52 \pm 7$} & \bm{$52 \pm 3$} & $8 \pm 2$ & $35 \pm 6$ \\
        \bottomrule
    \end{tabular}
\end{table}

\begin{figure}
    \centering
    \includegraphics[width=0.95\linewidth]{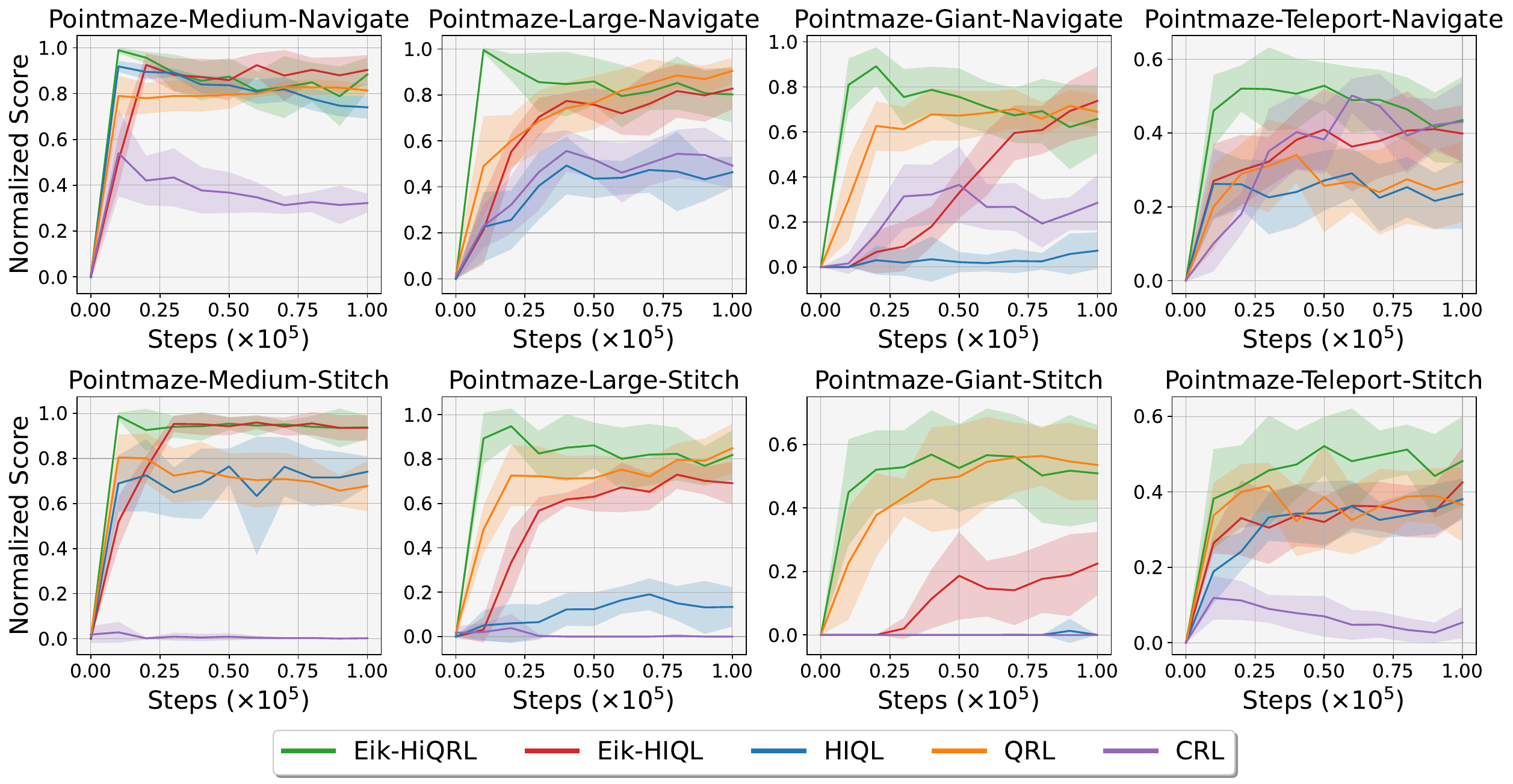}
    \caption{Learning curves for the \texttt{pointmaze} experiments in Table~\ref{tab:full_offline_GCRL}. Plots show the average success percentage per evaluation across seeds as a function of training steps.}
    \label{fig_app:pointmaze}
\end{figure}

\begin{figure}
    \centering
    \includegraphics[width=0.95\linewidth]{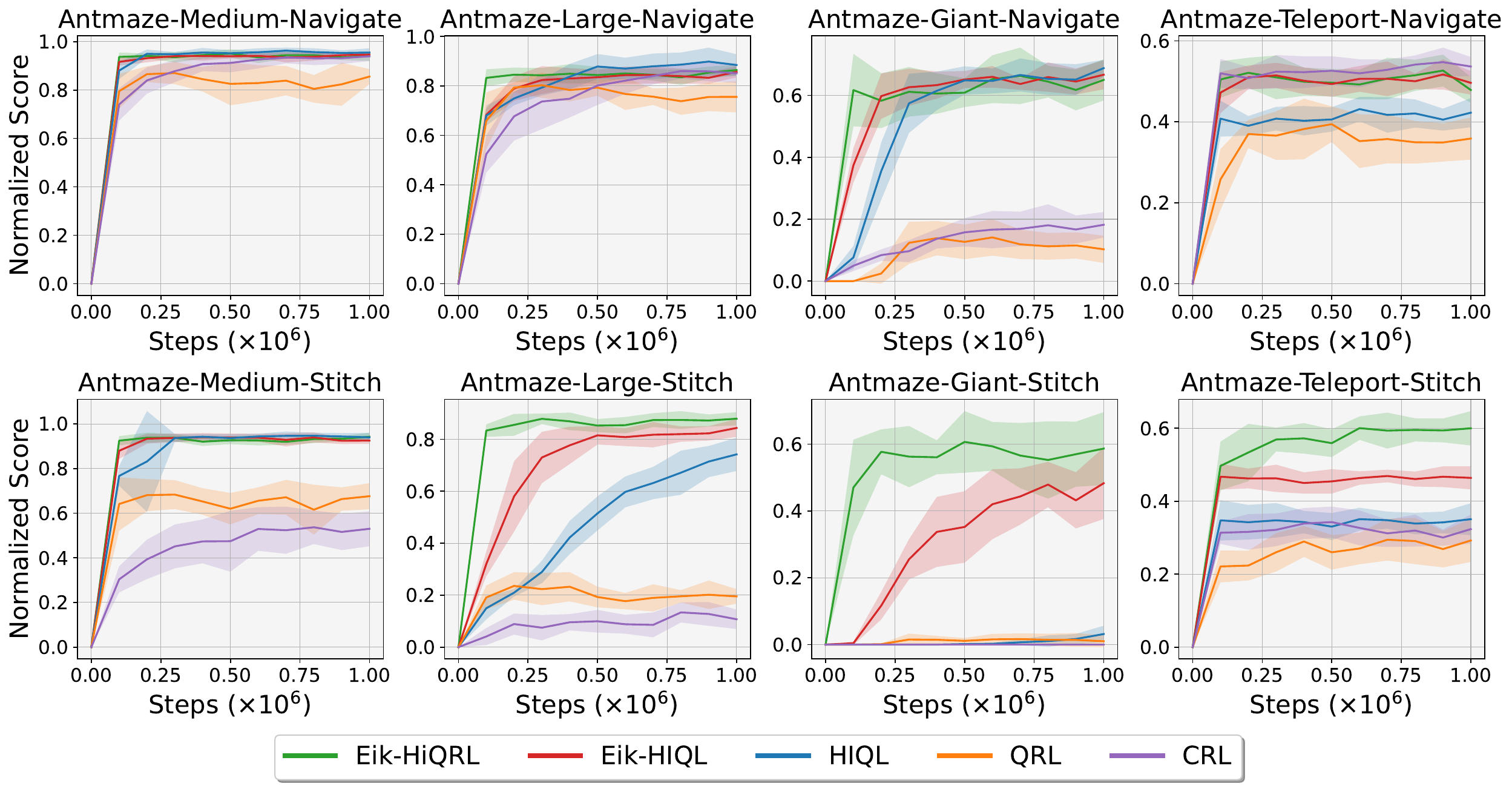}
    \caption{Learning curves for the \texttt{antmaze} experiments in Table~\ref{tab:full_offline_GCRL}. Plots show the average success percentage per evaluation across seeds as a function of training steps.}
    \label{fig_app:antmaze}
\end{figure}

\begin{figure}
    \centering
    \includegraphics[width=0.8\linewidth]{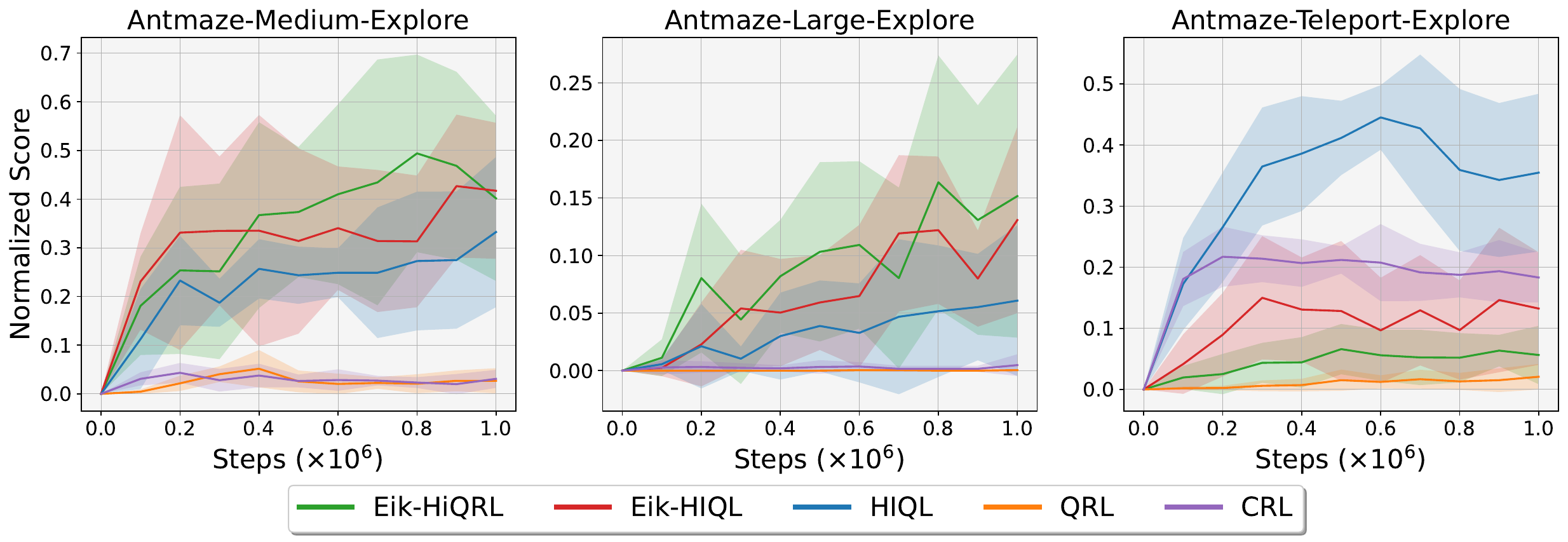}
    \caption{Learning curves for the \texttt{antmaze-explore} experiments in Table~\ref{tab:full_offline_GCRL}. Plots show the average success percentage per evaluation across seeds as a function of training steps.}
    \label{fig_app:antmaze_explore}
\end{figure}

\subsection{Comparison Eik-HiQRL vs HiQRL}
\label{sec_app:ablation_Eik_HiQRL}

This section complements the ablation in Section \ref{sec_app:additional_ablation} and examines the impact of Eikonal constraints within our hierarchical framework, thereby closing the loop of our analysis. To this end, we compare Eik-HiQRL and HiQRL: the former employs the Eikonal-constrained optimization pipeline in Eq. \ref{eq:PI-QRL_lagrangian_v1} for high-level value function learning, while the latter relies on the standard QRL formulation in Eq. \ref{eq:QRL_lagrangian}. Results are summarized in Table \ref{tab:second_ablation}, with learning curves reported in Fig. \ref{fig_app:antmaze_second_ablation}, \ref{fig_app:antmaze_explore_second_ablation}, \ref{fig_app:humanoidmaze_second_ablation}, and \ref{fig_app:antsoccer_second_ablation}. The conclusions align with those of the previous sections, once again showing substantial gains in large environments and in the \texttt{stitch} setting, underscoring the benefits of our PDE-based constraints.

\begin{table}
    \centering
    \tiny
    \caption{Second ablation. Training and evaluation procedures follow Table~\ref{tab:full_offline_GCRL}. Results within $95\%$ of the best value are written in \textbf{bold}.}
    \label{tab:second_ablation}
    \begin{tabular}{l l l | l l}
        \toprule
        \textbf{Environment} & \textbf{Dataset} & \textbf{Dim} & \textbf{Eik-HiQRL} & \textbf{HiQRL} \\
        \cmidrule(lr){1-5}
        \multirow{11}{*}{\texttt{antmaze}} & \multirow{4}{*}{\texttt{navigate}} & \texttt{medium} & \bm{$95 \pm 2$} & \bm{$93 \pm 2$} \\
        & & \texttt{large} & \bm{$86 \pm 2$} & \bm{$87 \pm 3$} \\
        & & \texttt{giant} & \bm{$66 \pm 1$} & \bm{$67 \pm 7$} \\
        & & \texttt{teleport} & \bm{$53 \pm 4$} & $42 \pm 4$ \\
        \cmidrule(lr){2-5}
        & \multirow{4}{*}{\texttt{stitch}} & \texttt{medium} & \bm{$94 \pm 2$} & \bm{$94 \pm 2$} \\
        & & \texttt{large} & \bm{$88 \pm 3$} & \bm{$88 \pm 3$} \\
        & & \texttt{giant} & \bm{$61 \pm 9$} & $52 \pm 9$ \\
        & & \texttt{teleport} & \bm{$60 \pm 3$} & $45 \pm 4$ \\
        \cmidrule(lr){2-5}
        & \multirow{3}{*}{\texttt{explore}} & \texttt{medium} & \bm{$49 \pm 20$} & $40 \pm 19$ \\
        & & \texttt{large} & \bm{$16 \pm 11$} & $13 \pm 13$ \\
        & & \texttt{teleport} & $7 \pm 4$ & \bm{$17 \pm 11$} \\
        \cmidrule(lr){1-5}
        \multirow{6}{*}{\texttt{humanoidmaze}} & \multirow{3}{*}{\texttt{navigate}} & \texttt{medium} & \bm{$91 \pm 2$} & \bm{$92 \pm 2$} \\
        & & \texttt{large} & \bm{$74 \pm 5$} & \bm{$74 \pm 3$} \\
        & & \texttt{giant} & \bm{$83 \pm  6$} & $70 \pm 9$ \\
        \cmidrule(lr){2-5}
        & \multirow{3}{*}{\texttt{stitch}} & \texttt{medium} & \bm{$85 \pm 3$} & \bm{$87 \pm 2$} \\
        & & \texttt{large} & \bm{$63 \pm 6$} & \bm{$67 \pm 7$} \\
        & & \texttt{giant} & \bm{$69 \pm 5$} & $40 \pm 12$ \\
        \cmidrule(lr){1-5}
        \multirow{4}{*}{\texttt{antsoccer}} & \multirow{2}{*}{\texttt{navigate}} & \texttt{arena} & \bm{$61 \pm 5$}  & \bm{$65 \pm 3$} \\
        & & \texttt{medium} & \bm{$13 \pm 1$} & \bm{$14 \pm 3$} \\
        \cmidrule(lr){2-5}
        & \multirow{2}{*}{\texttt{stitch}} & \texttt{arena} & \bm{$32 \pm 4$}  & \bm{$30 \pm 4$} \\
        & & \texttt{medium} & \bm{$7 \pm 2$} & \bm{$8 \pm 2$} \\
        \bottomrule
    \end{tabular}
\end{table}

\begin{figure}
    \centering
    \includegraphics[width=0.95\linewidth]{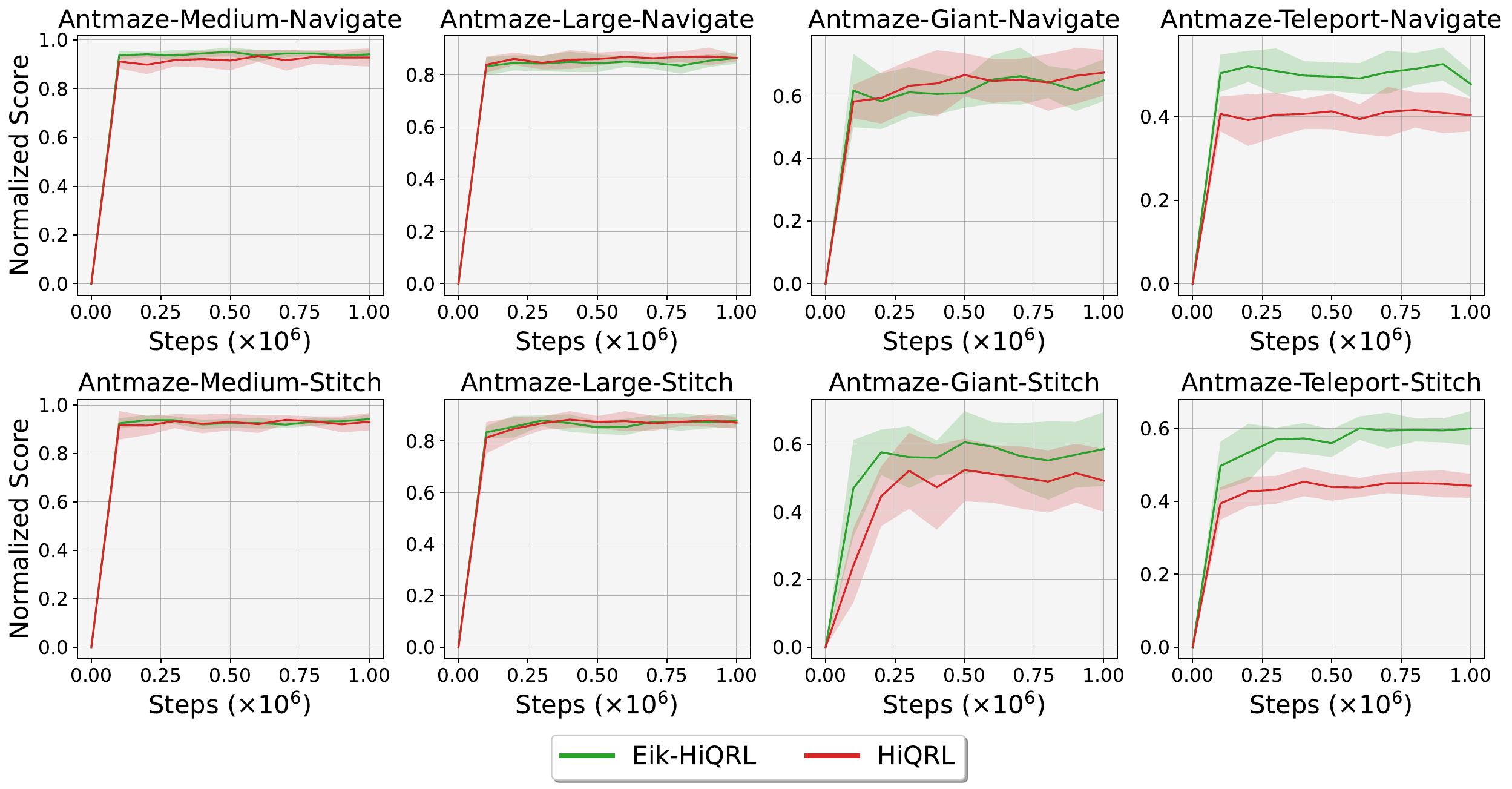}
    \caption{Learning curves for the \texttt{antmaze} experiments in Table~\ref{tab:second_ablation}. Plots show the average success percentage per evaluation across seeds as a function of training steps.}
    \label{fig_app:antmaze_second_ablation}
\end{figure}

\begin{figure}
    \centering
    \includegraphics[width=0.8\linewidth]{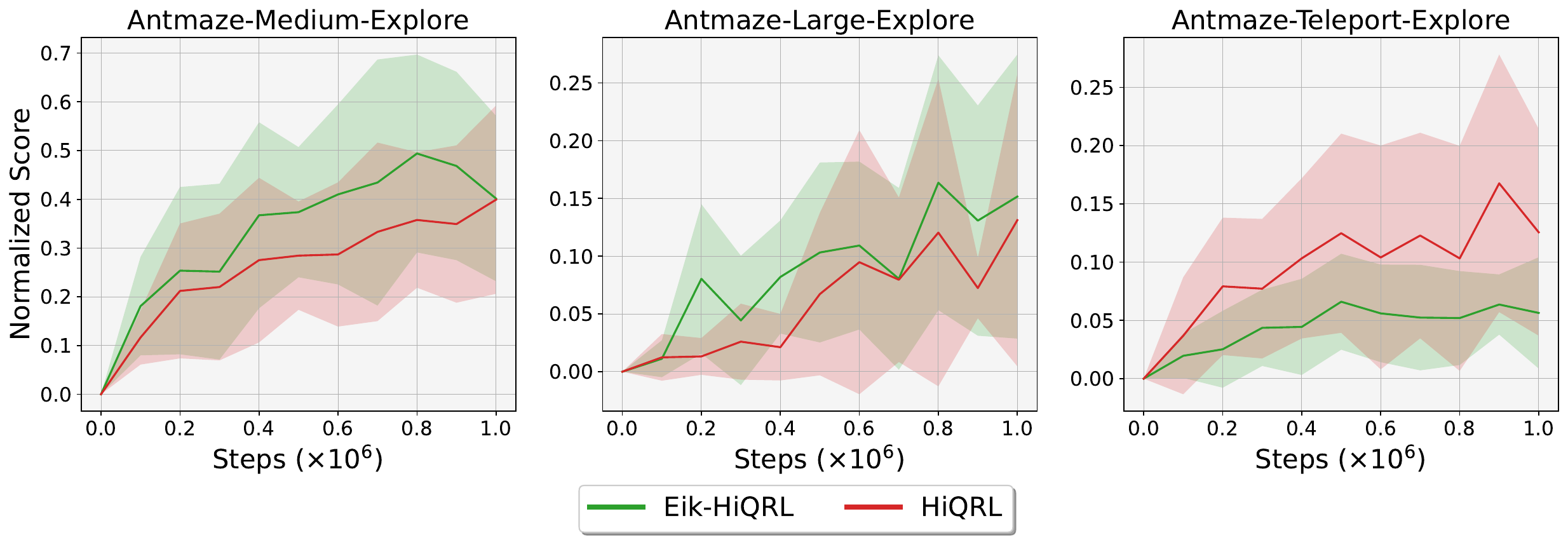}
    \caption{Learning curves for the \texttt{antmaze-explore} experiments in Table~\ref{tab:second_ablation}. Plots show the average success percentage per evaluation across seeds as a function of training steps.}
    \label{fig_app:antmaze_explore_second_ablation}
\end{figure}

\begin{figure}
    \centering
    \includegraphics[width=\linewidth]{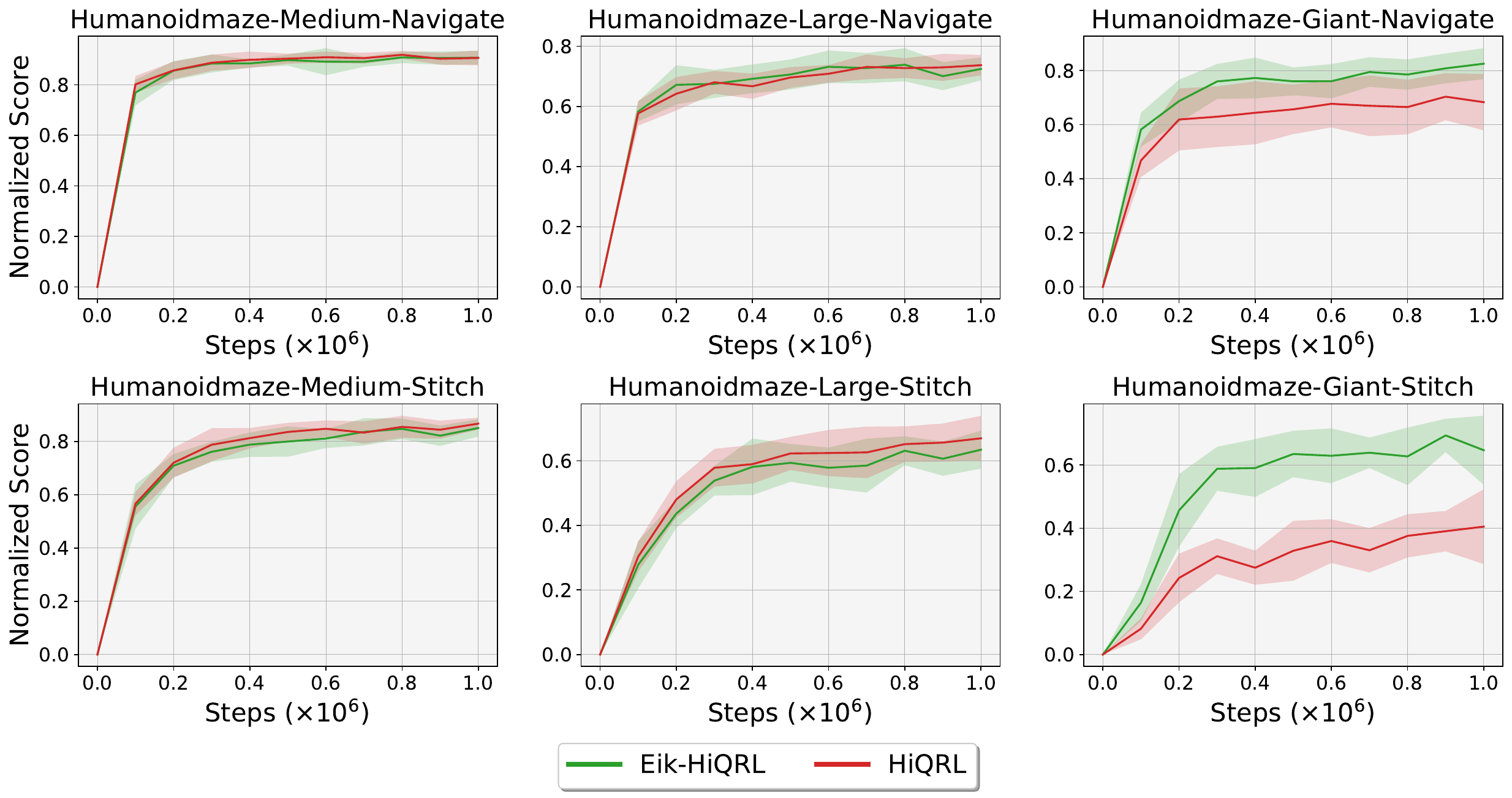}
    \caption{Learning curves for the \texttt{humanoidmaze} experiments in Table~\ref{tab:second_ablation}. Plots show the average success percentage per evaluation across seeds as a function of training steps.}
    \label{fig_app:humanoidmaze_second_ablation}
\end{figure}

\begin{figure}
    \centering
    \includegraphics[width=\linewidth]{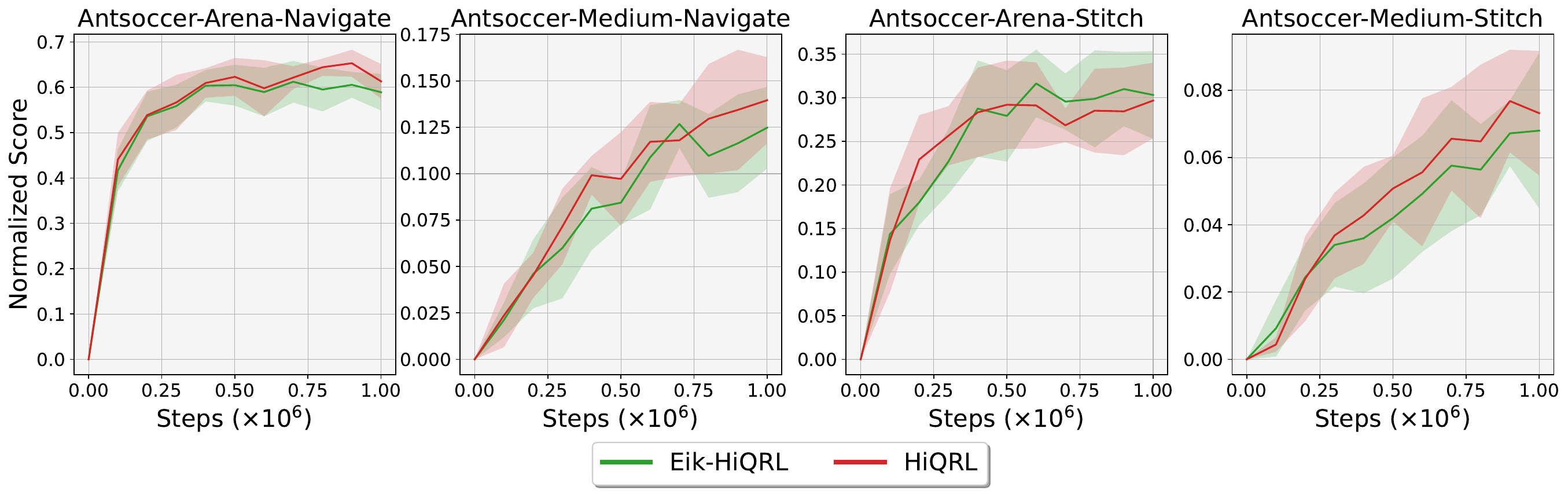}
    \caption{Learning curves for the \texttt{antsoccer} experiments in Table~\ref{tab:second_ablation}. Plots show the average success percentage per evaluation across seeds as a function of training steps.}
    \label{fig_app:antsoccer_second_ablation}
\end{figure}

\newpage
\subsection{Trajectory-free Experiments}
\label{sec_app:traj_free_experiments}

\begin{figure}[ht!]
    \centering
    \begin{subfigure}[t]{0.45\linewidth}
        \centering
        \includegraphics[width=\linewidth]{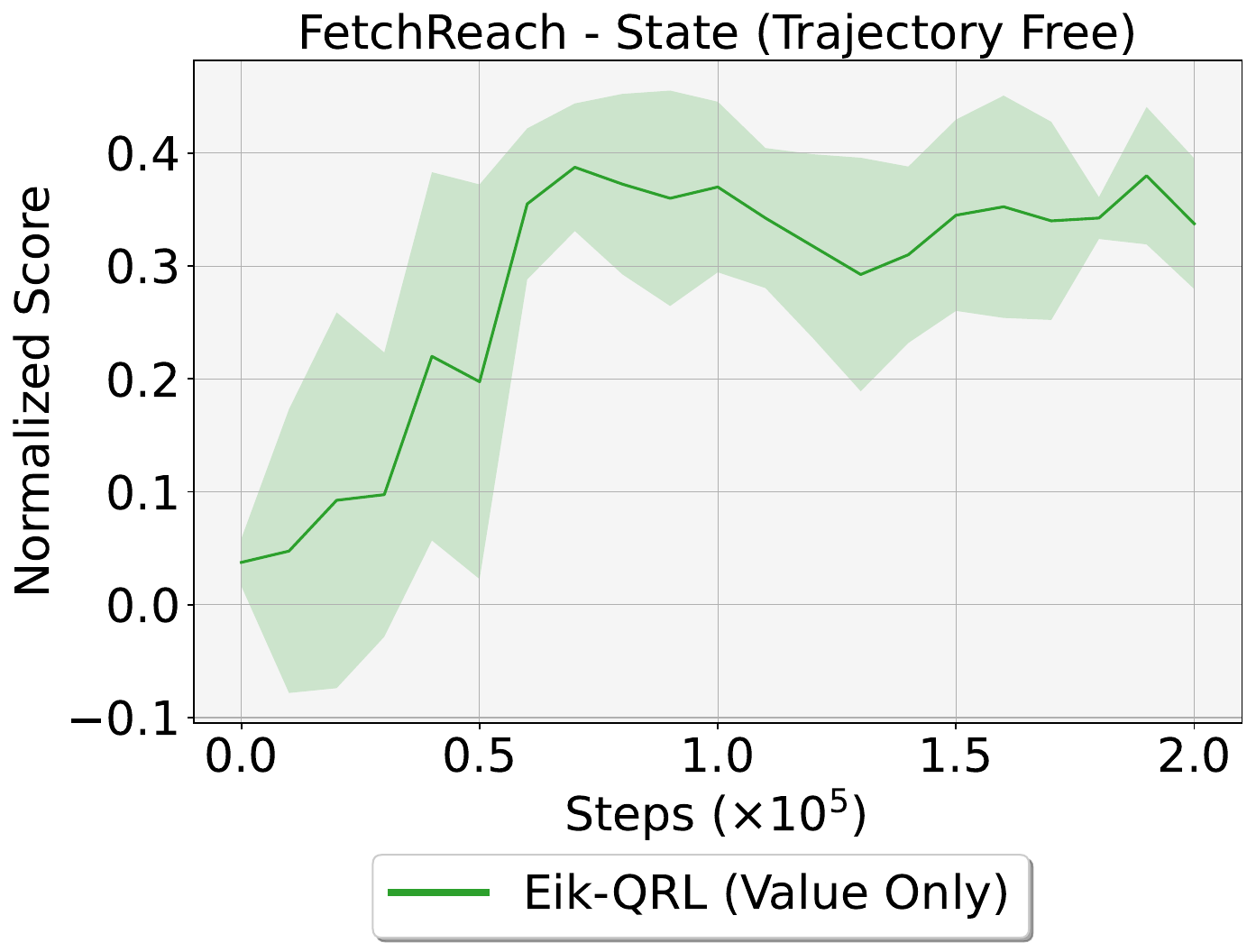}
        \caption{Learning curve for \textbf{Eik-QRL} in a trajectory-free setting.}
        \label{fig_app:learning_curves_traj_free}
    \end{subfigure}
    ~
    \begin{subfigure}[t]{0.45\linewidth}
        \centering
        \includegraphics[width=0.75\linewidth]{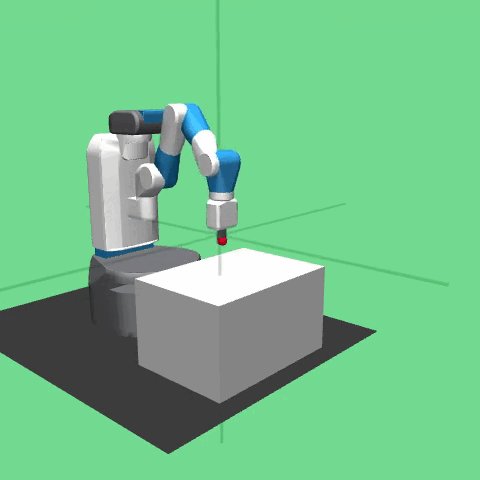}
        \caption{\texttt{FetchReach} environment in \cite{1802.09464}.}
        \label{fig_app:fetch_reach}
    \end{subfigure}
    \caption{Summary of the results in the fully trajectory-free setting. We train a goal-conditioned value function \(V_{\boldsymbol{\theta}}(s,g) = -d_{\boldsymbol{\theta}}(s,g)\) using Eik-QRL as defined in~Eq.~(\ref{eq:PI-QRL_lagrangian_v1}). Training relies solely on random samples of end-effector \((x,y,z)\)-coordinates from the robot workspace. The end-effector is then controlled via inverse kinematics, with its position updated by following the negative value gradient, i.e., \(s_{t+1} = s_t - \eta \nabla_s V_{\boldsymbol{\theta}}(s_t, g)\) for a suitable stepsize \(\eta > 0\). The corresponding learning curve is shown in Fig.~\ref{fig_app:learning_curves_traj_free}. We report mean and standard deviation over \(8\) random seeds.}
    \label{fig:trajectory-free}
\end{figure}

To investigate the trajectory-free regime, we evaluate Eik-QRL on a robotic goal-reaching task based on the \texttt{FetchReach} environment~\citep{1802.09464}. The setup, illustrated in Fig.~\ref{fig_app:fetch_reach}, uses a 7-DoF Fetch mobile manipulator equipped with a two-finger parallel gripper; the underlying environment controls the robot in Cartesian space by specifying small displacements of the end-effector, while inverse kinematics are handled internally by the simulator. To make this setting fully trajectory-free, we restrict the state space to the end-effector Cartesian coordinates \((x,y,z)\), and randomly sample \(100\text{k}\) end-effector positions from the robot workspace, aggregating them into a replay buffer without storing any transition trajectories.

Value learning is performed with Eik-QRL as in Eq.~(\ref{eq:PI-QRL_lagrangian_v1}), where minibatches of random coordinates are independently sampled as state \(s\) and goal \(g\). Every \(10\text{k}\) training steps, we evaluate the resulting controller over \(50\) rollouts. We do not train a separate policy network in this experiment; instead, control is obtained by directly updating the end-effector position using the gradient of the learned goal-conditioned value function,
\[
s_{t+1} = s_t - \eta \nabla_s V_{\boldsymbol{\theta}}(s_t, g),
\]
for a suitable stepsize \(\eta > 0\) (we use \(\eta = 1\) in this experiment). The robot arm is then commanded to the updated end-effector position via inverse kinematics.

Results for this experiment are reported in Fig.~\ref{fig_app:learning_curves_traj_free} and show that, despite the very weak form of supervision, the method reaches an average success rate of about \(40\%\) after \(100\text{k}\) training steps. We view this as an initial proof-of-concept demonstration of the potential of Eik-QRL in fully trajectory-free settings with smooth, approximately isotropic dynamics. Note that we do not perform any architecture or hyperparameter tuning in these experiments, but simply reuse the same configuration as in the previous tasks (see Table~\ref{tab_app:Hyper}). Future work will focus on leveraging the learned goal-conditioned value function as initialization for subsequent value-learning or control schemes, effectively bootstrapping learning in more complex or data-scarce regimes.

\subsection{Online RL Experiments}
\label{sec_app:onlineRL}

In this section, we evaluate QRL, Eik-QRL, and Eik-HiQRL in an online GCRL setting on two continuous-control benchmarks: \texttt{Pointmaze} and \texttt{FetchReach}. For both domains, we use identical network architectures, hyperparameters, and training budgets across methods, and we report mean and standard deviation over $3$ random seeds.

\subsubsection{\texttt{Pointmaze}}
\label{sec_app:online_pointmaze_env}

\begin{figure}[ht!]
    \centering
    \includegraphics[width=\linewidth]{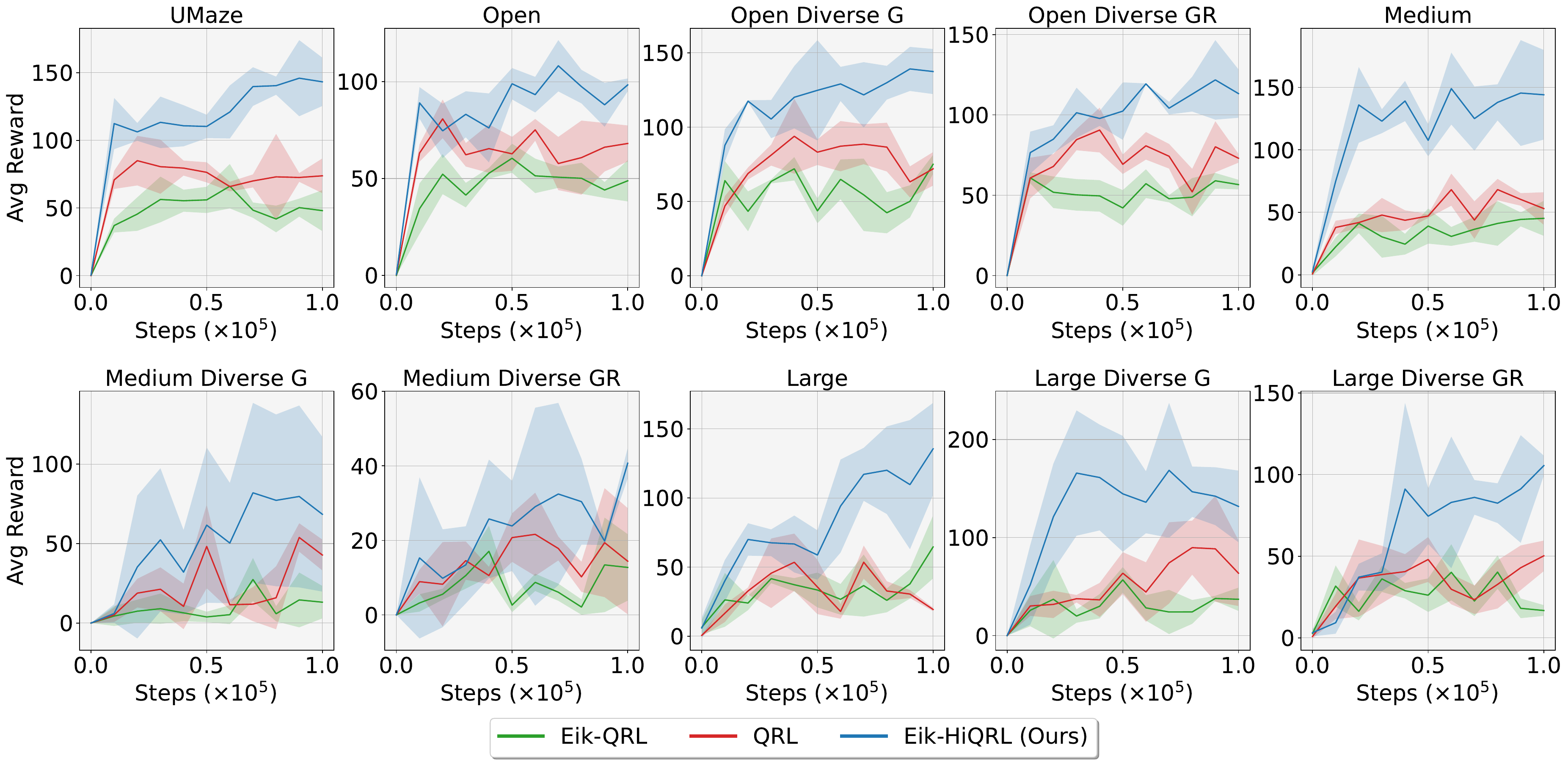}
    \caption{Online GCRL results on \texttt{Pointmaze}. We evaluate each algorithm every $10$k steps over $50$ evaluation episodes and report mean and standard deviation over $3$ random seeds.}
    \label{fig_app:online_pointmaze}
\end{figure}

This \texttt{Pointmaze} environment consists of a 2-DoF point mass (a ``ball'') that is force-actuated in the Cartesian directions $x$ and $y$, and must reach a target goal within a closed maze \citep{fu2020d4rl}. The environment can be instantiated with different maze layouts and increasing levels of difficulty, adapted from \cite{fu2020d4rl}. We consider multiple maze sizes and configurations, including single-goal and multi-goal variants, where the latter are referred to as \emph{diverse} mazes. A further group of variations instantiates another class of diverse mazes in which both goal locations and initial agent states are randomly sampled at each reset. These environments share the same base identifiers as their default counterparts, with the suffix \texttt{Diverse\_GR} (GR stands for \emph{Goal and Reset}) appended.

Because the agent is controlled through forces, the underlying dynamics are closer to a double-integrator model in free space, but collisions with the maze walls and contact effects make the effective dynamics highly anisotropic and introduce strong non-smoothness in the value function. As a result, Assumption~\ref{assumption:dynamics_Lip} and the induced Lipschitz regularity in Assumption~\ref{assumption:V_reg} are not strictly satisfied. This setting therefore provides a stress test for Eik-QRL outside its ideal isotropic regime.

\begin{wrapfigure}{r}{0.5\textwidth}
    \centering
    \vspace{-12pt}
    \begin{subfigure}[c]{\linewidth}
        \includegraphics[width=\linewidth]{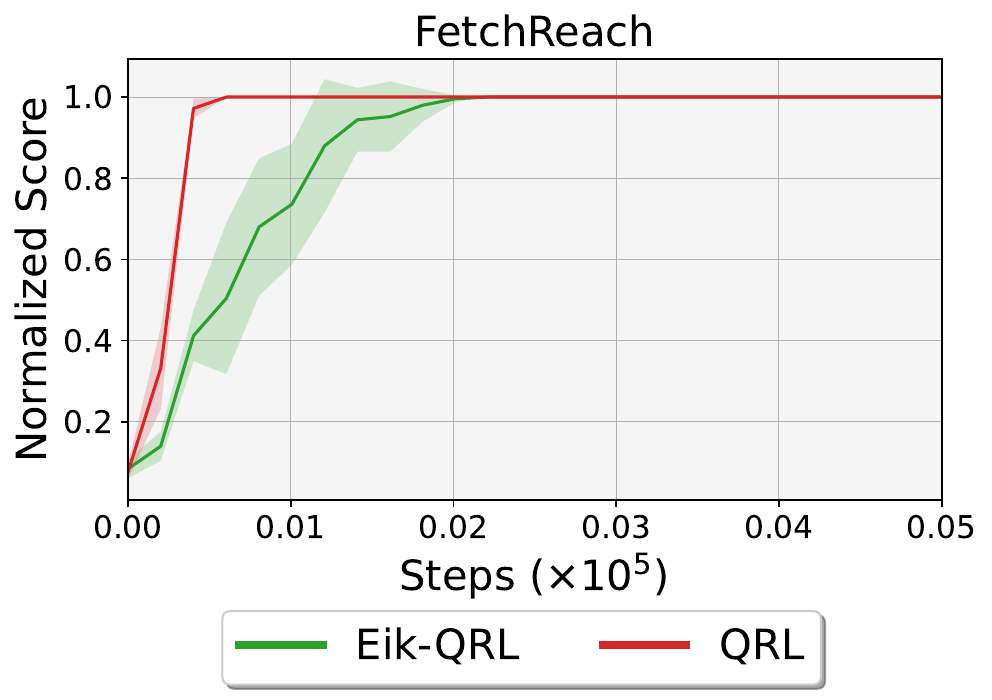}
        \label{fig:fetch_online}
    \end{subfigure}
    \vspace{-12pt}
    \caption{Learning curves for the online RL experiments on \texttt{FetchReach}. We evaluate each algorithm every $5$k steps over $50$ evaluation episodes and report mean and standard deviation over $5$ random seeds.}
    \label{fig_app:FetchReachOnline}
    \vspace{-10pt}
\end{wrapfigure}

We compare QRL, Eik-QRL, and Eik-HiQRL on these online tasks in Fig.~\ref{fig_app:online_pointmaze}. All methods learn meaningful goal-reaching policies across the different maze configurations. The results highlight that Eik-QRL remains competitive even when its modeling assumptions are violated, while Eik-HiQRL is designed to better cope with such settings and indeed outperforms both QRL and Eik-QRL.

\subsubsection{\texttt{FetchReach}}
\label{sec_app:online_fetchreach}

We further evaluate QRL and Eik-QRL in an online GCRL setting on the \texttt{FetchReach} environment introduced in \citet{1802.09464}. The task is to move the end effector of a 7-DoF Fetch manipulator, equipped with a two-finger parallel gripper, to a randomly sampled target position within the robot workspace. The robot is controlled in Cartesian space: each action specifies a small displacement $(\Delta x, \Delta y, \Delta z)$ of the end effector together with a scalar command that opens or closes the gripper, while inverse kinematics are handled internally by the simulator. The resulting continuous dynamics are smooth and approximately isotropic in the control space, so both Assumption~\ref{assumption:dynamics_Lip} (Lipschitz dynamics) and, consequently, Assumption~\ref{assumption:V_reg} (Lipschitz goal-conditioned value function) are expected to hold reasonably well in this setting.

In Fig.~\ref{fig_app:FetchReachOnline}, we report online training curves comparing QRL and Eik-QRL under identical architectures, hyperparameters, and training budgets.\footnote{We follow the original QRL implementation provided at \url{https://github.com/quasimetric-learning/quasimetric-rl/}.} Both methods rapidly achieve high success rates, and their convergence characteristics are very similar: the learning curves closely overlap, with only minor differences in transient behavior. These experiments show that, under modeling conditions that match its assumptions, Eik-QRL can be successfully extended to an online robotic control task with performance comparable to QRL.

\end{document}